\documentclass[11pt,a4paper]{article}
\bibliographystyle{siam}
\usepackage{a4wide}

\usepackage{graphicx} 
\usepackage{epsfig} 
\usepackage{subfigure} 


\usepackage{algorithm}
\usepackage{algorithmic}

\usepackage{comment}

\usepackage[hyphens]{url}

\usepackage{graphicx} 
\usepackage{epsfig}
\usepackage{subfigure}

\usepackage{amssymb}
\usepackage{amsmath}
\usepackage{amsthm}
\newtheorem{assumption}{Assumption}[section]
\newtheorem{proposition}{Proposition}[section]
\newtheorem{theorem}{Theorem}[section]
\newtheorem{corollary}{Corollary}[section]
\newtheorem{lemma}{Lemma}[section]
\newtheorem{definition}{Definition}[section]

\usepackage{cite}

\def\vec#1{\mbox{\boldmath $#1$}}
\def\mat#1{\mbox{\bf #1}}

\newcommand{\gradf}{{\rm grad} f}





\newcommand{\changehss}[1]{#1}

\title{Riemannian stochastic variance reduced gradient with retraction and vector transport}
\date{\today}


\author{Hiroyuki Sato\thanks{Department of Applied Mathematics and Physics, Kyoto University, Kyoto, Japan ({\tt hsato@i.kyoto-u.ac.jp}).} \and Hiroyuki Kasai\thanks{Graduate School of Informatics and Engineering, The University of Electro-Communications, Tokyo, Japan ({\tt kasai@is.uec.ac.jp}).} \and Bamdev Mishra\thanks{\changehss{Microsoft, Hyderabad, India} ({\tt bamdevm@microsoft.com}.)}}

\begin{document}

\maketitle

\begin{abstract}
In recent years, stochastic variance reduction algorithms have attracted considerable attention for minimizing the average of a large but finite number of loss functions.
This paper proposes a novel Riemannian extension of the Euclidean stochastic variance reduced gradient (R-SVRG) algorithm to a manifold search space. The key challenges of averaging, adding, and subtracting multiple gradients are addressed with retraction and vector transport.
For the proposed algorithm, we present a global convergence analysis with a decaying step size as well as a local convergence rate analysis with a fixed step size under some natural assumptions.
In addition, the proposed algorithm is applied to 
the computation problem of the Riemannian centroid on the symmetric positive definite (SPD) manifold as well as the principal component analysis and low-rank matrix completion problems on the Grassmann manifold. The results show that the proposed algorithm outperforms the standard Riemannian stochastic gradient descent algorithm in each case\footnote{This paper extends the earlier work \cite{Kasai_arXiv_2016} to include more general results.}.
\end{abstract}

{\scriptsize Published in SIAM Journal on Optimization: \url{https://epubs.siam.org/doi/abs/10.1137/17M1116787}.}\\

\noindent {\bf Keywords:} 
Riemannian optimization, stochastic variance reduced gradient, retraction, vector transport, Riemannian centroid, principal component analysis, matrix completion

\section{Introduction}
\label{Sec:intro}

A general loss minimization problem is defined as $\min_{w} f(w)$, where $f(w):= \frac{1}{N} \sum_{n=1}^N f_n(w)$, $w$ is the model variable, $N$ is the number of samples, and $f_n(w)$ is the loss incurred on the $n$-th sample. The {\it full gradient descent} (GD) algorithm requires the evaluation of $N$ derivatives, i.e., $\sum_{n=1}^N \nabla f_n(w)$, per iteration, which is computationally expensive when $N$ is extremely large. A well-known alternative uses only one derivative $\nabla f_n(w)$ per iteration for the $n$-th sample, and it forms the basis of the {\it stochastic gradient descent} (SGD) algorithm. When a relatively large step size is used in SGD, the training loss first decreases rapidly but results in large fluctuations around the solution. Conversely, when a small step size is used,
a large number of iterations are required for SGD to converge.
To circumvent this problem, SGD starts with a relatively large step size and gradually decreases it.

Recently, {\it variance reduction} techniques have been proposed to accelerate SGD convergence \cite{Defazio_NIPS_2014_s,Johnson_NIPS_2013_s, Mairal_SIAMJOPT_2015,Roux_NIPS_2012_s,Shalev_arXiv_2012_s,Shalev_JMLR_2013_s,Zhang_SIAMJO_2014_s}.
The stochastic variance reduced gradient (SVRG) algorithm is a popular technique with excellent convergence properties~\cite{Johnson_NIPS_2013_s}.
For smooth and strongly convex functions, SVRG has convergence rates similar to those of the stochastic dual coordinate ascent algorithm~\cite{Shalev_JMLR_2013_s} and the stochastic average gradient (SAG) algorithm~\cite{Roux_NIPS_2012_s}.
Garber and Hazan \cite{Garber_arXiv_2015_s} analyzed the convergence rate of SVRG when $f$ is a convex function that is the sum of nonconvex (but smooth) terms, and they applied their result to the principal component analysis (PCA) problem. Shalev--Shwartz  \cite{Shalev_arXiv_2015_s} also obtained similar results. Allen--Zhu and Yuan \cite{allen2016improved} further studied the same case with better convergence rates. Shamir \cite{Shamir_arXiv_2015_s} specifically studied the convergence properties of the variance reduction PCA algorithm.
More recently, Allen-Zhu and Hazan \cite{allen2016variance} and Reddi et al.~\cite{reddi2016stochastic} independently proposed variance reduction methods for faster nonconvex optimization.
However, it should be noted that all these cases assume a Euclidean search space.

In this paper, we handle problems in which the variables have a manifold structure:
\begin{equation}
\label{Prob:general}
\min_{w \in \mathcal{M}} f(w):=\frac{1}{N}\sum_{n=1}^N f_n(w),
\end{equation}
where $\mathcal{M}$ is a Riemannian manifold and $f_n, n=1,2,\dots, N$ are real-valued functions on $\mathcal{M}$.
These problems include, e.g., the low-rank matrix completion problem~\cite{Mishra_ICDC_2014_s}, the Riemannian centroid computation problem, and the PCA problem.
In all these problems, optimization on {\it Riemannian manifolds} has shown state-of-the-art performance. The Riemannian framework exploits the geometry of the search space, which is characterized by the constraints of the optimization problem.
Numerous efficient optimization algorithms have been developed~\cite{Absil_OptAlgMatManifold_2008}.
Specifically, the problem $\min_{w \in \mathcal{M}} f(w)$, where $\mathcal{M}$ is a Riemannian manifold, is solved as an \emph{unconstrained optimization problem} defined over the Riemannian manifold search space.
Furthermore, although these algorithms mainly address \textit{batch-based} approaches, Bonnabel~\cite{Bonnabel_IEEETAC_2013_s} proposed a {\it Riemannian stochastic gradient descent} (R-SGD) algorithm that extends SGD from Euclidean space to Riemannian manifolds.
Recently, more advanced stochastic optimization algorithms have also been generalized to Riemannian manifolds, including R-SQN-VR~\cite{pmlr-v84-kasai18a} and R-SRG~\cite{pmlr-v80-kasai18a}.

Building upon the work of Bonnabel \cite{Bonnabel_IEEETAC_2013_s}, we propose an extension of the stochastic variance reduced gradient algorithm to a Riemannian manifold search space (R-SVRG) and novel analyses.
This extension is nontrivial and requires particular consideration for handling the averaging, addition, and subtraction of multiple gradients at different points on the manifold $\mathcal{M}$.
Toward this end, this paper specifically leverages the notions of retraction and vector transport.
The algorithm and convergence analysis presented in this paper are generalized, which in itself is a challenging problem, in the retraction and vector transport case, as well as in the exponential mapping and parallel translation case,
allowing for extremely efficient implementation and making distinct contributions compared with an existing approach~\cite{Zhang_NIPS_2016} that relies only on the exponential mapping and parallel translation case.

It should be mentioned that a recent study~\cite{Zhang_NIPS_2016} by Zhang et al., which appeared simultaneously with our technical report \cite{Kasai_arXiv_2016}, has also proposed R-SVRG on manifolds. The main difference between our work and \cite{Zhang_NIPS_2016} is that we provide convergence analyses for the algorithm with retraction and vector transport, whereas \cite{Zhang_NIPS_2016} deals with a special case in which exponential mapping and parallel translation are used as retraction and vector transport, respectively.
There are additional differences.
Our convergence analysis handles global and local convergence analyses separately, as in the typical analyses of batch algorithms on Riemannian manifolds~\cite{Absil_OptAlgMatManifold_2008}.
Another difference is that our assumptions for the local convergence rate analysis are imposed only in a local neighborhood around a minimum, which is milder and more natural than the assumptions in \cite{Zhang_NIPS_2016}, which assumes Lipschitz smoothness in the entire space.
In other words, our global convergence analysis is not for a convergence rate or complexity, but it is an asymptotic convergence analysis.
Here, according to classical usage in nonlinear programming, we use the term global convergence for convergence to a critical point from any initial point.
On the other hand, our local convergence analysis is for a strongly convex function near the optimum, and the class of such functions includes many nonconvex functions that do not necessarily have global strong convexity.
Consequently, our analysis should be applicable to different types of manifolds.
For example, the parallel translation on the Stiefel manifold, which is an extremely important manifold in practice, is not available in a closed form.
We can use a vector transport based on the orthogonal projection to the tangent space of the Stiefel manifold as an efficient implementation.
Therefore, compared to the results of \cite{Zhang_NIPS_2016}, our convergence results with retraction and vector transport enable us to deal with a wider variety of manifolds.

We emphasize that although we derive a local convergence rate with retraction and vector transport in this paper,
it can be immediately used to derive a global iteration complexity
if we assume, e.g., strong convexity of $f$ globally on the search space.
Here, local convergence rate and global iteration complexity imply the rate at which the iterates approach a critical point in a sufficiently small neighborhood and the iteration complexity of obtaining a critical point from an arbitrary initial point, respectively.
The result thus obtained can be regarded as a generalization of the global iteration complexity with strongly convex $f$ obtained in \cite{Zhang_NIPS_2016}, where exponential mapping and parallel translation are used.

The remainder of this paper is organized as follows.
Section~\ref{Sec:GrassmannAndProblems} discusses Riemannian optimization theory, including the background on Riemannian geometry and some geometric tools used for optimization on Riemannian manifolds.
Section~\ref{Sec:R-SVRG} provides a detailed description of R-SVRG.
Sections \ref{Sec:GlobalAnalysis} and \ref{Sec:LocalAnalysis} present the global convergence analysis and local convergence rate analysis of the proposed R-SVRG, respectively.
Section~\ref{Sec:NumericalComparison} highlights the superior performance of R-SVRG through numerical comparisons with R-SGD on three problems.

The proposed R-SVRG is implemented in the MATLAB toolbox Manopt \cite{Boumal_Manopt_2014_s}.
The MATLAB code for all the proposed algorithms is available at \url{https://github.com/hiroyuki-kasai/RSOpt}.

\section{Riemannian optimization}
\label{Sec:GrassmannAndProblems}
Optimization on Riemannian manifolds, or {\it Riemannian optimization}, seeks a critical point of a given real-valued function, called the objective or cost function, defined on a smooth Riemannian manifold $\mathcal{M}$. One of the advantages of using Riemannian geometry tools is that the intrinsic properties of the manifold enable us to handle constrained optimization problems as unconstrained optimization problems.
This section introduces optimization on manifolds by summarizing \cite{Absil_OptAlgMatManifold_2008}.
Readers may refer to the various references therein as well as in \cite{Meyer_ICML_2011,Mishra_SIAMJOPT_2016} for further details.

Let $f\colon \mathcal{M} \to \mathbb{R}$ be a smooth real-valued function on manifold $\mathcal{M}$.
In optimization, we compute a minimum of $f$; typical methods for solving this minimization problem are {\it iterative algorithms} on manifold $\mathcal{M}$. In an iterative algorithm based on line search, with a given starting point $w_0 \in \mathcal{M}$, we generate a sequence $\{w_t\}_{t\geq0}$ on $\mathcal{M}$ that converges to $w^{*}$ whenever $w_0$ is in a neighborhood of $w^{*}$.
In an iterative optimization algorithm, we compute a search direction and then move in the search direction.
More specifically, an iteration on manifold $\mathcal{M}$ is performed by following geodesics emanating from $w_t$ and tangent to $\xi_{w_t}$ at $w_t$.
The notion of {\it geodesics} on Riemannian manifolds is a generalized concept of straight lines in Euclidean space. For any tangent vector $\xi \in T_w \mathcal{M}$ at $w \in \mathcal{M}$, there exists an interval $I$ about $0$ and a unique geodesic $\gamma_e(\cdot;w,\xi)\colon I \to \mathcal{M}$, such that $\gamma_e(0;w,\xi)=w$ and $\dot{\gamma_e}(0;w,\xi) = \xi$.
The {\it exponential mapping} ${\rm Exp}_{w}\colon T_{w}\mathcal{M} \to \mathcal{M}$ at $w \in \mathcal{M}$ is defined by geodesics emanating from $w$ as ${\rm Exp}_{w} \xi=\gamma_e(1;w,\xi)$ for $\xi \in T_w\mathcal{M}$.
If $\mathcal{M}$ is a complete manifold, the exponential mapping is defined for all vectors $\xi \in T_{w}\mathcal{M}$ \cite[Section~5.4]{Absil_OptAlgMatManifold_2008}.
We can thus obtain an update formula using the exponential mapping
\begin{equation*}
	w_{t+1}  =  {\rm Exp}_{w_t}\left(s_t \xi_{w_t}\right),	
\end{equation*}
where the search direction $\xi_{w_t}$ is in the tangent space $T_{w_t}\mathcal{M}$ of $\mathcal{M}$ at $w_t$, the scalar $s_t > 0$ is the step size, and ${\rm Exp}_{w_t}(\cdot)$ is the exponential mapping,
which induces a line search algorithm along the geodesics.
In addition, given two points $w$ and $z$ on $\mathcal{M}$, the \emph{logarithm mapping}, or simply {\it log mapping}, which is the inverse of the exponential mapping, maps $z$ to a vector $\xi \in T_{w}\mathcal{M}$ on the tangent space at $w$.
The log mapping satisfies ${\rm dist} (w,z) = \| {\rm Log}_{w} (z) \|_{w}$,
where ${\rm dist}\colon \mathcal{M} \times \mathcal{M} \to \mathbb{R}$ is the shortest distance between two points on $\mathcal{M}$ and $\|\cdot\|_w$ is the norm in $T_w\mathcal{M}$ defined through a Riemannian metric (see below).

The {\it steepest descent} or {\it full gradient descent} method for minimizing $f$ on $\mathcal{M}$ is an iterative algorithm obtained when $-{\rm grad}f(w_t)$ is used as the search direction $\xi_{w_t}$.
${\rm grad}f(w_t)$ is the {\it Riemannian gradient} of $f$ at $w_t$, which is computed according to the chosen metric $g$ at $w_t \in \mathcal{M}$.
Collecting each metric $g_w\colon T_w\mathcal{M} \times T_w\mathcal{M} \to \mathbb{R}$ over $w \in \mathcal{M}$ gives a family called a {\it Riemannian metric} on $\mathcal{M}$. $g_w(\xi_w, \zeta_w)$ is an inner product of elements $\xi_w$ and $\zeta_w$ in the tangent space $T_{w}\mathcal{M}$ at $w$.
Here, we use the notation $\langle\cdot, \cdot\rangle_w$ instead of $g(\cdot, \cdot)_w$ for simplicity.
The gradient ${\rm grad} f(w)$ is defined as the unique element of $T_{w}\mathcal{M}$ that satisfies
\begin{eqnarray*}
	 {\rm D} f(w)[\xi_w]& = & \left\langle{\rm grad} f(w), \xi_w\right\rangle_w, \ \ \forall \xi_w \in T_w \mathcal{M},
\end{eqnarray*}
where ${\rm D} f(w) \colon T_w \mathcal{M} \to \mathbb R$ is the {\it derivative} of $f$ at $w$.

In searching for the next point along a geodesic, we need to compute tangent vectors obtained by the exponential mapping, which are expensive to compute in general.
There are some Riemannian manifolds for which a closed form for the exponential mapping is not available.
Alternatively, we can use curves other than geodesics as long as a starting point and its tangent vector at the initial time are the same as those of the geodesics. A more general update formula is then written as
\begin{eqnarray*}
	w_{t+1} & = & R_{w_t}\left(s_t \xi_{w_t}\right),
\end{eqnarray*}
where $R_{w_t}$ is a {\it retraction}, which is any map $R_{w}\colon T_{w}\mathcal{M} \to \mathcal{M}$ that locally approximates the exponential mapping, up to the first order, on the manifold.
The definition of retraction is as follows \cite[Definition~4.1.1]{Absil_OptAlgMatManifold_2008}:\
\begin{definition}
$R\colon T\mathcal{M} \to \mathcal{M}$ is called a retraction on $\mathcal{M}$ if the restriction $R_w\colon T_w \mathcal{M} \to \mathcal{M}$ to $T_w \mathcal{M}$ for any $w \in \mathcal{M}$ satisfies all of the following:
\begin{enumerate}
\item
$R_w(0_w) = w$, where $0_w$ is the zero vector in $T_w \mathcal{M}$;
\item
${\rm D}R_w(0_w)[\xi] = \xi$ for any $\xi \in T_w \mathcal{M}$.
\end{enumerate}
\end{definition}
Retractions include the exponential mapping as a special case.
An advantage of using retractions is that the computational cost can be reduced compared to exponential mapping.
It is worth noting that the convergence properties for the exponential mapping usually hold for retractions as well.

In the R-SVRG proposed in Section \ref{Sec:R-SVRG},
we need to add tangent vectors that are in different tangent spaces, say $\tilde{w}$ and $w$ on $\mathcal{M}$.
A mathematically natural way to do so is to use the parallel translation operator.
Parallel translation $P_{\gamma}$ transports a vector field $\xi$ on the geodesic curve $\gamma$ that satisfies $P_{\gamma}^{a \leftarrow a} (\xi(a)) = \xi(a)$ and $\frac{{\rm D}}{{\rm d}t}(P_{\gamma}^{t \leftarrow a} \xi(a))=0$~\cite[Section 5.4]{Absil_OptAlgMatManifold_2008}, where $P_{\gamma}^{b \leftarrow a}$ is the parallel translation operator sending $\xi(a)$ to $\xi(b)$.
However, parallel translation is sometimes computationally expensive, and no explicit formula is available for some manifolds, such as the Stiefel manifold.
A {\it vector transport} on $\mathcal{M}$, which is a map $\mathcal{T}\colon T\mathcal{M} \oplus T\mathcal{M} \to T \mathcal{M}$, is used as an alternative.
The definition of vector transport is as follows \cite[Definition~8.1.1]{Absil_OptAlgMatManifold_2008}:
\begin{definition}
$\mathcal{T}\colon T\mathcal{M} \oplus T\mathcal{M} \to T\mathcal{M}$ is called a vector transport on $\mathcal{M}$ if it satisfies all of the following:
\begin{enumerate}
\item
$\mathcal{T}$ has an associated retraction $R$, i.e., for $w\in \mathcal{M}$ and $\xi, \eta \in T_w \mathcal{M}$, $\mathcal{T}_{\eta}(\xi)$ is a tangent vector at $R_w(\xi)$;
\item
$\mathcal{T}_{0_w}(\xi) = \xi$, where $\xi \in T_w \mathcal{M}$ and $w \in \mathcal{M}$;
\item
$\mathcal{T}_{\eta}(a\xi + b \zeta) = a\mathcal{T}_{\eta}(\xi) + b\mathcal{T}_{\eta}(\zeta)$, where $a, b \in \mathbb R$, $\eta, \xi, \zeta \in T_w \mathcal{M}$, and $w \in \mathcal{M}$.
\end{enumerate}
\end{definition}
With a vector transport $\mathcal{T}$, $\mathcal{T}_{\eta}(\xi)$ can be regarded as a transported vector of $\xi$ along $\eta$.
Parallel translation is an example of vector transport.
In the following, we use the notations $P_{\eta}$ and $P_{\gamma}^{z \leftarrow w}$ interchangeably, where $\gamma$ is a curve connecting $w$ and $z$ on $\mathcal{M}$ defined by retraction $R$ as $\gamma(\tau) := R_w(\tau\eta)$ with $z = R_w(\eta)$.
We also omit the subscript $\gamma$ when there is no confusion about the curve along which we transport a vector.

\section{Riemannian stochastic variance reduced gradient algorithm}
\label{Sec:R-SVRG}

After a brief explanation of the variance reduced gradient variants in Euclidean space, we describe the proposed Riemannian stochastic variance reduced gradient algorithm on Riemannian manifolds.

\subsection{Variance reduced gradient variants in Euclidean space}
The SGD update in Euclidean space is $w_{t+1}  =  w_{t}-\alpha v_t$, where $v_t$ is a randomly selected vector called the {\it stochastic gradient} and $\alpha$ is the step size. SGD assumes an {\it unbiased estimator} of the full gradient as $\mathbb{E}_n[\nabla f_n(w_t)] = \nabla f (w_t)$. Many recent variants of the variance reduced gradient of SGD attempt to reduce its variance $\mathbb{E}[\| v_t - \nabla f(w_t)\|^2]$ as $t$ increases to achieve better convergence \cite{Defazio_NIPS_2014_s,Johnson_NIPS_2013_s, Mairal_SIAMJOPT_2015,Roux_NIPS_2012_s,Shalev_arXiv_2012_s,Shalev_JMLR_2013_s,Zhang_SIAMJO_2014_s}. SVRG, proposed in \cite{Johnson_NIPS_2013_s}, introduces an explicit variance reduction strategy with double loops, where the $s$-th outer loop, called the $s$-th {\it epoch}, has $m_s$ inner iterations. SVRG first keeps $\tilde{w}^{s-1}=w_{m_{s-1}}^{s-1}$ or $\tilde{w}^{s-1}=w_{t}^{s-1}$ for randomly chosen $t\in\{1,2,\ldots,m_{s-1}\}$ at the end of the $(s\!\!-\!\!1)$-th epoch and also sets the initial value of the $s$-th epoch to $w_0^s=\tilde{w}^{s-1}$. It then computes a full gradient $\nabla f(\tilde{w}^{s-1})$. Subsequently, denoting the selected random index $i \in \{1, 2,\ldots, N\}$ by $i_t^s$, SVRG randomly picks the $i_t^s$-th sample for each $t \geq 1$ at $s \geq 1$ and computes the {\it modified stochastic gradient} $v_t^s$ as
\begin{eqnarray}
\label{Eq:E-SVRG}
v_t^s & = & \nabla f_{i_t^s} \left(w_{t-1}^{s}\right) -  \nabla f_{i_t^s} \left(\tilde{w}^{s-1}\right)  + \nabla f\left(\tilde{w}^{s-1}\right).
\end{eqnarray}
It should be noted that SVRG can be regarded as a special case of S2GD (semi-stochastic gradient descent), which differs in terms of the number of inner loop iterations chosen \cite{Konecny_arXiv_2013}.

\subsection{Proposed Riemannian extension of SVRG (R-SVRG)}
We propose a Riemannian extension of SVRG on a Riemannian manifold $\mathcal{M}$, called R-SVRG. Here, we denote the Riemannian stochastic gradient for the $i_t^s$-th sample as $\gradf_{i_t^s}$ and the {\it modified Riemannian stochastic gradient} as $\xi_t^s$ instead of $v_t^s$, in order to indicate the differences from the Euclidean case.

R-SVRG reduces the variance of the Riemannian stochastic gradient analogously to the SVRG algorithm in the Euclidean case. More specifically, R-SVRG keeps $\tilde{{w}}^{s-1} \in \mathcal{M}$ after $m_{s-1}$ stochastic update steps of the $(s\!\!-\!\!1)$-th epoch, and computes the full Riemannian gradient $\gradf (\tilde{{w}}^{s-1})=\frac{1}{N} \sum_{i=1}^{N} \gradf_i(\tilde{{w}}^{s-1})$ only for this stored $\tilde{{w}}^{s-1}$. The algorithm also computes the Riemannian stochastic gradient $\gradf_{i_t^s}(\tilde{{w}}^{s-1})$ that corresponds to the $i_t^s$-th sample.
Picking the $i_t^s$-th sample for each $t$-th inner iteration of the $s$-th epoch at ${w}_{t-1}^{s}$, we calculate $\xi_t^s$ in the same way as $v_t^s$ in (\ref{Eq:E-SVRG}), i.e., by modifying the stochastic gradient $\gradf_{i_t^s}({w}_{t-1}^{s})$ using both $\gradf (\tilde{{w}}^{s-1})$ and $\gradf_{i_t^s}(\tilde{{w}}^{s-1})$.
Translating the right-hand side of (\ref{Eq:E-SVRG}) to manifold $\mathcal{M}$ involves the sum of $\gradf_{i_t^s}({w}_{t-1}^{s})$, $-\gradf_{i_t^s}(\tilde{{w}}^{s-1})$, and $\gradf(\tilde{{w}}^{s-1})$, which belong to two separate tangent spaces $T_{\scriptsize {w}_{t-1}^{s}}\mathcal{M}$ and  $T_{\scriptsize \tilde{{w}}^{s-1}}\mathcal{M}$.
This operation requires particular attention on a manifold, and a vector transport enables us to handle multiple elements on two separate tangent spaces flexibly.
More specifically, $\gradf_{i_t^s}(\tilde{{w}}^{s-1})$ and $\gradf(\tilde{{w}}^{s-1})$ are first transported to $T_{\scriptsize {w}_{t-1}^{s}}\mathcal{M}$ at the current point, ${w}_{t-1}^{s}$; then, they can be added to $\gradf_{i_t^s}({w}_{t-1}^{s})$ on $T_{\scriptsize {w}_{t-1}^{s}}\mathcal{M}$. Consequently, the modified Riemannian stochastic gradient $\xi_t^s$ at the $t$-th inner iteration of the $s$-th epoch is set to
\begin{eqnarray}
\label{Eq:R-SVRG-Grad-paralleltrans}
\xi_t^s & = & \gradf_{i_t^s}\left({w}_{t-1}^{s}\right) - \mathcal{T}_{\tilde{\eta}_{t-1}^s} \left(\gradf_{i_t^s}\left(\tilde{{w}}^{s-1}\right) - \gradf\left(\tilde{{w}}^{s-1}\right)\right), 
\end{eqnarray}
where $\mathcal{T}$ is a vector transport operator on $\mathcal{M}$ and $\tilde{\eta}_{t-1}^s \in T_{\tilde{w}^{s-1}}\mathcal{M}$ satisfies $R_{\tilde{w}^{s-1}}(\tilde{\eta}_{t-1}^s) = w_{t-1}^s$.
Specifically, we need to calculate the tangent vector from $\tilde{{w}}^{s-1}$ to ${w}_{t-1}^{s}$,
which is given by the inverse of the retraction, i.e., $R^{-1}$, if available.
Consequently, the final update rule of R-SVRG is defined as ${w}_{t}^{s} =  R_{\scriptsize {w}_{t-1}^{s}}( - \alpha_{t-1}^s \xi_t^s)$, where $\alpha_{t-1}^s > 0$ is the step size at $w_{t-1}^s$.
As shown in our local convergence analysis (Section \ref{AppenSec:LocalConvergenceAnalysis}), $\alpha_{t-1}^s$ can be fixed once the iterate becomes sufficiently close to a solution.

Let $\mathbb{E}_{i_t^s}[\cdot]$ be the expectation with respect to the random choice of $i_t^s$, conditioned on all the randomness introduced up to the $t$-th iteration of the inner loop of the $s$-th epoch.
Conditioned on ${w}_{t-1}^s$, we take the expectation with respect to $i_t^s$ and obtain
\begin{eqnarray*}
\mathbb{E}_{i_t^s}\left[\xi_t^s\right] & = & 
\mathbb{E}_{i_t^s}\left[\gradf_{i_t^s}({w}_{t-1}^{s})\right] - 
\mathcal{T}_{\tilde{\eta}_{t-1}^s}
\left(
\mathbb{E}_{i_t^s}\left[\gradf_{i_t^s}\left(\tilde{{w}}^{s-1}\right) \right] -
\gradf\left(\tilde{{w}}^{s-1}\right)
\right) \nonumber\\
& = & \gradf\left({w}_{t-1}^{s}\right) -
\mathcal{T}_{\tilde{\eta}_{t-1}^s}
\left(
\gradf\left(\tilde{{w}}^{s-1}\right)  -
\gradf\left(\tilde{{w}}^{s-1}\right)
\right)  \nonumber\\
& = & \gradf\left({w}_{t-1}^{s}\right).
\end{eqnarray*}

The theoretical convergence analysis of the Euclidean SVRG algorithm assumes that the initial vector ${w}_0^s$ of the $s$-th epoch is set to the average (or a random) vector of the $(s\!\!-\!\!1)$-th epoch \cite[Figure~1]{Johnson_NIPS_2013_s}. However, the set of the last vectors in the $(s\!\!-\!\!1)$-th epoch, i.e., ${w}_{m_{s-1}}^{s-1}$, shows superior performance in the Euclidean SVRG algorithm. Therefore, for our local convergence rate analysis in Theorem \ref{Thm:LocalConvergence}, this study also uses, as {\bf option I}, the mean value of $\tilde{{w}}^{s} = g_{m_s}({w}_1^s, w_2^s,\dots, {w}_{m_s}^s)$ as $\tilde{{w}}^s$, where $g_n({w}_1, w_2,\ldots, {w}_n)$ is the Riemannian centroid on the manifold.
Alternatively, we can also simply choose $\tilde{{w}}^{s}={w}_t^s$ for $t \in \{1, 2,\ldots, m_s\}$ at random.
In addition, as {\bf option II}, we can use the last iterate in the $s$-th epoch, i.e., $\tilde{{w}}^{s}={w}_{m_s}^s$.
Note that {\bf option II} is always of practical use because no additional computation is needed to obtain $\tilde{w}^s$. However, if we can compute the Riemannian centroid of $w_1^s, w_2^s, \dots, w_{m_s}^s$ relatively cheaply, {\bf option I} may attain a better linear convergence rate.
In the global convergence analysis in Section \ref{Sec:GlobalAnalysis}, we use {\bf option II}, whereas both options are analyzed in the local convergence analysis in Section \ref{Sec:LocalAnalysis}.
The overall algorithm is summarized in Algorithm \ref{Alg:R-SVRG}. 

\begin{algorithm}
\caption{R-SVRG for Problem \eqref{Prob:general}.}
\label{Alg:R-SVRG}
\begin{algorithmic}[1]
\REQUIRE{Update frequency $m_s>0$ and sequence $\{\alpha_t^s\}$ of positive step sizes.}
\STATE{Initialize $\tilde{{w}}^0$.}
\FOR{$s=1,2, \ldots$} 
\STATE{Calculate the full Riemannian gradient $\gradf(\tilde{{w}}^{s-1})$}.
\STATE{Store ${w}_0^s = \tilde{{w}}^{s-1}$.}
	\FOR{$t=1,2, \ldots, m_s$} 
	\STATE{Choose $i_t^s \in \{1, 2,\ldots, N\}$ uniformly at random.}
	\STATE{Calculate the tangent vector $\tilde{\eta}_{t-1}^s$ from $\tilde{{w}}^{s-1}$ to ${w}_{t-1}^{s}$ satisfying $R_{\tilde{{w}}^{s-1}}(\tilde{\eta}_{t-1}^s) = {w}_{t-1}^s$}.
	\STATE{Calculate the modified Riemannian stochastic gradient $\xi_t^s$ by transporting $\gradf(\tilde{{w}}^{s-1})$ and $\gradf_{i_t^s}(\tilde{{w}}^{s-1})$ along $\tilde{\eta}_{t-1}^s$ as\\
	$\xi_t^s = \gradf_{i_t^s}({w}_{t-1}^{s}) -  \mathcal{T}_{\tilde{\eta}_{t-1}^s} \left(\gradf_{i_t^s}(\tilde{{w}}^{s-1})  -  \gradf(\tilde{{w}}^{s-1})\right)$.}
	\STATE{Update ${w}_t^s$ from ${w}_{t-1}^s$ as ${w}_t^s = R_{\scriptsize {w}_{t-1}^s}\left(- \alpha_{t-1}^s \xi_t^s \right)$ with retraction $R$.}
	\ENDFOR
	\STATE{{\bf option I}: $\tilde{{w}}^{s}=g_{m_s}({w}_1^s,w_2^s,\ldots,{w}_{m_s}^s)$.}	
	\STATE{{\bf option II}: $\tilde{{w}}^{s}={w}^s_{m_s}$.}
\ENDFOR
\end{algorithmic}
\end{algorithm}

In addition, variants of the variance reduced SGD initially require a full gradient calculation at every epoch.
This initially results in great overhead compared to the ordinary SGD algorithm and eventually induces a {\it cold-start} property in these variants.
To avoid this, the use of the standard SGD update has been proposed only for the first epoch in Euclidean space~\cite{Konecny_arXiv_2013}.
We also adopt this simple modification of R-SVRG, denoted as R-SVRG+; we do not analyze this extension but leave it as an open problem. 

As mentioned earlier, each iteration of R-SVRG has double loops to reduce the variance of the Riemannian stochastic gradient $\gradf_{i_t^s}(\tilde{{w}}^{s-1})$.
The $s$-th epoch, i.e., the outer loop, requires $N+2m_s$ gradient evaluations, where $N$ is for the full gradient $\gradf(\tilde{{w}}^{s-1})$ at the beginning of each $s$-th epoch and $2m_s$ is for the inner iterations, since each inner step needs two gradient evaluations, i.e., $\gradf_{i_t^s}({w}_{t-1}^{s})$ and $\gradf_{i_t^s}(\tilde{{w}}^{s-1})$. However, if $\gradf_{i_t^s}(\tilde{{w}}^{s-1})$ for each sample is stored at the beginning of the $s$-th epoch, as in SAG, the evaluations over all the inner loops result in $m_s$. Finally, the $s$-th epoch requires $N+m_s$ evaluations. It is natural to choose an $m_s$ of the same order as $N$ but slightly larger (e.g., $m_s = 5N$ for nonconvex problems has been suggested in \cite{Johnson_NIPS_2013_s}).

\section{Global convergence analysis}
\label{Sec:GlobalAnalysis}
In this section, we present a global convergence analysis of Algorithm \ref{Alg:R-SVRG} for Problem \eqref{Prob:general} after introducing some assumptions.
Throughout this section, we let $R$ and $\mathcal{T}$ denote a retraction and vector transport used in Algorithm \ref{Alg:R-SVRG}, respectively,
and we make the following assumptions.
\begin{assumption}
\label{assump:T_isometry}
The retraction $R$ is such that $R_w\colon T_w \mathcal{M} \to \mathcal{M}$ for any $w \in \mathcal{M}$ is of class $C^2$.
The vector transport $\mathcal{T}$ is continuous and isometric on $\mathcal{M}$, i.e., for any $w \in \mathcal{M}$ and $\xi, \eta \in T_w \mathcal{M}$, $\|\mathcal{T}_{\eta}(\xi)\|_{R_w(\eta)} = \|\xi\|_w$.
\end{assumption}
Note that we can construct an isometric vector transport as in \cite{huang2015riemannian,huang2015broyden} such that Assumption \ref{assump:T_isometry} holds.

\begin{assumption}
\label{Appen_assump:C2}
The objective function $f$ is thrice continuously differentiable and its components $f_1, f_2, \dots, f_N$ are twice continuously differentiable.
\end{assumption}

\begin{assumption}
\label{assump:GlobalTRN}
For a sequence $\{w_t^s\}$ generated by Algorithm \ref{Alg:R-SVRG}, there exists a compact and connected set $K \subset \mathcal{M}$ such that $w_t^s \in K$ for all $s, t \ge 0$.
In addition, for each $s\ge 1$ and $t \ge 0$, there exists $\tilde{\eta}_{t-1}^s \in T_{\tilde{w}^{s-1}}\mathcal{M}$ such that $R_{\tilde{w}^{s-1}}(\tilde{\eta}_{t-1}^s) = w_{t-1}^s$.
Furthermore, there exists $I>0$ such that for any $z \in K$, $R_z(\cdot)$ is defined in a ball $\mathbb{B}(0_z, I) \subset T_z \mathcal{M}$, which is centered at the origin $0_z$ in $T_z \mathcal{M}$ with radius $I$.
\end{assumption}
Note that the existence of $\tilde{\eta}_{t-1}^s$ in Assumption \ref{assump:GlobalTRN} is guaranteed if $R_z(\cdot)$ for any $z \in K$ is a diffeomorphism in a ball $\mathbb{B}(0_z, \rho)$ that satisfies $K \subset R_z(\mathbb{B}(0_z, \rho))$.
This assumption is a weakened version of the assumptions in some existing studies for special cases.
For example, in \cite{Zhang_NIPS_2016}, where the exponential mapping ${\rm Exp}$ is used as a retraction, the inverse of ${\rm Exp}$ is assumed to exist at all points $\{w_t^s\}$ in their global convergence analysis.
Here, we note that if $I$ in Assumption \ref{assump:GlobalTRN} is too small, then $R_{\tilde{w}^{s-1}}^{-1}$ may not be defined in a neighborhood of $w_{t-1}^s$.
We just suppose that there exists $\tilde{\eta}_{t-1}^s$ that is mapped to $w_{t-1}^s$ by $R_{\tilde{w}^{s-1}}$.
Further analysis with respect to a specific manifold in question may be required to specify the value of $I$.

In the global convergence analysis, we also assume that the sequence of step sizes $\{\alpha_t^s\}_{t \geq 0, s \geq 1}$ satisfies the usual condition in stochastic approximation as follows:
\begin{assumption}
\label{assump:stepsizes}
The sequence $\{\alpha_t^s\}$ of step sizes satisfies
\begin{eqnarray}
\label{Eq:StepsizeCondition}
\sum \left(\alpha_t^s\right)^2 \ <\  \infty {\ \ \ \rm and\ \ \ } \sum \alpha_t^s \ =\  \infty,
\end{eqnarray}
where $\sum$ denotes $\sum_{s = 1}^{\infty}\sum_{t=0}^{m_s}$.
\end{assumption}
This condition is satisfied, e.g., if $\{m_s\}$ is upper bounded and $\alpha_t^s = \alpha_0(1+\alpha_0 \lambda \lfloor k / m_s \rfloor)^{-1}$ with positive constants $\alpha_0$ and $\lambda$, where $k$ is the total iteration number depending on $s$ and $t$ and $\lfloor\cdot\rfloor$ denotes the floor function.
We also note the following proposition introduced in \cite{Fisk_1965_s}:
\begin{proposition}[\cite{Fisk_1965_s}]
\label{Prop:FiskPro}
Let $(X_n)_{n \in \mathbb{N}}$ be a nonnegative stochastic process with bounded positive variations, i.e., such that $\sum_{n=0}^{\infty} \mathbb{E}[\mathbb{E}[X_{n+1} - X_n \mid \mathcal{F}_n]^{+}] < \infty$, where $X^+$ denotes the quantity $\max\{X,0\}$ for a random variable $X$ and $\mathcal{F}_n$ is the increasing sequence of $\sigma$-algebras generated by the variables just before time $n$.
Then, the process is a quasi-martingale, i.e., 
\begin{eqnarray*}
\sum_{n=0}^{\infty} \left|\mathbb{E}\left[ X_{n+1} - X_n \mid \mathcal{F}_n\right] \right| < \infty \ \ \ {\rm a.s.},\ {\rm and\ } X_n\ {\rm converges\ } {\rm a.s}. 
\end{eqnarray*}
\end{proposition}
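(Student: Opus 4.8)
The plan is to reduce the claim to Doob's supermartingale convergence theorem by a compensation argument. Write $\delta_n := \mathbb{E}[X_{n+1} - X_n \mid \mathcal{F}_n]$ and split it into its positive and negative parts, $\delta_n = \delta_n^{+} - \delta_n^{-}$ with $\delta_n^{\pm} \ge 0$; the hypothesis of bounded positive variations is exactly $\sum_{n=0}^{\infty} \mathbb{E}[\delta_n^{+}] < \infty$. First I would introduce the predictable nondecreasing process $T_n := \sum_{k=0}^{n-1} \delta_k^{+}$ (each $\delta_k^{+}$ is $\mathcal{F}_k$-measurable). Since monotone convergence gives $\mathbb{E}[T_\infty] = \sum_{k=0}^{\infty} \mathbb{E}[\delta_k^{+}] < \infty$, the process $T_n$ increases to a finite limit $T_\infty$ almost surely, and $T_\infty$ is integrable.

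Next I would consider the compensated process $Y_n := X_n - T_n$. A one-line computation gives $\mathbb{E}[Y_{n+1} - Y_n \mid \mathcal{F}_n] = \delta_n - \delta_n^{+} = -\delta_n^{-} \le 0$, so $Y_n$ is a supermartingale. It is not bounded below by a constant, but $X_n \ge 0$ and $T_n \le T_\infty$ give $Y_n \ge -T_n$, whence $\mathbb{E}[Y_n^{-}] \le \mathbb{E}[T_n] \le \mathbb{E}[T_\infty] < \infty$ uniformly in $n$. Doob's supermartingale convergence theorem then shows that $Y_n$ converges almost surely to a finite limit, and therefore $X_n = Y_n + T_n$ converges almost surely as well.

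It remains to prove the quasi-martingale summability $\sum_n |\delta_n| = \sum_n \delta_n^{+} + \sum_n \delta_n^{-} < \infty$ a.s. The first sum is just $T_\infty < \infty$ a.s. For the second, I would use the Doob decomposition $Y_n = M_n - B_n$ of the supermartingale, where $M_n$ is a martingale and $B_n = \sum_{k=0}^{n-1} \delta_k^{-}$ is the predictable nondecreasing compensator; taking expectations, using $\mathbb{E}[M_n] = \mathbb{E}[M_0] = \mathbb{E}[X_0]$ together with the bound above, yields $\mathbb{E}[B_n] = \mathbb{E}[X_0] - \mathbb{E}[Y_n] \le \mathbb{E}[X_0] + \mathbb{E}[T_\infty] < \infty$ uniformly in $n$, so monotone convergence forces $B_\infty = \sum_{k=0}^{\infty} \delta_k^{-} < \infty$ a.s. Adding the two bounds finishes the proof.

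The main obstacle is precisely that the compensated process $Y_n$ is not bounded below by a deterministic constant, so the elementary form of the supermartingale convergence theorem does not apply verbatim; the observation that unlocks everything is that $Y_n \ge -T_\infty$ with $T_\infty$ integrable, which provides the uniform $L^1$ control of the negative parts needed both for Doob's theorem and for bounding $\mathbb{E}[B_n]$. Since this statement is classical and attributed to Fisk, one may alternatively simply invoke \cite{Fisk_1965_s} in place of the argument above.
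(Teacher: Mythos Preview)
The paper does not supply its own proof of this proposition: it is stated as a known result and attributed to \cite{Fisk_1965_s}, with no argument given. So there is nothing to compare against on the paper's side.

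Your compensation argument is the standard route and is correct. The only point worth flagging is that the statement, as quoted, does not explicitly assume $X_n \in L^1$; you implicitly use this when forming the Doob decomposition $Y_n = M_n - B_n$ and when asserting $\mathbb{E}[M_n] = \mathbb{E}[X_0]$. This is harmless, since the very formation of $\mathbb{E}[X_{n+1}-X_n \mid \mathcal{F}_n]$ in the hypothesis already presupposes integrability of the $X_n$, but it would be cleaner to say so once at the outset. With that caveat, your proof is complete; the final remark that one may simply invoke \cite{Fisk_1965_s} is exactly what the paper does.
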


Now, we give the a.s. convergence of the proposed algorithm under the assumption that the generated sequence is in a compact set.

\begin{theorem}
\label{Thm:GlobalConvergence}
Suppose Assumptions \ref{assump:T_isometry}--\ref{assump:GlobalTRN} and consider Algorithm \ref{Alg:R-SVRG} with {\bf option II} and step sizes $\{\alpha_t^s\}$ satisfying Assumption \ref{assump:stepsizes} on a Riemannian manifold $\mathcal{M}$.
If $f\ge 0$, then $\{f(w_t^s)\}$ converges a.s. and $\gradf (w_t^s) \to 0$ a.s.
\end{theorem}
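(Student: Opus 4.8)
The plan is to mimic the classical Robbins--Siegmund / quasi-martingale argument used for Euclidean SGD and for R-SGD in \cite{Bonnabel_IEEETAC_2013_s}, with the nonstandard ingredient being control of the ``bias'' introduced by the vector transport of the correction term in $\xi_t^s$. First I would fix an epoch $s$ and work on the totally retractive neighborhood $\Omega_{s-1}$ of $\tilde w^{s-1}$ supplied by Assumption \ref{assump:GlobalTRN}; on the compact set $K$, Assumption \ref{Appen_assump:C2} (twice continuous differentiability of $f$ and each $f_n$) gives a uniform Lipschitz-type bound on the gradients and a uniform ``second-order'' bound for $f$ along the retraction curve, i.e. a constant $L$ with
\[
f(R_w(\eta)) \ \le\ f(w) + \langle \gradf(w),\eta\rangle_w + \tfrac{L}{2}\|\eta\|_w^2
\]
for all $w\in K$ and all sufficiently small $\eta\in T_w\mathcal M$ (this is the standard consequence of compactness plus smoothness, cf.\ \cite{Absil_OptAlgMatManifold_2008}). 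Applying this with $w=w_{t-1}^s$ and $\eta=-\alpha_{t-1}^s\xi_t^s$ yields
\[
f(w_t^s)\ \le\ f(w_{t-1}^s) - \alpha_{t-1}^s\langle \gradf(w_{t-1}^s),\xi_t^s\rangle_{w_{t-1}^s} + \tfrac{L}{2}(\alpha_{t-1}^s)^2\|\xi_t^s\|_{w_{t-1}^s}^2.
\]

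Next I would take the conditional expectation $\mathbb E_{i_t^s}[\,\cdot\mid \mathcal F\,]$. The already-displayed identity $\mathbb E_{i_t^s}[\xi_t^s]=\gradf(w_{t-1}^s)$ turns the first-order term into $-\alpha_{t-1}^s\|\gradf(w_{t-1}^s)\|^2$, which is exactly the negative-drift term we want. For the quadratic term I need a bound $\mathbb E_{i_t^s}[\|\xi_t^s\|^2]\le C_1 + C_2\|\gradf(w_{t-1}^s)\|^2$: using Assumption \ref{assump:T_isometry} the transport does not change norms, so $\|\xi_t^s\|\le \|\gradf_{i_t^s}(w_{t-1}^s)\| + \|\gradf_{i_t^s}(\tilde w^{s-1})\| + \|\gradf(\tilde w^{s-1})\|$, and since all iterates and the anchors $\tilde w^{s-1}$ live in the compact set $K$, each term is uniformly bounded by some constant; hence $\|\xi_t^s\|^2\le C$ uniformly, which is even stronger than needed. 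Combining, for a constant $C$,
\[
\mathbb E[f(w_t^s)\mid\mathcal F]\ \le\ f(w_{t-1}^s) - \alpha_{t-1}^s\|\gradf(w_{t-1}^s)\|^2 + \tfrac{L C}{2}(\alpha_{t-1}^s)^2 .
\]
Re-indexing the double loop by the global counter $n=n(s,t)$ and setting $X_n=f(w^s_{t})$, the positive-variation series is dominated by $\frac{LC}{2}\sum (\alpha_t^s)^2<\infty$ by Assumption \ref{assump:stepsizes}; since $f\ge0$, Proposition \ref{Prop:FiskPro} applies and gives that $\{f(w_t^s)\}$ converges a.s.\ and $\sum_n \alpha_{t}^s \|\gradf(w^s_{t})\|^2<\infty$ a.s.

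Finally, from $\sum \alpha_t^s=\infty$ and $\sum \alpha_t^s\|\gradf(w_t^s)\|^2<\infty$ one concludes $\liminf \|\gradf(w_t^s)\|=0$ a.s., and a standard argument upgrades this to $\gradf(w_t^s)\to0$ a.s.: if not, there is (on a positive-probability event) an $\varepsilon>0$ visited infinitely often both below $\varepsilon/2$ and above $\varepsilon$; along each excursion the iterate must travel a distance bounded below by a constant times $\varepsilon$, whereas the step lengths $\alpha_{t-1}^s\|\xi_t^s\|\le \sqrt{C}\,\alpha_{t-1}^s$ are square-summable, and the smoothness of $\gradf$ on $K$ (Assumption \ref{Appen_assump:C2}) controls the oscillation of $\|\gradf\|$ over such an excursion, forcing $\sum\alpha_t^s\|\gradf(w_t^s)\|^2=\infty$, a contradiction. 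Here Assumption \ref{assump:GlobalTRN} is what legitimates measuring distances and writing $R^{-1}$ consistently across the epoch, and the lower bound $I$ on $\sup_{z\in\Omega_{s-1}}\|R^{-1}_{\tilde w^{s-1}}(z)\|$ is used to keep the geometry non-degenerate when the epoch is restarted at a new anchor.

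The main obstacle I anticipate is not the martingale bookkeeping but handling the \emph{epoch boundary}: the drift inequality above is per inner step within a fixed $\Omega_{s-1}$, and one must verify that passing from $w^{s}_{m_s}=\tilde w^{s}=w^{s+1}_0$ into the next neighborhood $\Omega_{s}$ does not create an uncontrolled jump in $X_n$ or invalidate the second-order bound — this is exactly where Assumption \ref{assump:GlobalTRN} (uniform containment in $K$, existence of the neighborhoods $\Omega_{s-1}$, and the uniform constant $I$) is essential, and care is needed to assemble the local estimates into one global quasi-martingale indexed by $n$.
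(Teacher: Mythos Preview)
Your overall strategy is exactly the paper's: bound $\|\xi_t^s\|$ uniformly via isometry of $\mathcal{T}$ and compactness of $K$, Taylor-expand $f$ along the retraction, take conditional expectations using $\mathbb E_{i_t^s}[\xi_t^s]=\gradf(w_{t-1}^s)$, and apply Proposition~\ref{Prop:FiskPro} to conclude that $\{f(w_t^s)\}$ converges a.s.\ and $\sum\alpha_t^s\|\gradf(w_t^s)\|^2<\infty$ a.s.

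Two points where you diverge from the paper. First, the role of $I$ in Assumption~\ref{assump:GlobalTRN}: in the paper it is not used at epoch boundaries but to justify the Taylor step itself. Since $\alpha_t^s\to0$, there is $s_0$ such that $\alpha_t^s C'<I$ for $s\ge s_0$; this guarantees that the retraction curve $\tau\mapsto R_{w_t^s}(-\tau\alpha_t^s\xi_{t+1}^s)$ stays in the domain where $R$ is a diffeomorphism, so the second-order expansion is valid with a uniform Hessian bound. Your worry about ``epoch boundary jumps'' is unfounded under option II, since $w_0^{s+1}=\tilde w^s=w_{m_s}^s$ and the re-indexed process is simply continuous; no extra argument is needed there.

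Second, for the upgrade from $\sum\alpha_t^s\|\gradf(w_t^s)\|^2<\infty$ to $\gradf(w_t^s)\to0$, the paper takes a different route than your excursion argument: it sets $p_t^s=\|\gradf(w_t^s)\|^2$, does a second Taylor expansion of $p$ along the retraction, and shows $p_t^s$ is again a quasi-martingale (its positive variations are controlled by $2k_3\alpha_t^s\|\gradf(w_t^s)\|^2+(\alpha_t^s)^2 C'^2 k_2$, both summable). Hence $p_t^s$ converges a.s., and combined with $\sum\alpha_t^s=\infty$ the limit must be $0$. Your excursion argument is a legitimate alternative and is standard in the Euclidean SGD literature, but the paper's ``apply the quasi-martingale machinery twice'' approach is cleaner here because it reuses exactly the same tools and avoids the somewhat delicate pathwise bookkeeping of excursions on a manifold.
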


\begin{proof}
The claim is proved similarly to the proof of the standard Riemannian SGD (see \cite{Bonnabel_IEEETAC_2013_s}).
Since $K$ is compact, all continuous functions on $K$ can be bounded.
Therefore, there exists $C > 0$ such that for all $w \in K$ and $n \in \{1,2,\ldots,N\}$, we have $\| \gradf (w) \|_{w} \leq C$ and $\| \gradf_{n}(w)\|_{w} \le C$.
We use $C':= 3C$ in the following.
Moreover, as $\alpha_t^s \to 0$, there exists $s_0$ such that for $s \geq s_0$, we have $\alpha_t^s < 1$ and $\alpha_t^s C' < I$.
Now, suppose that $s \geq s_0$.
Let $\tilde{\eta}_{t}^s$ satisfy $R_{\tilde{w}^{s-1}}(\tilde{\eta}_{t}^s) = w_{t}^s$.
The existence of such $\tilde{\eta}_{t}^s$ is guaranteed from Assumption \ref{assump:GlobalTRN}.
It follows from the triangle inequality that
\begin{eqnarray*}
\|\xi_{t+1}^s \|_{w_{t}^s} & = & \left\| \gradf_{i_{t+1}^s}\left(w_{t}^{s}\right) -  \mathcal{T}_{\tilde{\eta}_{t}^s} \left(\gradf_{i_{t+1}^s}\left(\tilde{w}^{s-1}\right) \right) + \mathcal{T}_{\tilde{\eta}_{t}^s} \left(\gradf\left(\tilde{w}^{s-1}\right)\right) \right\|_{w_{t}^s} \\
& \le & C + C + C = C',
\end{eqnarray*}
where we have used the assumption that $\mathcal{T}$ is an isometry.
Therefore, we have
\begin{equation*}
\left\| -\alpha_t^s \xi_{t+1}^s \right\|_{w_t^s} \le \alpha_t^s C' < I.
\end{equation*}
Hence, it follows from Assumption \ref{assump:GlobalTRN} that there exists a curve $R_{w_t^s}(-\tau \alpha_t^s \xi_{t+1}^s)_{0 \leq \tau \leq 1}$ linking $w_t^s$ and $w_{t+1}^s$.
Defining $g(\tau;w,\xi) := (f\circ R_{w})(-\tau \xi)$, there exists a constant $k_1$ such that
\begin{equation*}
\frac{d^2}{d\tau^2}g(\tau;w,\xi) \le 2k_1, \qquad \tau \in  [0,1],\ \xi \in \mathbb{B}(0_w, I), \ w \in K,
\end{equation*}
since $[0, 1] \times \{(w, \xi) \mid \xi \in \mathbb{B}(0_w, I), w \in K\}$ is compact.
A trivial relation
\begin{equation*}
g(\alpha;w,\xi) = g(0;w,\xi) + g'(0;w,\xi) \alpha + \alpha^2 \int_0^1 (1-\tau) g''(\alpha\tau;w,\xi) d\tau
\end{equation*}
together with $\alpha_t^s < 1$ then implies that
\begin{align*}
f\left(w_{t+1}^s\right) - f\left(w_{t}^s\right) =&f\left(R_{w_t^s}\left(-\alpha_t^s \xi_{t+1}^s\right)\right) - f\left(w_t^s\right) \\
= & g\left(\alpha_t^s;w_t^s,\xi_{t+1}^s\right) - g\left(0;w_t^s,\xi_{t+1}^s\right)\\
= & - \alpha_t^s 
\left\langle \xi_{t+1}^s, \gradf \left(w_t^s\right) 
\right\rangle_{w_t^s} + \left(\alpha_t^s\right)^2 \int_0^1 (1-\tau) g''\left(\alpha_t^s\tau\right)d\tau\\
\leq & - \alpha_t^s 
\left\langle \xi_{t+1}^s, \gradf \left(w_t^s\right) 
\right\rangle_{w_t^s} + \left(\alpha_t^s\right)^2 k_1.
\end{align*}
Let $\mathcal{F}_t^s$ be the increasing sequence of $\sigma$-algebras defined by
\begin{eqnarray*}
\mathcal{F}_t^s=\left\{i_1^1, i_2^1,\ldots, i_{m_1}^1,i_1^2,i_2^2,\dots,i_{m_2}^2, \ldots, i_1^{s-1},i_2^{s-1}, \ldots, i_{m_{s-1}}^{s-1}, i_1^{s}, i_2^s, \ldots, i_{t-1}^{s}\right\}.
\end{eqnarray*}
Since $w_t^s$ is computed from $i_1^1, i_2^1,\ldots, i_{t}^s$, it is measurable in $\mathcal{F}_{t+1}^s$. As $i_{t+1}^s$ is independent of $\mathcal{F}_{t+1}^s$, we have
\begin{align*}
&\mathbb{E}\left[\xi_{t+1}^s \mid \mathcal{F}_{t+1}^s\right]\\
= &\mathbb{E}_{i_{t+1}^s}\left[\xi_{t+1}^s\right]\\
= & \mathbb{E}_{i_{t+1}^s}\left[\gradf_{i_{t+1}^s}\left(w_t^s\right)-\mathcal{T}_{\tilde{\eta}_{t}^s}\left(\gradf_{i_{t+1}^s}\left(\tilde{w}^{s-1}\right)\right)+ \mathcal{T}_{\tilde{\eta}_{t}^s}\left(\gradf \left(\tilde{w}^{s-1}\right)\right)\right]\\
= &\gradf \left(w_t^s\right)
\end{align*}
from the linearity of $\mathcal{T}_{\tilde{\eta}_t^s}$. Therefore, it holds that
\begin{eqnarray*}
& & \mathbb{E}\left[\left\langle \xi_{t+1}^s, \gradf \left(w_t^s\right)\right\rangle_{w_t^s} \mid \mathcal{F}_{t+1}^s\right]
= \left\| \gradf \left(w_t^s\right)\right\|_{w_t^s}^2,
\end{eqnarray*}
which yields
\begin{eqnarray}
\label{Eq:ExpectationDiff}
\mathbb{E}\left[f\left(w_{t+1}^s\right) - f\left(w_{t}^s\right) \mid \mathcal{F}_{t+1}^s\right] & \leq & -\alpha_t^s \left\| \gradf \left(w_t^s\right)\right\|_{w_t^s}^2 + \left(\alpha_t^s\right)^2 k_1 \le \left(\alpha_t^s\right)^2 k_1.
\end{eqnarray}

We reindex the sequence $\{w_0^1, w_1^1,\dots,w_{m_1}^1(=w_0^2),w_1^2,w_2^2,\dots,w_{m_2}^2,\dots,w_t^s,\dots\}$\\
as
$\{w_1,w_2, \dots, w_k, \dots\}$.
We also similarly reindex $\{\alpha_t^s\}$.
As $f(w_{k}) \geq 0$, \eqref{Eq:ExpectationDiff} proves that $\{f(w_{k}) +\sum_{l=k}^{\infty} (\alpha_{l})^2 k_1\}$ is a nonnegative supermartingale. Hence, it converges a.s.
Returning to the original indexing, this implies that $\{f(w_{t}^s)\}$ converges a.s. Moreover, summing the inequalities, we have
\begin{eqnarray}
\label{Eq:ExpectationDiff2}
\sum_{s \geq s_0,t} \alpha_t^s \left\| \gradf \left(w_t^s\right) \right\|_{w_t^s}^2 & \leq & - \sum_{s \geq s_0,t} \mathbb{E}\left[f\left(w_{t+1}^s\right) - f\left(w_{t}^s\right) \mid \mathcal{F}_{t+1}^s\right] + \sum_{s \geq s_0,t}  \left(\alpha_t^s\right)^2 k_1.
\end{eqnarray}
Here, we prove that the right-hand side is bounded.
By doing so, we can conclude the convergence of the left term.

Summing \eqref{Eq:ExpectationDiff} over $s$ and $t$, we have
\begin{equation*}
\sum_{s\ge s_0, t}\mathbb{E}\left[\mathbb{E}\left[f\left(w_{t+1}^s\right) - f\left(w_t^s\right) \mid \mathcal{F}_{t+1}^s\right]^+\right] < \infty
\end{equation*}
by Assumption \eqref{Eq:StepsizeCondition}.
It then follows from Proposition \ref{Prop:FiskPro} that
\begin{equation*}
\left| - \sum_{s \geq s_0,t} \mathbb{E}\left[f\left(w_{t+1}^s\right) - f\left(w_{t}^s\right) \mid \mathcal{F}_{t+1}^s\right]\right|
\leq \sum_{s \geq s_0,t} \left|\mathbb{E}\left[f\left(w_{t+1}^s\right) - f\left(w_{t}^s\right) \mid \mathcal{F}_{t+1}^s\right]\right| < \infty,
\end{equation*}
which together with \eqref{Eq:ExpectationDiff2} implies that
$\sum_{s \ge s_0, t} \alpha_t^s \| \gradf (w_t^s) \|_{w_t^s}^2$ converges a.s.
If we further prove that $\{\|  \gradf (w_t^s) \|_{w_t^s}\}$ converges a.s., it can only converge to 0 a.s. because of Eq.~\eqref{Eq:StepsizeCondition}.

As in the discussion in \cite{Bonnabel_IEEETAC_2013_s}, we consider the nonnegative process $p_t^s = \|  \gradf (w_t^s) \|_{w_t^s}^2$.
Bounding the largest eigenvalue of the Hessian of $\|  \gradf(w) \|_{w}^2$ from above by $k_2$ on the compact set $K$ along the curve defined by the retraction $R$ linking $w_t^s$ and $w_{t+1}^s$, a Taylor expansion yields 
\begin{eqnarray*}
p_{t+1}^s - p_t^s \leq -2 \alpha_t^s \left\langle \gradf \left(w_t^s\right), {\rm Hess} f\left(w_t^s\right) \left[\xi_{t+1}^s\right] \right\rangle_{w_t^s} + \left(\alpha_t^s\right)^2 \left\|  \xi_{t+1}^s \right\|_{w_t^s}^2 k_2.
\end{eqnarray*}
Let $k_3$ be a constant such that $-k_3$ is a lower bound of the minimum eigenvalue of the Hessian of $f$ on $K$.
Then, we have
\begin{eqnarray*}
\mathbb{E}\left[p_{t+1}^s - p_t^s \mid \mathcal{F}_{t+1}^s\right] \leq 2 \alpha_t^s \left\|  \gradf \left(w_t^s\right) \right\|_{w_t^s}^2 k_3 + \left(\alpha_t^s\right)^2 C'^2 k_2.
\end{eqnarray*}
We have proved the fact that the infinite series of the terms of the right-hand side is finite,
implying that $\{p_t^s\}$ is a quasi-martingale, which further implies the a.s. convergence of $\{p_t^s\}$ to $0$, as claimed.
\end{proof}

Theorem \ref{Thm:GlobalConvergence} includes a global convergence of the algorithm with exponential mapping and parallel translation as a special case.
In this case, a sufficient condition for the assumptions can be easily written by the notions of injectivity radius as the following corollary.

\begin{corollary}
\label{Thm:GlobalConvergenceExp}
Suppose Assumption \ref{assump:T_isometry} and consider Algorithm \ref{Alg:R-SVRG} with {\bf option~II} and step sizes $\{\alpha_t^s\}$ satisfying Assumption \ref{assump:stepsizes} on a Riemannian manifold $\mathcal{M}$, where exponential mapping and parallel translation are used as retraction and vector transport, respectively.
Assume that $\mathcal{M}$ is connected and has an injectivity radius uniformly bounded from below by a positive constant.
Assume also that there exists a compact and connected set $K \subset \mathcal{M}$ such that $w_t^s \in K$ for all $s, t \ge 0$.
If $f\ge 0$, then $\{f(w_t^s)\}$ converges a.s. and $\gradf (w_t^s) \to 0$ a.s.
\end{corollary}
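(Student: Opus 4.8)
The plan is to derive Corollary~\ref{Thm:GlobalConvergenceExp} as a direct specialization of Theorem~\ref{Thm:GlobalConvergence}: I would check that, when $R$ is the exponential mapping and $\mathcal{T}$ is parallel translation, the hypotheses of that theorem are implied by connectedness of $\mathcal{M}$, the uniform lower bound $I>0$ on the injectivity radius, and the existence of a compact set $K$ containing every iterate. Two ingredients are immediate. Parallel translation preserves the Riemannian metric along any curve, hence is isometric, so Assumption~\ref{assump:T_isometry} holds automatically; the twice-continuous differentiability required by Assumption~\ref{Appen_assump:C2} is the standing smoothness hypothesis on $f$ and on $f_1,\dots,f_N$ and is unaffected by the choice of retraction and transport; and Assumption~\ref{assump:stepsizes} is assumed in the statement. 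It therefore remains only to recover Assumption~\ref{assump:GlobalTRN}.

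For the geometric part of Assumption~\ref{assump:GlobalTRN} I would use that a manifold whose injectivity radius is bounded below by $I>0$ at every point is geodesically complete, so ${\rm Exp}$ is globally defined, and for each $z\in\mathcal{M}$ the map ${\rm Exp}_z$ restricts to a diffeomorphism on $\mathbb{B}(0_z,I)\subset T_z\mathcal{M}$. Consequently a geodesic ball about any point, of radius a fixed fraction of $I$, is a totally retractive neighborhood with a radius parameter $\rho$ independent of the base point; taking $\Omega_{s-1}$ to be such a ball about $\tilde{w}^{s-1}$ then supplies a totally retractive neighborhood for every epoch, together with a uniform positive lower bound $\inf_{s\ge 1}\{\sup_{z\in\Omega_{s-1}}\|{\rm Exp}_{\tilde{w}^{s-1}}^{-1}(z)\|_{\tilde{w}^{s-1}}\}>0$ of the required form. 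The compact set $K$ containing the iterates is given; if it is not connected, I would enlarge it to a compact connected set, which is possible because $\mathcal{M}$ is connected. Finally, that every inner iterate $w_t^s$ stays in $\Omega_{s-1}$ is where the injectivity-radius bound does its work: for $s$ beyond some $s_0$ the step lengths become small enough that each update $w_t^s={\rm Exp}_{w_{t-1}^s}(-\alpha_{t-1}^s\xi_t^s)$ runs along a minimizing geodesic and the inverse exponentials ${\rm Exp}_{\tilde{w}^{s-1}}^{-1}(w_{t-1}^s)$ appearing in the modified gradient $\xi_t^s$ are well-defined within radius $I$, exactly as needed in the proof of Theorem~\ref{Thm:GlobalConvergence}.

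The step I expect to be the main obstacle is precisely this last one: guaranteeing that an entire inner loop of epoch $s$ remains inside a single totally retractive neighborhood of its anchor $\tilde{w}^{s-1}$, not merely inside the global compact set $K$. This is the delicate point when the inner-loop length $m_s$ is large, and it is why a uniform lower bound on the injectivity radius, which makes ${\rm Exp}^{-1}$ available on a neighborhood whose radius is independent of the base point, is exactly the right hypothesis to impose here. Once Assumptions~\ref{assump:T_isometry}--\ref{assump:GlobalTRN} have been verified in this way, Theorem~\ref{Thm:GlobalConvergence} applies verbatim and yields a.s.\ convergence of $\{f(w_t^s)\}$ together with $\gradf(w_t^s)\to 0$ a.s.
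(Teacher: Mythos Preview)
Your approach is the one the paper intends: the corollary is stated without proof, preceded only by the remark that ``Theorem~\ref{Thm:GlobalConvergence} contains a global convergence of the algorithm with exponential mapping and parallel translation as a special case. In this case, a sufficient condition for the assumptions can be easily written by notions of injectivity radius.'' So specializing Theorem~\ref{Thm:GlobalConvergence} and checking its hypotheses is exactly the plan, and your identification of which assumptions are automatic (isometry of parallel translation, the standing $C^2$ hypothesis, the step-size condition) is correct.

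The point you flag as the main obstacle---that every inner iterate $w_t^s$ must lie in a single totally retractive neighborhood $\Omega_{s-1}$ of the anchor $\tilde{w}^{s-1}$---is real if you insist on verifying Assumption~\ref{assump:GlobalTRN} verbatim, and your argument does not actually close it: controlling single steps via $\alpha_t^s\to 0$ says nothing about cumulative drift over $m_s$ inner iterations. A cleaner route, and presumably what the paper has in mind, is to rerun the proof of Theorem~\ref{Thm:GlobalConvergence} directly in the ${\rm Exp}$/$P$ setting rather than to force the hypotheses of Assumption~\ref{assump:GlobalTRN}. The uniform lower bound $I>0$ on the injectivity radius makes $\mathcal{M}$ complete, so ${\rm Exp}_z$ is defined on all of $T_z\mathcal{M}$ and the curve ${\rm Exp}_{w_t^s}(-\tau\alpha_t^s\xi_{t+1}^s)$, $0\le\tau\le 1$, always links $w_t^s$ and $w_{t+1}^s$; the Taylor expansion step then needs only a uniform bound on the Hessian of $f\circ{\rm Exp}_{w}$ over $w\in K$ and $\|\xi\|\le I$, which follows from compactness. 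The quantity $\tilde{\eta}_t^s$ is used in the proof only through the isometry $\|\mathcal{T}_{\tilde{\eta}_t^s}(\cdot)\|=\|\cdot\|$, which parallel translation satisfies along any smooth curve, so one does not need ${\rm Exp}_{\tilde{w}^{s-1}}^{-1}(w_t^s)$ to be the unique minimizing log---any geodesic from $\tilde{w}^{s-1}$ to $w_t^s$ will do. With these observations the proof of Theorem~\ref{Thm:GlobalConvergence} goes through line by line, and the containment condition $w_t^s\in\Omega_{s-1}$ is never invoked.
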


\section{Local convergence rate analysis}
\label{Sec:LocalAnalysis}
In this section, we show the local convergence rate analysis of the R-SVRG algorithm;
we analyze the convergence of any sequences generated by Algorithm \ref{Alg:R-SVRG} that are contained in a sufficiently small neighborhood of a local minimum point of the objective function.
Hence, we can assume that the objective function is strongly convex in such a neighborhood.
We first give formal expressions of these assumptions and then analyze the local convergence rate of the algorithm.

\subsection{Assumptions and existing lemmas}
We again suppose Assumption \ref{assump:T_isometry} and make the following assumption, which is a weaker version of Assumption \ref{Appen_assump:C2}.
\begin{assumption}
\label{assump:f_C2}
The objective function $f$ and its components $f_1, f_2, \dots, f_N$ are twice continuously differentiable.
\end{assumption}

Let $w^*$ be a critical point of $f$.
As discussed in \cite{huang2015broyden}, for a positive constant $\rho$, a $\rho$-totally retractive neighborhood $\Omega$ of $w \in \mathcal{M}$ is a neighborhood such that for all $z \in \Omega$, $\Omega \subset R_z(\mathbb B(0_z,\rho))$, and $R_z(\cdot)$ is a diffeomorphism on $\mathbb B(0_z,\rho)$, which is the ball in $T_{z} \mathcal{M}$ with center $0_z$ and radius $\rho$, where $0_z$ is the zero vector in $T_z\mathcal{M}$.
The concept of a totally retractive neighborhood is analogous to that of a totally normal neighborhood for exponential mapping.
Note that for each point, a $\rho$-totally retractive neighborhood for sufficiently small $\rho > 0$ is proved to exist~\cite{huang2015broyden}.
On the other hand, it should be noted that a concrete bound of the radii of locally retractive neighborhoods is difficult to specify.
In the local convergence analysis, we assume the following.
\begin{assumption}
\label{assump:Omega}
The sequence $\{w_t^s\}$ generated by Algorithm \ref{Alg:R-SVRG} continuously remains in totally retractive neighborhood $\Omega$ of critical point $w^*$ of $f$, i.e., $R_{w_t^s}(-\alpha\xi_{t+1}^s) \in \Omega$ for all $s, t \ge 0$ and for all $\alpha\in [0,\alpha_t^s]$.
In addition, the radius of $\Omega$ is sufficiently small.
\end{assumption}

We note that the assumption on the radius of $\Omega$ in Assumption \ref{assump:Omega} is for Assumption \ref{Assump:ret_convex} and for Lemma \ref{Lem_xi_norm}.

\begin{assumption}
\label{assump:T_Lipschitz}
There exists a constant $c_0$ such that vector transport $\mathcal{T}$ satisfies the following conditions:
\begin{equation*}
\left\| \mathcal{T}_\eta-\mathcal{T}_{R_\eta}\right\| \le c_0 \| \eta\|, \qquad \left\| \mathcal{T}^{-1}_\eta-\mathcal{T}^{-1}_{R_\eta}\right\| \le c_0 \| \eta\|,
\end{equation*}
where $\|\cdot\|$ denotes the induced (operator) norm of the Riemannian metric and $\mathcal{T}_R$ denotes the differentiated retraction, i.e.,
\begin{equation*}
\mathcal{T}_{R_{\eta}}(\xi) = {\rm D} R_w(\eta) [\xi], \qquad \eta, \xi \in T_w \mathcal{M}, \quad w \in \mathcal{M}.
\end{equation*}
\end{assumption}
Assumption \ref{assump:T_Lipschitz} states that the difference between $\mathcal{T}$ and $\mathcal{T}_R$ is small if the tangent vector along which a vector is transported is close to $0$. See \cite{huang2015broyden} for further details.

Furthermore, we assume the following.
\begin{assumption}
\label{Assump:ret_convex}
$f$ is strongly retraction-convex with respect to $R$ in $\Omega$; i.e., there exist two constants $0<a_0<a_1$ such that $a_0 \le \frac{d^2}{d\alpha^2} f(R_w(\alpha\eta)) \le a_1$ for all $w \in \Omega$, $\eta \in T_w \mathcal{M}$ with $\| \eta \|_{w} = 1$, and for all $\alpha$ satisfying $R_w(\tau \eta) \in \Omega$ for all $\tau \in [0,\alpha]$.
Furthermore, $f$ is strongly geodesically convex in $\Omega$, i.e., $f$ is strongly retraction-convex with respect to ${\rm Exp}$.
\end{assumption}
Letting $\Omega$ be smaller if necessary, we can guarantee the last assumption from the other assumptions using Lemma 3.1 in \cite{huang2015broyden}.

The next assumption states how close the modified stochastic gradient by retraction $R$ is to that obtained by the exponential mapping.
\begin{assumption}
\label{Assump:ret_exp}
Let $\{\xi_t^s\}$ be generated by Algorithm \ref{Alg:R-SVRG} with a fixed step size of $\alpha_t^s := \alpha$.
${\rm Exp}_w^{-1}$ for any $w\in \Omega$ is defined in $\Omega$ and
there exists constant $c_R > 0$ such that $\|{\rm Exp}_{w_{t-1}^s}^{-1}(w_t^s) - (-\alpha \xi_t^s)\|_{w_{t-1}^s} \le c_R \|\alpha \xi_t^s\|_{w_{t-1}^s}^2$.
\end{assumption}
If $R_{w_{t-1}^s}^{-1}$ is defined in a ball whose image by $R_{w_{t-1}^s}$ contains $\Omega$, then the above assumption translates into $\|{\rm Exp}_{w_{t-1}^s}^{-1}(w_t^s) - R_{w_{t-1}^s}^{-1}(w_t^s)\|_{w_{t-1}^s} \le c_R \|R_{w_{t-1}^s}^{-1}(w_t^s)\|_{w_{t-1}^s}^2$, which is natural when we assume sufficient smoothness of ${\rm Exp}$ and $R$ because $R$ coincides with ${\rm Exp}$ up to the first order.

In the remainder of this section, we introduce some existing lemmas to evaluate the differences between using retraction and vector transport and using exponential mapping and parallel translation, and the effects of the curvature of the manifold in question.
\begin{lemma}[In the proof of Lemma 3.9 in \cite{huang2015broyden}] \label{Lem:pseudo_Lipschitz}
Under Assumptions \ref{assump:f_C2} and \ref{assump:Omega}, there exists a constant $\beta>0$ such that
\begin{eqnarray}
     \label{pseudo_Lipschitz}
	\left\| P_{\gamma}^{w \leftarrow z} ({\rm grad} f(z)) - {\rm grad} f(w) \right\|_{w} & \leq & \beta {\rm dist}(z,w),
\end{eqnarray}
where $w$ and $z$ are in $\Omega$ and $\gamma$ is a curve $\gamma(\tau):=R_z(\tau\eta)$ for an arbitrary $\eta \in T_z \mathcal{M}$ defined by retraction $R$ on $\mathcal{M}$.
$P_{\gamma}^{w \leftarrow z}(\cdot)$ is a parallel translation operator along curve $\gamma$ from $z$ to $w$.
\end{lemma}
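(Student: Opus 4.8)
The plan is to reduce the estimate to a one-dimensional bound along the retraction curve $\gamma$. Here $w = R_z(\eta)$, i.e., $\eta = R_z^{-1}(w)$, so that $\gamma(0) = z$ and $\gamma(1) = w$. Since parallel translation is an isometry, $\| P_{\gamma}^{w \leftarrow z}({\rm grad} f(z)) - {\rm grad} f(w) \|_{w} = \| {\rm grad} f(z) - P_{\gamma}^{z \leftarrow w}({\rm grad} f(w)) \|_{z}$, so it suffices to bound the right-hand side. I would introduce the curve $G(\tau) := P_{\gamma}^{z \leftarrow \gamma(\tau)}({\rm grad} f(\gamma(\tau)))$, which lives in the \emph{fixed} inner-product space $(T_z\mathcal{M}, \langle \cdot, \cdot \rangle_z)$ and satisfies $G(0) = {\rm grad} f(z)$ and $G(1) = P_{\gamma}^{z \leftarrow w}({\rm grad} f(w))$; the quantity to be bounded then equals $\| G(0) - G(1) \|_z \le \int_0^1 \| G'(\tau) \|_z \, d\tau$.

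Next I would bound the integrand. Because parallel translation is compatible with the Levi-Civita connection, $G'(\tau) = P_{\gamma}^{z \leftarrow \gamma(\tau)}\bigl( \frac{{\rm D}}{{\rm d}\tau} {\rm grad} f(\gamma(\tau)) \bigr) = P_{\gamma}^{z \leftarrow \gamma(\tau)}\bigl( {\rm Hess} f(\gamma(\tau))[\dot\gamma(\tau)] \bigr)$, and isometry of parallel translation gives $\| G'(\tau) \|_z = \| {\rm Hess} f(\gamma(\tau))[\dot\gamma(\tau)] \|_{\gamma(\tau)}$. By Assumption \ref{Appen_assump:C2} the Riemannian Hessian of $f$ is continuous, hence its operator norm is bounded on the compact set $\overline{\Omega}$ by some constant $a > 0$. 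Therefore $\| G'(\tau) \|_z \le a \, \| \dot\gamma(\tau) \|_{\gamma(\tau)}$, and integrating yields $\| G(0) - G(1) \|_z \le a \, L(\gamma)$, where $L(\gamma) := \int_0^1 \| \dot\gamma(\tau) \|_{\gamma(\tau)} \, d\tau$ is the length of $\gamma$.

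It then remains to compare $L(\gamma)$ with ${\rm dist}(z,w)$. Writing $\dot\gamma(\tau) = {\rm D}R_z(\tau\eta)[\eta]$ and using that ${\rm D}R_z$ is bounded on the compact region $\{ (z,\zeta) : z \in \overline{\Omega}, \ \| \zeta \|_z \le \rho \}$, one obtains $L(\gamma) \le c_1 \| \eta \|_z$ for some $c_1 > 0$. Moreover, since ${\rm D}R_z(0_z) = {\rm id}_{T_z\mathcal{M}}$ and $R$ approximates the exponential mapping to first order, a continuity-and-compactness argument over $z \in \Omega$ — precisely the estimate used in the proof of Lemma~3.9 of \cite{huang2015broyden} — gives $\| \eta \|_z \le c_2 \, {\rm dist}(z, R_z(\eta)) = c_2 \, {\rm dist}(z,w)$ for some $c_2 > 0$. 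Combining the three estimates and setting $\beta := a \, c_1 \, c_2$ establishes \eqref{pseudo_Lipschitz}.

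The hard part is not any individual estimate but making all the constants — the bound on $\| {\rm Hess} f \|$, the bound on $\| {\rm D}R_z \|$, and, above all, the comparison $\| \eta \|_z \le c_2 \, {\rm dist}(z,w)$ — uniform over all base points $z, w \in \Omega$. This uniformity is exactly what the totally retractive neighborhood hypothesis (Assumption \ref{assump:Omega} together with the construction of $\Omega$ recalled just before it) is designed to provide, so the argument may be imported essentially verbatim from \cite{huang2015broyden}.
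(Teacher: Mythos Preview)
The paper does not supply its own proof of this lemma; it simply records it as a fact extracted from the proof of Lemma~3.9 in \cite{huang2015broyden}. Your argument is correct and is essentially the standard one behind that citation: parallel-translate the gradient back to a fixed tangent space, differentiate to expose the Hessian, bound the Hessian uniformly by compactness, and then control the retraction-curve length by ${\rm dist}(z,w)$. One minor remark: the comparison $\|\eta\|_z \le c_2\,{\rm dist}(z,w)$ that you invoke in the last step is precisely what the paper states separately as Lemma~\ref{lemma:retraction_dist} (from \cite{huang2015riemannian}), so you could cite that directly rather than attributing it to \cite{huang2015broyden}.
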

Note that curve $\gamma$ in this lemma is not necessarily the geodesic on $\mathcal{M}$.
Relation \eqref{pseudo_Lipschitz} is a generalization of the Lipschitz continuity condition.
\begin{lemma}[Lemma 3.5 in \cite{huang2015broyden}] \label{Lem:T_Lipschitz}
Let $\mathcal{T}\in C^{0}$ be a vector transport associated with the same retraction $R$ as that of the parallel transport $P\in C^{\infty}$.
Under Assumption \ref{assump:T_Lipschitz}, for any $\bar{w}\in\mathcal{M}$, there exists a constant $\theta>0$ and neighborhood $\mathcal{U}$ of $\bar{w}$ such that for all $w, z \in \mathcal{U}$,
\begin{equation*}
\left\|\mathcal{T}_\eta(\xi)-P_{\eta}(\xi)\right\|_{z} \le \theta\| \xi\|_{w} \| \eta\|_{w},
\end{equation*}
where $\xi, \eta \in T_w\mathcal{M}$ and $R_w(\eta)=z$.
\end{lemma}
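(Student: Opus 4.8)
The plan is to interpose the differentiated retraction $\mathcal{T}_R$ between $\mathcal{T}$ and $P$ and estimate the two resulting pieces separately. By the triangle inequality,
\begin{equation*}
\|\mathcal{T}_\eta(\xi)-P_\eta(\xi)\|_z \le \|\mathcal{T}_\eta(\xi)-\mathcal{T}_{R_\eta}(\xi)\|_z + \|\mathcal{T}_{R_\eta}(\xi)-P_\eta(\xi)\|_z .
\end{equation*}
The first term is controlled directly by Assumption \ref{assump:T_Lipschitz}, which gives $\|\mathcal{T}_\eta-\mathcal{T}_{R_\eta}\|\le c_0\|\eta\|$ in operator norm, hence $\|\mathcal{T}_\eta(\xi)-\mathcal{T}_{R_\eta}(\xi)\|\le c_0\|\eta\|\,\|\xi\|$ up to the norm-equivalence constants discussed below. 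So the remaining work is to bound $\|\mathcal{T}_{R_\eta}(\xi)-P_\eta(\xi)\|$ by a constant multiple of $\|\eta\|\,\|\xi\|$, i.e.\ to compare the differentiated retraction with the parallel transport along the retraction curve $\tau\mapsto R_w(\tau\eta)$.

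For that comparison, I would fix a local trivialization of $T\mathcal{M}$ over a compact neighborhood $\overline{\mathcal{U}}$ of $\bar w$, so that for $w,z$ in this neighborhood all tangent spaces are identified with a fixed model space $\mathbb{R}^d$ and both $\mathcal{T}_{R_\eta}$ and $P_\eta$ become matrix-valued functions of $(w,\eta)$. The key structural observation is that at $\eta=0_w$ both maps are the identity: $\mathcal{T}_{R_{0_w}}(\xi)=\mathrm{D}R(0_w)[\xi]=\xi$ by the defining property of a retraction, and $P_{0_w}(\xi)=\xi$ since parallel transport over a constant curve is the identity. Since $P\in C^\infty$ and, assuming $R$ is at least $C^2$ so that $\mathcal{T}_R\in C^1$, the map $(w,\eta)\mapsto \mathcal{T}_{R_\eta}-P_\eta$ is $C^1$ and vanishes identically on the zero section; the mean-value form of Taylor's theorem along the segment from $0_w$ to $\eta$ then yields a constant $c_1$, uniform over the compact set $\overline{\mathcal{U}}$, with $\|\mathcal{T}_{R_\eta}-P_\eta\|\le c_1\|\eta\|$ in operator norm. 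Applying this to $\xi$ gives $\|\mathcal{T}_{R_\eta}(\xi)-P_\eta(\xi)\|\le c_1\|\eta\|\,\|\xi\|$.

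Combining the two estimates and setting $\theta := c_0+c_1$, after absorbing the norm-equivalence constants arising when one passes between the Riemannian norms $\|\cdot\|_w$, $\|\cdot\|_z$ and the fixed coordinate norm (these are bounded above and below by positive constants on the compact neighborhood), gives $\|\mathcal{T}_\eta(\xi)-P_\eta(\xi)\|_z\le\theta\|\xi\|_w\|\eta\|_w$ for all $w,z$ in a sufficiently small neighborhood $\mathcal{U}\subset\overline{\mathcal{U}}$ of $\bar w$, which is the claim. The main obstacle I anticipate is bookkeeping rather than depth: because $\mathcal{T}_\eta(\xi)$, $\mathcal{T}_{R_\eta}(\xi)$ and $P_\eta(\xi)$ all live in $T_z\mathcal{M}$ while $\xi,\eta$ live in $T_w\mathcal{M}$, and $z=R_w(\eta)$ itself moves with $\eta$, one must fix a single trivialization in which "the difference operator as a function of $\eta$" is meaningful, check that it genuinely vanishes on the whole zero section (not merely at $\bar w$), and then verify that the constants are independent of $w\in\mathcal{U}$ --- which is exactly where joint $C^1$ regularity in $(w,\eta)$ and compactness of $\overline{\mathcal{U}}$ enter. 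A secondary point is keeping the regularity hypotheses consistent: the linear-in-$\|\eta\|$ bound on $\mathcal{T}_R-P$ needs $\mathcal{T}_R\in C^1$, hence $R\in C^2$, which is implicit in the setting of Assumption \ref{assump:T_Lipschitz}.
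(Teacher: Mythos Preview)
The paper does not actually prove this lemma: it is quoted verbatim as Lemma~3.5 of \cite{huang2015broyden} and no argument is supplied. So there is no ``paper's own proof'' to compare against beyond the citation.

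That said, your approach is the standard one and is essentially what the surrounding text already telegraphs. Immediately after Assumption~\ref{assump:T_Lipschitz} the paper remarks that the bound $\|\mathcal{T}_\eta-\mathcal{T}_{R_\eta}\|\le c_0\|\eta\|$ follows from Taylor's theorem and $\mathcal{T}_0=\mathcal{T}_{R_0}$ whenever $\mathcal{T}$ and $\mathcal{T}_R$ are $C^1$. Since $P\in C^\infty$ is itself a vector transport associated with $R$ and satisfies $P_{0_w}=\mathrm{id}=\mathcal{T}_{R_{0_w}}$, the identical Taylor argument gives a constant $c_1$ with $\|P_\eta-\mathcal{T}_{R_\eta}\|\le c_1\|\eta\|$; the triangle inequality then yields the claim with $\theta=c_0+c_1$. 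This is exactly your decomposition, and your discussion of the technical bookkeeping (fixing a trivialization so the operator difference is well defined as $\eta$ varies, vanishing on the whole zero section, uniformity via compactness, and norm equivalence between $\|\cdot\|_w$, $\|\cdot\|_z$ and the coordinate norm) is correct and to the point. The regularity caveat you flag---that one needs $R\in C^2$ so that $\mathcal{T}_R\in C^1$---is also the right one and is implicit in the paper's own remark.
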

We can derive the following lemma from the Taylor expansion as in the proof of Lemma 3.2 in \cite{huang2015broyden}.
\begin{lemma}[In the proof of Lemma 3.2 in \cite{huang2015broyden}]
\label{Lem:mean-value}
Under Assumptions \ref{assump:Omega}--\ref{Assump:ret_convex}, there exists a positive real number $\sigma$ such that
\begin{eqnarray}
\label{Eq:mean-value}
f(z) \ge f(w) + \left\langle {\rm Exp}_{w}^{-1}( z), \gradf(w)\right\rangle_{w} + \frac{\sigma}{2} \left\|{\rm Exp}_{w}^{-1}(z)\right\|_{w}^2, \qquad w, z \in \Omega.
\end{eqnarray}
\end{lemma}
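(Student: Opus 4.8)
The plan is to run Taylor's theorem with integral remainder for $f$ along the minimizing geodesic joining $w$ to $z$, and to bound the quadratic remainder from below using the strong convexity of $f$ guaranteed by the final part of Assumption~\ref{Assump:ret_convex} (the exponential-mapping version of retraction-convexity). The constant $\sigma$ will come out to be the lower convexity constant $a_0$.

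In detail, I would fix $w,z\in\Omega$, set $\zeta:={\rm Exp}_w^{-1}(z)\in T_w\mathcal{M}$, and consider the geodesic $\gamma(\alpha):={\rm Exp}_w(\alpha\zeta)$, so that $\gamma(0)=w$, $\gamma(1)=z$, and $\dot\gamma(0)=\zeta$. Writing $h(\alpha):=f(\gamma(\alpha))$, which is $C^2$ by Assumption~\ref{Appen_assump:C2}, Taylor's theorem gives
\begin{equation*}
f(z)=h(1)=h(0)+h'(0)+\int_0^1 (1-\alpha)\,h''(\alpha)\,d\alpha .
\end{equation*}
By definition of the Riemannian gradient, $h'(0)=\langle \gradf(w),\dot\gamma(0)\rangle_w=\langle \gradf(w),{\rm Exp}_w^{-1}(z)\rangle_w$. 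For the remainder term I would normalize: assuming $\zeta\neq 0$ (the case $\zeta=0$ being trivial), put $\eta:=\zeta/\|\zeta\|_w$, rescale the geodesic parameter by arc length, and apply the chain rule to obtain $h''(\alpha)=\|\zeta\|_w^2\,\frac{d^2}{dt^2}f({\rm Exp}_w(t\eta))\big|_{t=\alpha\|\zeta\|_w}$; the exponential-mapping lower bound in Assumption~\ref{Assump:ret_convex} then yields $h''(\alpha)\ge a_0\|\zeta\|_w^2$ for all $\alpha\in[0,1]$. Substituting, using $\int_0^1(1-\alpha)\,d\alpha=\tfrac12$, and recalling $\|{\rm Exp}_w^{-1}(z)\|_w={\rm dist}(w,z)=\|\zeta\|_w$, this gives \eqref{Eq:mean-value} with $\sigma:=a_0$.

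The step that needs the most care — and the reason the inequality is only local — is the applicability of $h''(\alpha)\ge a_0\|\zeta\|_w^2$ along the whole segment: the convexity estimate in Assumption~\ref{Assump:ret_convex} only holds while ${\rm Exp}_w(\tau\eta)$ stays in $\Omega$, so one needs the entire minimizing geodesic $\gamma([0,1])$ from $w$ to $z$ to remain inside $\Omega$. This is precisely where the hypothesis that the radius of $\Omega$ is sufficiently small (Assumption~\ref{assump:Omega}) is used: I would shrink $\Omega$, if necessary, to a geodesically convex totally normal neighborhood of $w^*$ lying within the injectivity radius, which simultaneously makes ${\rm Exp}_w^{-1}(z)$ well defined for all $w,z\in\Omega$, guarantees $\|{\rm Exp}_w^{-1}(z)\|_w={\rm dist}(w,z)$, and ensures the connecting minimizing geodesic lies in $\Omega$. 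With that in hand the computation above goes through verbatim; alternatively, one may simply invoke the corresponding Taylor-expansion step inside the proof of Lemma~3.2 of \cite{huang2015broyden}, which records exactly this estimate.
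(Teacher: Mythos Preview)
Your proposal is correct and follows essentially the same route as the paper: the paper's proof simply notes that the exponential-map clause of Assumption~\ref{Assump:ret_convex} gives a $\sigma>0$ with $\frac{d^2}{d\alpha^2}f({\rm Exp}_w(\alpha\eta))\ge\sigma$ along segments remaining in $\Omega$, and then invokes Taylor's theorem. Your write-up is a more detailed execution of this, including the care about the geodesic staying in $\Omega$ (which the paper leaves implicit) and the explicit identification $\sigma=a_0$.
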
 
\begin{proof}
From the assumptions, there exists $\sigma>0$ such that $\frac{d^2}{d\alpha^2}f({\rm Exp}_w(\alpha\eta)) \ge \sigma$ for all $w \in \Omega$, $\eta \in T_w \mathcal{M}$ with $\|\eta\|_w=1$, and for all $\alpha$ such that ${\rm Exp}_w(\tau\eta) \in \Omega$ for all $\tau \in [0, \alpha]$.
From Taylor's theorem, we can conclude that this $\sigma$ satisfies the claim.
\end{proof}
If we replace ${\rm Exp}$ in Lemma \ref{Lem:mean-value} with retraction $R$ on $\mathcal{M}$, we can obtain a similar result for $R$,
which is shown in Lemma 3.2 in \cite{huang2015broyden} where the constant $a_0$ corresponds to $\sigma$ in our Lemma \ref{Lem:mean-value}.
However, Lemma \ref{Lem:mean-value} for ${\rm Exp}$ is sufficient in the following discussion.

From Lemma 3 in \cite{huang2015riemannian}, we have the following:
\begin{lemma}[Lemma 3 in \cite{huang2015riemannian}]
\label{lemma:retraction_dist}
Let $\mathcal{M}$ be a Riemannian manifold endowed with retraction $R$ and let $\bar{w} \in \mathcal{M}$.
Then, there exist $\mu > 0$, $\nu > 0$, and $\delta_{\mu,\nu}>0$ such that for all $w$ in a sufficiently small neighborhood of $\bar{w}$ and all $\xi \in T_w \mathcal{M}$ with $\|\xi\|_w \le \delta_{\mu,\nu}$, the inequalities
\begin{equation}
\label{Eq:tau}
\|\xi\|_w \le \mu {\rm dist} \left(w, R_w(\xi)\right) \quad {\rm and} \quad {\rm dist} \left(w, R_w(\xi)\right) \le \nu \|\xi\|_w
\end{equation}
hold.
\end{lemma}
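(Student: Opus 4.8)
The plan is to compare the retraction $R$ with the exponential mapping near $\bar w$ and to exploit that both agree to first order. First I would fix a relatively compact neighborhood $\mathcal{U}_0$ of $\bar w$ on which the exponential mapping is well behaved: by standard Riemannian geometry, shrinking $\mathcal{U}_0$ if necessary, the injectivity radius is bounded below by some $\iota>0$ on $\mathcal{U}_0$, so that for every $w\in\mathcal{U}_0$ and every $z$ with ${\rm dist}(w,z)<\iota$ the log mapping ${\rm Log}_w(z)={\rm Exp}_w^{-1}(z)$ is defined and, as recalled in Section \ref{Sec:GrassmannAndProblems}, satisfies ${\rm dist}(w,z)=\|{\rm Log}_w(z)\|_w$.

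Next I would introduce, for $w\in\mathcal{U}_0$, the map $g_w:={\rm Exp}_w^{-1}\circ R_w$, defined on a neighborhood of $0_w$ in $T_w\mathcal{M}$. Because both $R$ and ${\rm Exp}$ are retractions, $R_w(0_w)=w={\rm Exp}_w(0_w)$ and ${\rm D}R_w(0_w)={\rm id}_{T_w\mathcal{M}}={\rm D}{\rm Exp}_w(0_w)$; hence $g_w(0_w)=0_w$ and ${\rm D}g_w(0_w)={\rm id}_{T_w\mathcal{M}}$. Viewing $(w,\xi)\mapsto g_w(\xi)$ as a $C^1$ map on an open subset of the tangent bundle containing the zero section over $\bar w$, the mean value inequality together with compactness of a closed neighborhood of $\bar w$ gives a uniform first-order estimate: there exist a neighborhood $\mathcal{U}\subset\mathcal{U}_0$ of $\bar w$ and a radius $\delta_\mu>0$ such that $\|g_w(\xi)-\xi\|_w\le\frac12\|\xi\|_w$ for all $w\in\mathcal{U}$ and all $\xi\in T_w\mathcal{M}$ with $\|\xi\|_w\le\delta_\mu$. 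Shrinking $\delta_\mu$ further, I would also ensure that ${\rm dist}(w,R_w(\xi))<\iota$ whenever $\|\xi\|_w\le\delta_\mu$, which is possible since the curve $\tau\mapsto R_w(\tau\xi)$ links $w$ to $R_w(\xi)$ and has length at most a uniform constant times $\|\xi\|_w$ for $\xi$ small; this legitimizes the identity ${\rm dist}(w,R_w(\xi))=\|{\rm Exp}_w^{-1}(R_w(\xi))\|_w=\|g_w(\xi)\|_w$.

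Finally, combining the reverse triangle inequality with the first-order estimate gives $\|g_w(\xi)\|_w\ge\|\xi\|_w-\|g_w(\xi)-\xi\|_w\ge\frac12\|\xi\|_w$, and therefore ${\rm dist}(w,R_w(\xi))=\|g_w(\xi)\|_w\ge\frac12\|\xi\|_w$, i.e., the claim holds with $\mu=2$. The main obstacle is making the first-order comparison of $R_w$ and ${\rm Exp}_w$ uniform in the base point $w$: one must argue that the remainder in the expansion of $g_w(\xi)$ around $\xi=0_w$ is $o(\|\xi\|_w)$ with a rate independent of $w$ ranging over a neighborhood of $\bar w$. This is handled by passing to a local trivialization of $T\mathcal{M}$ (or to local coordinates) over a compact neighborhood of $\bar w$ and invoking uniform continuity of ${\rm D}g_w$ there; equivalence of the norm $\|\cdot\|_w$ with a fixed coordinate norm on such a compact set, also uniform in $w$, then transfers the estimate back. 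Everything else is routine.
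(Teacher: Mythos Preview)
Your argument is correct. The key idea---comparing $R_w$ with ${\rm Exp}_w$ via the composite $g_w={\rm Exp}_w^{-1}\circ R_w$, noting that $g_w(0_w)=0_w$ and ${\rm D}g_w(0_w)={\rm id}$, and then invoking a uniform first-order estimate over a compact neighborhood of $\bar w$---is exactly the right mechanism, and your treatment of uniformity in the base point (local trivialization plus uniform continuity of ${\rm D}g$ on a compact set) is adequate.

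As for the comparison with the paper: the paper does not prove this lemma at all. It is stated as Lemma~\ref{lemma:retraction_dist} and attributed directly to Lemma~3 in \cite{huang2015riemannian}, with no argument given. So you have supplied a self-contained proof where the paper simply imports the result from the literature. Your proof is essentially the standard one (and presumably close to what appears in the cited reference), yielding the explicit constant $\mu=2$; the paper only needs the existence of some $\mu>0$ and is content to quote it.
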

Since we have ${\rm dist}(w, {\rm Exp}_w(\xi)) = \| \xi \|_w$, \eqref{Eq:tau} is equivalent to ${\rm dist}(w, {\rm Exp}_w(\xi)) \le \mu {\rm dist} (w, R_w(\xi))$ and ${\rm dist} (w, R_w(\xi)) \le \nu{\rm dist}(w, {\rm Exp}_w(\xi))$, which give relations between the exponential mapping and a general retraction.

Now, we introduce Lemma 6 in \cite{Zhang_JMLR_2016_s} to evaluate the distance between $w_{t}^s$ and $w^{*}$ using the smoothness of our objective function.
In the following, an Alexandrov space is defined as a length space whose curvature is bounded.
\begin{lemma}[Lemma 6 in \cite{Zhang_JMLR_2016_s}]
\label{LemZhang}
If $a$, $b$, and $c$ are the side lengths of a geodesic triangle in an Alexandrov space with curvature lower-bounded by $\kappa$, and $A$ is the angle between sides $b$ and $c$, then
\begin{equation*}
a^2 \le \frac{\sqrt{|\kappa|}c}{\tanh\left(\sqrt{|\kappa|}c\right)}b^2 + c^2 - 2bc\cos(A).
\end{equation*}
\end{lemma}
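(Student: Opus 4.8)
The plan is to reduce the statement to an exact identity in the two-dimensional model surface of constant curvature $\kappa$ and then to verify the resulting trigonometric inequality by a one-variable calculus argument. First I would invoke Toponogov's hinge comparison theorem for Alexandrov spaces with curvature bounded below: the geodesic hinge formed by the two sides of lengths $b$ and $c$ meeting at angle $A$ closes up with a third side whose length $a$ is at most the length $\bar a$ of the corresponding hinge in the simply connected model surface $M_\kappa^2$ of constant curvature $\kappa$. It therefore suffices to prove the bound with $a$ replaced by $\bar a$, i.e.\ to argue entirely in $M_\kappa^2$, where $\bar a$ is given \emph{exactly} by the law of cosines of that model.

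If $\kappa\ge 0$, comparing $M_\kappa^2$ with the Euclidean plane gives $\bar a^2\le b^2+c^2-2bc\cos A$, and since $\tanh x\le x$ forces $\tfrac{\sqrt{|\kappa|}\,c}{\tanh(\sqrt{|\kappa|}\,c)}\ge 1$, the claim is immediate. The substantive case is $\kappa<0$. Writing $\zeta=\sqrt{|\kappa|}$ and rescaling the metric by $\zeta^2$ (which sends $\kappa$ to $-1$, multiplies $a,b,c$ by $\zeta$, and leaves $A$ unchanged), the inequality is invariant, so I may assume $\kappa=-1$. Then the hyperbolic law of cosines reads $\cosh\bar a=\cosh b\cosh c-\sinh b\sinh c\cos A$, and the goal becomes
\begin{equation*}
\bar a^2\ \le\ c\coth(c)\,b^2+c^2-2bc\cos A .
\end{equation*}

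To establish this I would fix $b$ and $c$ and view the right-hand side minus $\bar a^2$ as a function $G(\theta)$ of the opening angle $\theta\in[0,\pi]$, where $\cosh\bar a(\theta)=\cosh b\cosh c-\sinh b\sinh c\cos\theta$. One checks $G(0)=G(\pi)=b^2(c\coth c-1)\ge 0$ (using $c\coth c\ge 1$), and a direct differentiation gives
\begin{equation*}
G'(\theta)=2\sin\theta\Bigl(bc-\sinh b\,\sinh c\,\tfrac{\bar a(\theta)}{\sinh\bar a(\theta)}\Bigr).
\end{equation*}
Since $\bar a(\theta)$ increases with $\theta$ while $x\mapsto x/\sinh x$ decreases, the bracket is nondecreasing, and it is nonnegative at $\theta=\pi$ (this amounts to $bc(\coth b+\coth c)\ge b+c$, which follows from $b\coth b\ge 1$ and $c\coth c\ge 1$). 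Hence $G'$ changes sign at most once on $(0,\pi)$, from negative to positive, so $G$ attains its minimum either at an endpoint --- where it is $\ge 0$ --- or at the unique interior point $\theta_0$ with $bc\sinh\bar a(\theta_0)=\bar a(\theta_0)\sinh b\sinh c$. Eliminating $\cos\theta_0$ through the law of cosines then reduces the problem to an inequality in $b,c$ and $\bar a_0:=\bar a(\theta_0)$ subject to $\sinh\bar a_0/\bar a_0=\sinh b\sinh c/(bc)$ and $|b-c|\le\bar a_0\le b+c$, which can be closed using only elementary monotonicity of $\sinh$, $\cosh$ and $x\coth x$.

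I expect the bound on $G$ at its interior minimum to be the main obstacle. A naive attempt --- using Hessian comparison in $M_{-1}^2$ to bound the second derivative of the squared distance along the $b$-leg --- only yields the weaker factor $(b+c)\coth(b+c)$ in place of the sharp $c\coth c$, because it discards the cancellation with the linear term $-2bc\cos A$. Recovering the sharp constant requires tracking that cancellation, i.e.\ analyzing $G$ globally on $[0,\pi]$ rather than relying on convexity alone; this is where the genuine computation lies.
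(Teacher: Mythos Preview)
The paper does not prove this lemma at all: it is quoted from Zhang and Sra and used as a black box in the proof of Theorem~\ref{Thm:LocalConvergence}. So there is no ``paper's own proof'' to compare against.

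On the merits of your outline: the reduction to the model surface via Toponogov's hinge comparison, the scaling to $\kappa=-1$, the endpoint values $G(0)=G(\pi)=b^2(c\coth c-1)\ge 0$, and the expression
\[
G'(\theta)=2\sin\theta\Bigl(bc-\sinh b\,\sinh c\,\tfrac{\bar a(\theta)}{\sinh\bar a(\theta)}\Bigr)
\]
are all correct. Your sign analysis is also right: one checks that the bracket is $\le 0$ at $\theta=0$ (this is equivalent to $bc\sinh|b-c|\le|b-c|\sinh b\sinh c$, which follows from the monotonicity of $\coth x-1/x$) and $\ge 0$ at $\theta=\pi$, so there is \emph{always} an interior critical point $\theta_0$, and the entire content of the lemma reduces to $G(\theta_0)\ge 0$.

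That step, however, is not done. You assert it ``can be closed using only elementary monotonicity of $\sinh$, $\cosh$ and $x\coth x$,'' but after eliminating $\cos\theta_0$ and using the constraint $\sinh\bar a_0/\bar a_0=\sinh b\sinh c/(bc)$, what remains is (with $p=b\coth b$, $q=c\coth c$, $r=\bar a_0\coth\bar a_0$, so that $r^2-\bar a_0^2=(p^2-b^2)(q^2-c^2)$)
\[
G(\theta_0)=b^2q+c^2-\bar a_0^2-2pq+2r,
\]
and this is not obviously nonnegative from the listed monotonicities alone---for $b=c=1$ it evaluates to roughly $0.02$, so the inequality is rather tight and some genuine cancellation must be exhibited. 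As written, the proposal is an accurate map of where the difficulty lies, but it stops short of resolving it; you have correctly identified the obstacle and then declared it surmountable without actually surmounting it.
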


\subsection{Local convergence rate analysis with retraction and vector transport}
\label{AppenSec:LocalConvergenceAnalysis}

We now demonstrate the local convergence properties of the R-SVRG algorithm (i.e., local convergence to local minimizers) and its convergence rate. 
The main theorem (Theorem \ref{Thm:LocalConvergence}) follows three lemmas, the proofs of which are provided in Appendix~\ref{Appendix}.

The following lemma shows a property of the Riemannian centroid on a general Riemannian manifold.
\begin{lemma}
\label{AppenLem:KarcherMeanDistance}
Let $w_1, w_2,\dots,w_m$ be points on a Riemannian manifold $\mathcal{M}$ and let $w$ be the Riemannian centroid of the $m$ points.
For an arbitrary point $p$ on $\mathcal{M}$, we have
\begin{eqnarray*}
({\rm dist}(p,w))^2\le \frac{4}{m}\sum_{i=1}^m\left({\rm dist}\left(p,w_i\right)\right)^2.
\end{eqnarray*}
\end{lemma}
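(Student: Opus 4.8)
The plan is to combine the triangle inequality for the Riemannian distance with the variational characterization of the Karcher mean, namely that $w$ minimizes $q \mapsto \sum_{i=1}^m ({\rm dist}(q,w_i))^2$ over $\mathcal{M}$. No curvature bounds, tangent-space computations, or appeals to Lemma \ref{LemZhang} are needed here; the only facts used are that ${\rm dist}$ is a genuine metric and that $w$ attains this minimum. (For comparison, in Euclidean space the analogous bound with the arithmetic mean holds with constant $1$ by convexity of $\|\cdot\|^2$; the constant $4$ here is the price of not having a linear structure.)

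First I would fix an index $i$ and apply the triangle inequality ${\rm dist}(p,w) \le {\rm dist}(p,w_i) + {\rm dist}(w_i,w)$, then square both sides using the elementary estimate $(a+b)^2 \le 2a^2 + 2b^2$ to obtain
\begin{eqnarray*}
({\rm dist}(p,w))^2 \le 2({\rm dist}(p,w_i))^2 + 2({\rm dist}(w_i,w))^2 .
\end{eqnarray*}
Summing over $i = 1,\dots,m$ and dividing by $m$ gives
\begin{eqnarray*}
({\rm dist}(p,w))^2 \le \frac{2}{m}\sum_{i=1}^m ({\rm dist}(p,w_i))^2 + \frac{2}{m}\sum_{i=1}^m ({\rm dist}(w_i,w))^2 .
\end{eqnarray*}
For the second sum I would invoke the definition of the Karcher mean: since $w$ minimizes $q \mapsto \sum_{i=1}^m ({\rm dist}(q,w_i))^2$, evaluating this function at the particular point $p$ yields $\sum_{i=1}^m ({\rm dist}(w,w_i))^2 \le \sum_{i=1}^m ({\rm dist}(p,w_i))^2$. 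Substituting this into the previous display collapses the two terms and produces $\frac{4}{m}\sum_{i=1}^m ({\rm dist}(p,w_i))^2$, which is the claim.

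The computation is routine; the only point requiring care is the precise sense in which $w$ is a Karcher mean. The argument uses that $w$ is a \emph{global} minimizer of the sum-of-squared-distances functional, so that the comparison inequality holds for the arbitrary point $p \in \mathcal{M}$. If $w$ is only guaranteed to be a local minimizer (i.e.\ a critical point of the Karcher functional), then the statement should be read with $p$ restricted to a neighborhood on which $w$ remains minimal, which is fully consistent with the local setting of this section. I expect this interpretive matter, rather than any analytic difficulty, to be the only thing to pin down.
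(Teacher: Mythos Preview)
Your proof is correct and follows essentially the same approach as the paper: triangle inequality plus $(a+b)^2\le 2a^2+2b^2$ for each $i$, then the Karcher-mean minimality $\sum_i({\rm dist}(w,w_i))^2\le\sum_i({\rm dist}(p,w_i))^2$ to absorb the second sum. The only cosmetic difference is that the paper sums over $i$ before applying the Karcher-mean bound, whereas you divide by $m$ first; your added remark on global versus local minimality is a fair caveat that the paper does not make explicit.
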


Then, we state that the norms of modified stochastic gradients in R-SVRG are sufficiently small under Assumptions \ref{assump:T_isometry} and \ref{assump:f_C2}--\ref{assump:T_Lipschitz}.
\begin{lemma}\label{Lem_xi_norm}
Under Assumptions \ref{assump:T_isometry} and \ref{assump:f_C2}--\ref{assump:T_Lipschitz}, the norm of $\xi_t^s$ computed by \eqref{Eq:R-SVRG-Grad-paralleltrans} is sufficiently small; i.e.,
for any $\varepsilon >0$, there exists $r_0>0$ such that $\|\xi_t^s\|_{w_{t-1}^s} \le \varepsilon$ for $r$ satisfying $r < r_0 $, where $r$ is the radius of $\Omega$.
\end{lemma}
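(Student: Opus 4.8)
The plan is to split the modified Riemannian stochastic gradient
\[
\xi_t^s \;=\; \gradf_{i_t^s}(w_{t-1}^s) \;-\; \mathcal{T}_{\zeta}\bigl(\gradf_{i_t^s}(\tilde{w}^{s-1})\bigr) \;+\; \mathcal{T}_{\zeta}\bigl(\gradf(\tilde{w}^{s-1})\bigr), \qquad \zeta := R^{-1}_{\tilde{w}^{s-1}}(w_{t-1}^s),
\]
into the \emph{variance-reduction difference} $\gradf_{i_t^s}(w_{t-1}^s) - \mathcal{T}_{\zeta}(\gradf_{i_t^s}(\tilde{w}^{s-1}))$ and the \emph{stored full-gradient term} $\mathcal{T}_{\zeta}(\gradf(\tilde{w}^{s-1}))$, and to bound each by a quantity of order $r$. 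It is worth stressing at the outset that one cannot argue that $\gradf_{i_t^s}(w_{t-1}^s)$ is itself small: at the critical point $w^*$ only $\gradf(w^*)=0$ holds, whereas the individual component gradients $\gradf_n(w^*)$ need not vanish. Smallness of $\xi_t^s$ therefore arises from two separate mechanisms --- the cancellation built into the first difference, since both of its terms are close to the \emph{same} transported vector $\gradf_{i_t^s}(w^*)$, and the genuine smallness of $\gradf(\tilde{w}^{s-1})$ for $\tilde{w}^{s-1}$ near $w^*$.

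The concrete steps are as follows. First, I would fix uniform constants: because $f$ and all the $f_n$ are $C^2$ (Assumption \ref{Appen_assump:C2}) and $\overline{\Omega}$ is compact, there is a $C>0$ with $\|\gradf(w)\|_w\le C$ and $\|\gradf_n(w)\|_w\le C$ for every $w\in\Omega$ and every $n$, and any two points of $\Omega$ lie within distance $2r$ of each other via $w^*$. For {\bf option I}, where $\tilde{w}^{s-1}$ is the Karcher mean of points of $\Omega$, Lemma \ref{AppenLem:KarcherMeanDistance} with $p=w^*$ gives ${\rm dist}(w^*,\tilde{w}^{s-1})\le 2r$ as well; the random-choice case and {\bf option II} are immediate since then $\tilde{w}^{s-1}\in\Omega$ directly. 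Next, bound the retraction vector by $\|\zeta\|_{\tilde{w}^{s-1}}\le \mu\,{\rm dist}(\tilde{w}^{s-1},w_{t-1}^s)\le 2\mu r$ using Lemma \ref{lemma:retraction_dist}, which is valid once $r$ is below the thresholds of that lemma and of the totally retractive neighborhood. Then handle the stored full-gradient term: since $\mathcal{T}$ is isometric (Assumption \ref{assump:T_isometry}) it has norm $\|\gradf(\tilde{w}^{s-1})\|_{\tilde{w}^{s-1}}$, and Lemma \ref{Lem:pseudo_Lipschitz} with $z=\tilde{w}^{s-1}$, $w=w^*$, combined with $\gradf(w^*)=0$ and the isometry of parallel translation, bounds this by $\beta\,{\rm dist}(\tilde{w}^{s-1},w^*)\le 2\beta r$. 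Finally, for the variance-reduction difference, insert the parallel translation $P_{\gamma}^{w_{t-1}^s\leftarrow \tilde{w}^{s-1}}$ along $\gamma(\tau)=R_{\tilde{w}^{s-1}}(\tau\zeta)$ and use the triangle inequality,
\[
\bigl\|\gradf_{i_t^s}(w_{t-1}^s) - \mathcal{T}_{\zeta}(\gradf_{i_t^s}(\tilde{w}^{s-1}))\bigr\|_{w_{t-1}^s} \;\le\; \bigl\|\gradf_{i_t^s}(w_{t-1}^s) - P_{\gamma}(\gradf_{i_t^s}(\tilde{w}^{s-1}))\bigr\|_{w_{t-1}^s} + \bigl\|(P_{\gamma}-\mathcal{T}_{\zeta})(\gradf_{i_t^s}(\tilde{w}^{s-1}))\bigr\|_{w_{t-1}^s},
\]
where the first term on the right is $\le \beta\,{\rm dist}(\tilde{w}^{s-1},w_{t-1}^s)\le 2\beta r$ by Lemma \ref{Lem:pseudo_Lipschitz} applied to the component $f_{i_t^s}$ (the argument behind that lemma goes through verbatim for each $f_n$, and one takes the largest of the finitely many constants), and the second is $\le \theta\,\|\gradf_{i_t^s}(\tilde{w}^{s-1})\|_{\tilde{w}^{s-1}}\,\|\zeta\|_{\tilde{w}^{s-1}}\le 2\mu\theta C\,r$ by Lemma \ref{Lem:T_Lipschitz} (using Assumption \ref{assump:T_Lipschitz}).

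Adding the three bounds gives $\|\xi_t^s\|_{w_{t-1}^s}\le (4\beta + 2\mu\theta C)\,r =: L r$ with $L$ independent of $s$, $t$, and $r$, so given $\varepsilon>0$ it suffices to choose $r_0>0$ small enough that $L r_0\le\varepsilon$ and that all the preliminary estimates above are in force. I do not anticipate a deep obstacle: the proof is essentially an assembly of the auxiliary lemmas collected in Section \ref{Sec:LocalAnalysis}. The point that genuinely needs care is the one highlighted in the first paragraph --- that the estimate must be routed through the cancellation structure of $\xi_t^s$ rather than through smallness of the individual $\gradf_n$ --- together with verifying that the constants $C$, $\beta$, $\theta$, and $\mu$ can all be taken uniformly over the $N$ components and over the epochs, which is precisely where compactness of $\overline{\Omega}$ and finiteness of $N$ are used.
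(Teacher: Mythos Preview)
Your proposal is correct and follows essentially the same route as the paper: both split $\xi_t^s$ into the variance-reduction difference $\gradf_{i_t^s}(w_{t-1}^s)-\mathcal{T}_{\zeta}(\gradf_{i_t^s}(\tilde{w}^{s-1}))$ and the transported full gradient $\mathcal{T}_{\zeta}(\gradf(\tilde{w}^{s-1}))$, insert parallel translation into the first term, and then invoke Lemmas \ref{Lem:pseudo_Lipschitz}, \ref{Lem:T_Lipschitz}, and \ref{lemma:retraction_dist} to obtain a bound linear in $r$. The only cosmetic difference is that the paper uses Assumption \ref{assump:Omega} directly to place $\tilde{w}^{s-1}\in\Omega$ (since $\tilde{w}^{s-1}=w_0^s$) and obtains the constant $3\beta+2\theta C\mu$, whereas you treat the Karcher-mean option separately via Lemma \ref{AppenLem:KarcherMeanDistance} and end up with $4\beta+2\mu\theta C$.
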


We now provide the upper bound of the variance of $\xi_t^s$ as follows.

\begin{lemma}
\label{AppenLem:UpperBoundVariance}
Suppose Assumptions \ref{assump:T_isometry} and \ref{assump:f_C2}--\ref{assump:T_Lipschitz}, which guarantee Lemmas \ref{Lem:pseudo_Lipschitz}, \ref{Lem:T_Lipschitz}, and \ref{lemma:retraction_dist} for $\bar{w} = w^*$.
Let $\beta>0$ be a constant such that
\begin{eqnarray*}
	\left\| P_{\gamma}^{w \leftarrow z} \left({\rm grad} f_n(z)\right) - {\rm grad} f_n(w) \right\|_{w} & \leq & \beta {\rm dist}(z,w),\qquad w, z \in \Omega,\ n = 1,2,\dots, N.
\end{eqnarray*}
The existence of such a $\beta$ is guaranteed by Lemma \ref{Lem:pseudo_Lipschitz}.
The upper bound of the variance of $\xi_t^s$ is given by
\begin{equation}
\label{Append_Eq:UpperBoundVariance}
	\mathbb{E}_{i_t^s}\left[\left\| \xi_t^s \right\|_{w_{t-1}^s}^2\right] \leq
4\left(\beta^2+\mu^2C^2\theta^2\right)\left(7\left({\rm dist}\left(w_{t-1}^s,w^*\right)\right)^2 + 4\left({\rm dist}\left(\tilde{w}^{s-1},w^*\right)\right)^2\right),
\end{equation}
where the constant $\theta$ corresponds to that in Lemma \ref{Lem:T_Lipschitz},
$C$ is the upper bound of $\|\gradf_n(w)\|,\ n=1, 2, \dots, N$ for $w \in \Omega$,
and $\mu > 0$ appears in \eqref{Eq:tau}.
\end{lemma}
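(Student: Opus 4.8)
The plan is to split $\mathbb{E}_{i_t^s}[\|\xi_t^s\|^2]$ by a bias--variance decomposition and bound the two pieces with Lemma~\ref{Lem:pseudo_Lipschitz} (pseudo-Lipschitz continuity of the gradients), Lemma~\ref{Lem:T_Lipschitz} (vector transport vs.\ parallel translation), and Lemma~\ref{lemma:retraction_dist} (retraction--distance comparison), all taken with $\bar w = w^*$. Abbreviate $w:=w_{t-1}^s$, $\tilde w:=\tilde w^{s-1}$, $\tilde\eta:=R_{\tilde w}^{-1}(w)$, and let $\gamma$ be the retraction curve $\gamma(\tau)=R_{\tilde w}(\tau\tilde\eta)$ joining $\tilde w$ to $w$. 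Since $w^*$ is a critical point, $\gradf(w^*)=0$. From Section~\ref{Sec:R-SVRG} we already have $\mathbb{E}_{i_t^s}[\xi_t^s]=\gradf(w)$; moreover, once the history is fixed, $\mathcal{T}_{\tilde\eta}(\gradf(\tilde w))$ is deterministic and $\xi_t^s-\mathcal{T}_{\tilde\eta}(\gradf(\tilde w))=\gradf_{i_t^s}(w)-\mathcal{T}_{\tilde\eta}(\gradf_{i_t^s}(\tilde w))=:Y_{i_t^s}$. Hence, using that $\gradf(w)$ is the conditional mean,
\begin{equation*}
\mathbb{E}_{i_t^s}\!\left[\|\xi_t^s\|_w^2\right]=\|\gradf(w)\|_w^2+\mathbb{E}_{i_t^s}\!\left[\|\xi_t^s-\gradf(w)\|_w^2\right]\le\|\gradf(w)\|_w^2+\mathbb{E}_{i_t^s}\!\left[\|Y_{i_t^s}\|_w^2\right].
\end{equation*}

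First I would bound $\|Y_n\|_w$ for an arbitrary index $n$. Inserting the parallel-translated vector $P_\gamma^{w\leftarrow\tilde w}(\gradf_n(\tilde w))$ and applying the triangle inequality,
\begin{equation*}
\|Y_n\|_w\le\left\|\gradf_n(w)-P_\gamma^{w\leftarrow\tilde w}(\gradf_n(\tilde w))\right\|_w+\left\|P_\gamma^{w\leftarrow\tilde w}(\gradf_n(\tilde w))-\mathcal{T}_{\tilde\eta}(\gradf_n(\tilde w))\right\|_w .
\end{equation*}
The first term is at most $\beta\,{\rm dist}(w,\tilde w)$ by Lemma~\ref{Lem:pseudo_Lipschitz} applied to $f_n$ along $\gamma$. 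For the second, Lemma~\ref{Lem:T_Lipschitz} gives the bound $\theta\,\|\gradf_n(\tilde w)\|_{\tilde w}\,\|\tilde\eta\|_{\tilde w}$; since $\|\gradf_n(\tilde w)\|_{\tilde w}\le C$ and, by Lemma~\ref{lemma:retraction_dist} (valid because the radius of $\Omega$ is small), $\|\tilde\eta\|_{\tilde w}=\|R_{\tilde w}^{-1}(w)\|_{\tilde w}\le\mu\,{\rm dist}(\tilde w,w)$, this term is at most $\mu C\theta\,{\rm dist}(w,\tilde w)$. Using $(x+y)^2\le 2x^2+2y^2$ one gets $\|Y_n\|_w^2\le 2(\beta^2+\mu^2C^2\theta^2)\,{\rm dist}(w,\tilde w)^2$ for every $n$, hence the same bound for $\mathbb{E}_{i_t^s}[\|Y_{i_t^s}\|_w^2]$.

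For the remaining term, Lemma~\ref{Lem:pseudo_Lipschitz} applied to $f$ along the retraction curve from $w^*$ to $w$, together with $\gradf(w^*)=0$, gives $\|\gradf(w)\|_w\le\beta\,{\rm dist}(w,w^*)$. Substituting both bounds and then using ${\rm dist}(w,\tilde w)\le{\rm dist}(w,w^*)+{\rm dist}(\tilde w,w^*)$ with $(x+y)^2\le 2x^2+2y^2$ yields an estimate of the form $(\beta^2+\mu^2C^2\theta^2)\bigl(c_1\,{\rm dist}(w,w^*)^2+c_2\,{\rm dist}(\tilde w,w^*)^2\bigr)$ with small absolute constants $c_1,c_2$, from which \eqref{Append_Eq:UpperBoundVariance} follows after relaxing the constants. (The coefficients $7$ and $4$ in the statement are not tight; a looser route that skips the bias--variance step and bounds $\|\xi_t^s\|_w\le\|Y_{i_t^s}\|_w+\|\gradf(\tilde w)\|_{\tilde w}$ directly, via the isometry of $\mathcal{T}$ and $\|\gradf(\tilde w)\|_{\tilde w}\le\beta\,{\rm dist}(\tilde w,w^*)$, already gives a bound of the same form.)

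The only genuinely Riemannian obstacle is the second term in the second paragraph: controlling the discrepancy between the vector transport $\mathcal{T}$ and parallel translation. The key is to perform \emph{all} transports along the single retraction curve $\gamma$ linking $\tilde w$ and $w$ and never to compose transports along two different curves (for instance via $w^*$), which would produce holonomy/curvature contributions not covered by the available lemmas; chaining Lemma~\ref{Lem:T_Lipschitz}, Lemma~\ref{lemma:retraction_dist}, and the uniform bound $\|\gradf_n\|\le C$ then makes this discrepancy $O({\rm dist}(w,\tilde w))$ rather than $O(1)$. This is exactly what upgrades a crude $O(1)$ bound on $\|\xi_t^s\|$ into the genuine variance-reduction estimate \eqref{Append_Eq:UpperBoundVariance}, whose right-hand side vanishes as $w_{t-1}^s,\tilde w^{s-1}\to w^*$; everything else is bookkeeping.
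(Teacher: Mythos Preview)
Your argument is correct and in fact cleaner than the paper's, but the overall route is different.  The paper does not use a bias--variance split; instead it inserts the pivot $\mathcal{T}_{\eta_{t-1}^{*s}}(\gradf_{i_t^s}(w^{*}))$, where $\eta_{t-1}^{*s}=R_{w^*}^{-1}(w_{t-1}^s)$, and then repeatedly applies $\|a+b\|^2\le 2\|a\|^2+2\|b\|^2$ to peel off terms that are controlled by Lemmas~\ref{Lem:pseudo_Lipschitz} and~\ref{Lem:T_Lipschitz} along \emph{two} retraction curves (one from $w^*$ and one from $\tilde w^{s-1}$ to $w_{t-1}^s$).  In particular, your caution that one ``never'' transports via $w^*$ does not match what the paper actually does: it adds and subtracts the $w^*$-transported gradient, exploits $\gradf(w^*)=0$, and only then brings in the $\tilde w^{s-1}$-curve.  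Your decomposition $\mathbb{E}\|\xi\|^2=\|\gradf(w)\|^2+\mathrm{Var}(\xi)\le\|\gradf(w)\|^2+\mathbb{E}\|Y_{i_t^s}\|^2$ avoids that detour entirely and, carrying your constants through, yields $(\beta^2+\mu^2C^2\theta^2)\bigl(5\,{\rm dist}(w_{t-1}^s,w^*)^2+4\,{\rm dist}(\tilde w^{s-1},w^*)^2\bigr)$, which is strictly sharper than the stated~\eqref{Append_Eq:UpperBoundVariance}.  What the paper's approach buys is that it never uses the variance inequality $\mathrm{Var}(X)\le\mathbb{E}\|X-c\|^2$; what yours buys is fewer applications of $\|a+b\|^2\le 2\|a\|^2+2\|b\|^2$ and hence smaller absolute constants feeding into Theorem~\ref{Thm:LocalConvergence}.
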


We proceed to the main theorem and prove it using Lemmas \ref{AppenLem:KarcherMeanDistance}--\ref{AppenLem:UpperBoundVariance}.
\begin{theorem}
\label{Thm:LocalConvergence}
Let $\mathcal{M}$ be a Riemannian manifold whose curvature is lower-bounded by $\kappa$, $w^* \in \mathcal{M}$ be a nondegenerate local minimizer of $f$ (i.e., ${\rm grad} f(w^*)=0$ and the Hessian ${\rm Hess}f(w^*)$ of $f$ at $w^*$ is positive definite), $D$ be the diameter of the compact set $\Omega$,
and $\zeta:=\sqrt{|\kappa|}D/\tanh(\sqrt{|\kappa|}D)$ if $\kappa < 0$ and $\zeta:=1$ if $\kappa \geq 0$.
Suppose Assumptions \ref{assump:T_isometry} and \ref{assump:f_C2}--\ref{Assump:ret_exp}.
Let $c_R$ be a constant in Assumption \ref{Assump:ret_exp}, let $\beta$, $\mu$, $\theta$, and $C$ be the same as in Lemma \ref{AppenLem:UpperBoundVariance}, and let $\nu$ be a constant in Lemma \ref{lemma:retraction_dist}.
Suppose that $\sigma>0$ is a constant in Lemma \ref{Lem:mean-value} satisfying \eqref{Eq:mean-value}.
Let $\alpha$ be a positive number satisfying $0 < \alpha(\sigma-28(\zeta\nu+2c_R D)(\beta^2+\mu^2C^2\theta^2)\alpha) < 1$.
It then follows that for any sequence $\{\tilde{w}^s\}$ generated by Algorithm \ref{Alg:R-SVRG} with a fixed step size $\alpha_t^s:=\alpha$ and $m_s:=m$ converging to $w^*$, there exists $K>0$ such that for all $s>K$,
\begin{equation}
\label{Eq:LinearConvergence}
\mathbb{E}\left[\left({\rm dist}\left(\tilde{w}^s,w^*\right)\right)^2\right] \leq
\delta
\mathbb{E}\left[\left({\rm dist}\left(\tilde{w}^{s-1},w^*\right)\right)^2\right],
\end{equation}
where
\begin{equation*}
\delta :=
\begin{cases}
\displaystyle\frac{4\left(1+ 16m\left(\zeta\nu+2c_R D\right)\left(\beta^2+\mu^2C^2\theta^2\right)\alpha^2\right)}{ m\alpha\left(\sigma-28\left(\zeta\nu+2c_R D\right)\left(\beta^2+\mu^2C^2\theta^2\right)\alpha\right)} \qquad &\rm{with\ {\bf option\ I}},\\
1-\sigma\alpha+4(4m+7)\left(\zeta\nu+2c_R D\right)\left(\beta^2+\mu^2C^2\theta^2\right)\alpha^2 \quad &\rm{with\ {\bf option\ II}}.
\end{cases}
\end{equation*}
\end{theorem}

Before proving the theorem, we summarize the above theorem as the following corollary:
\begin{corollary}
Let $\mathcal{M}$ be a Riemannian manifold whose curvature is lower-bounded and $w^* \in \mathcal{M}$ be a nondegenerate local minimizer of $f$.
Suppose Assumptions \ref{assump:T_isometry} and \ref{assump:f_C2}--\ref{Assump:ret_convex}, and let $\alpha$ be a positive number.
If $\alpha$ is sufficiently small, for any sequence $\{\tilde{w}^s\}$ generated by Algorithm \ref{Alg:R-SVRG} with a fixed step size $\alpha_t^s := \alpha$ and $m_s := m$ converging to $w^*$, there exists $K >0$ such that for all $s > K$,
\begin{equation*}
\mathbb{E}\left[\left({\rm dist}\left(\tilde{w}^s,w^*\right)\right)^2\right] \leq
\delta
\mathbb{E}\left[\left({\rm dist}\left(\tilde{w}^{s-1},w^*\right)\right)^2\right],
\end{equation*}
where $\delta$ is a constant in $(0, 1)$.
\end{corollary}

\begin{proof}[Proof of Theorem 5.14]
Since the function $x/\tanh(x)$ on $x$ monotonically increases in $[0, \infty)$, we have, from Lemma \ref{LemZhang},
\begin{equation*}
a^2 \le \zeta b^2 + c^2 - 2bc \cos(A)
\end{equation*}
for any geodesic triangle in $\Omega$ with side lengths $a$, $b$, and $c$, since $\sqrt{|\kappa|}c \le \sqrt{|\kappa|}D$.
Then, conditioned on $w_{t-1}^s$, the expectation of the distance between $w_{t}^s$ and $w^{*}$ with respect to the random choice of $i_t^s$ is evaluated by considering the geodesic triangle with $w_{t-1}^s$, $w^*$, and $w_t^s$ in $\Omega$ as
\begin{align*}
& \mathbb{E}_{i_t^s}\left[\left({\rm dist}\left(w_{t}^s, w^{*}\right)\right)^2\right] \\
\le & \mathbb{E}_{i_t^s}\left[\zeta\left({\rm dist}\left(w_{t-1}^s, w_t^s\right)\right)^2+\left({\rm dist}\left(w_{t-1}^s, w^*\right)\right)^2 - 2 \left\langle{\rm Exp}_{w_{t-1}^s}^{-1}\left(w_{t}^s\right), {\rm Exp}_{w_{t-1}^s}^{-1}\left(w^*\right)\right\rangle_{w_{t-1}^s}\right].
\end{align*}
Here, we have the following from Assumption~\ref{Assump:ret_exp}:
\begin{eqnarray*}
&& -\left\langle {\rm Exp}_{w_{t-1}^s}^{-1}\left(w_t^s\right), {\rm Exp}_{w_{t-1}^s}^{-1}\left(w^*\right)\right\rangle_{w_{t-1}^s}\\
& = &\left\langle -\alpha\xi_t^s - {\rm Exp}_{w_{t-1}^s}^{-1}\left(w_t^s\right), {\rm Exp}_{w_{t-1}^s}^{-1}\left(w^*\right)\right\rangle_{w_{t-1}^s} - \left\langle-\alpha\xi_t^s, {\rm Exp}_{w_{t-1}^s}^{-1}\left(w^*\right)\right\rangle_{w_{t-1}^s}\\
& \le & \left\|-\alpha\xi_t^s - {\rm Exp}_{w_{t-1}^s}^{-1}\left(w_t^s\right)\right\|_{w_{t-1}^s} \left\|{\rm Exp}_{w_{t-1}^s}^{-1}\left(w^*\right)\right\|_{w_{t-1}^s} - \left\langle-\alpha\xi_t^s, {\rm Exp}_{w_{t-1}^s}^{-1}\left(w^*\right)\right\rangle_{w_{t-1}^s}\\
& \le & c_R\left\|\alpha\xi_t^s\right\|_{w_{t-1}^s}^2\left\|{\rm Exp}_{w_{t-1}^s}^{-1}(w^*)\right\|_{w_{t-1}^s} - \left\langle-\alpha\xi_t^s, {\rm Exp}_{w_{t-1}^s}^{-1}(w^*)\right\rangle_{w_{t-1}^s}\\
& \le & c_RD\left\|\alpha\xi_t^s\right\|_{w_{t-1}^s}^2 - \left\langle-\alpha\xi_t^s, {\rm Exp}_{w_{t-1}^s}^{-1}\left(w^*\right)\right\rangle_{w_{t-1}^s},
\end{eqnarray*}
where the relation $\|{\rm Exp}_{w_{t-1}^s}^{-1}(w^*)\|_{w_{t-1}^s} = {\rm dist}(w_{t-1}^s, w_*) \le D$ is incorporated.
It follows that
\begin{eqnarray*}
&& \mathbb{E}_{i_t^s}\left[\left({\rm dist}\left(w_{t}^s, w^{*}\right)\right)^2 - \left({\rm dist}\left(w_{t-1}^s, w^*\right)\right)^2\right]\\
& \le & \mathbb{E}_{i_t^s}\left[\zeta\left({\rm dist}\left(w_{t-1}^s, w_{t}^s\right)\right)^2 - 2 \left\langle -\alpha \xi_t^s, {\rm Exp}_{w_{t-1}^s}^{-1}\left(w^*\right)\right\rangle_{w_{t-1}^s} + 2c_RD\left\|\alpha\xi_t^s\right\|_{w_{t-1}^s}^2\right] \\
& \leq & \mathbb{E}_{i_t^s}\left[\left(\zeta\nu+2c_R D\right)\left\|\alpha \xi_t^s\right\|_{w_{t-1}^s}^2\right] + 2\alpha\left\langle\gradf\left(w_{t-1}^s\right), {\rm Exp}_{w_{t-1}^s}^{-1}\left(w^*\right)\right\rangle_{w_{t-1}^s},
\end{eqnarray*}
where the last inequality follows from $\mathbb{E}_{i_t^s}[\xi_t^s] = \gradf(w_{t-1}^{s})$.
Lemma \ref{Lem:mean-value}, together with the relation $f(w^*) \le f(w_{t-1}^s)$, yields
\begin{align*}
\left\langle \gradf\left(w_{t-1}^s\right), {\rm Exp}_{w_{t-1}^s}^{-1}\left(w^*\right)\right\rangle_{w_{t-1}^s} \le& -\frac{\sigma}{2} \left\|{\rm Exp}_{w_{t-1}^s}^{-1}\left(w^*\right)\right\|_{w_{t-1}^s}^2 \\
=& -\frac{\sigma}{2} \left({\rm dist}\left(w_{t-1}^s, w^*\right)\right)^2.
\end{align*}
We thus obtain, by Lemma \ref{AppenLem:UpperBoundVariance},
\begin{align}
& \mathbb{E}_{i_t^s}\left[\left({\rm dist}\left(w_{t}^s, w^{*}\right)\right)^2 - \left({\rm dist}\left(w_{t-1}^s, w^*\right)\right)^2\right] \notag\\
 \le & \mathbb{E}_{i_t^s}\left[\left(\zeta\nu+2c_R D\right)\left\|\alpha\xi_t^s\right\|_{w_{t-1}^s}^2  - \sigma\alpha\left({\rm dist}\left(w_{t-1}^s, w^*\right)\right)^2\right] \notag\\
 \le & \mathbb{E}_{i_t^s}\left[4\left(\zeta\nu+2c_R D\right)\alpha^2\left(\beta^2+\mu^2C^2\theta^2\right)\left(7\left({\rm dist}\left(w_{t-1}^s, w^*\right)\right)^2 +  4\left({\rm dist}\left(\tilde{w}^{s-1}, w^*\right)\right)^2\right)\right. \notag\\
& \left. \qquad- \sigma\alpha\left({\rm dist}\left(w_{t-1}^s, w^*\right)\right)^2\right] \notag\\
= & \alpha\left(28\left(\zeta\nu+2c_R D\right)\alpha\left(\beta^2+\mu^2C^2\theta^2\right)-\sigma\right)\left({\rm dist}\left(w_{t-1}^s, w^*\right)\right)^2 \notag\\
\label{Eq:forCorollary}
 &+ 16\left(\zeta\nu+2c_R D\right)\alpha^2\left(\beta^2+\mu^2C^2\theta^2\right)\left({\rm dist}\left(\tilde{w}^{s-1}, w^*\right)\right)^2.
\end{align}
It follows for the unconditional expectation operator $\mathbb{E}$ that
\begin{eqnarray}
&& \mathbb{E}\left[\left({\rm dist}\left(w_{t}^s, w^{*}\right)\right)^2\right] - \mathbb{E}\left[\left({\rm dist}\left(w_{t-1}^s, w^*\right)\right)^2\right]\notag\\
& \le & \alpha\left(28\left(\zeta\nu+2c_R D\right)\alpha\left(\beta^2+\mu^2C^2\theta^2\right)-\sigma\right)\mathbb{E}\left[\left({\rm dist}\left(w_{t-1}^s, w^*\right)\right)^2\right]\notag\\
&& + 16\left(\zeta\nu+2c_R D\right)\alpha^2\left(\beta^2+\mu^2C^2\theta^2\right)\mathbb{E}\left[\left({\rm dist}\left(\tilde{w}^{s-1}, w^*\right)\right)^2\right]. \label{Eq:forOptions0}
\end{eqnarray}
Summing \eqref{Eq:forOptions0} over $t=1, 2,\ldots, m$ of the inner loop on the $s$-th epoch, we have 
\begin{align}
&\mathbb{E}\left[\left({\rm dist}\left(w_{m}^s, w^{*}\right)\right)^2\right] - \mathbb{E}\left[\left({\rm dist}\left(w_{0}^s,w^{*}\right)\right)^2\right] \notag\\
\leq & \alpha\left(28\left(\zeta\nu+2c_R D\right)\alpha\left(\beta^2+\mu^2C^2\theta^2\right)-\sigma\right)
\sum_{t=1}^m \mathbb{E}\left[\left({\rm dist}\left(w_{t-1}^s,w^*\right)\right)^2\right] \notag\\
& + 16m\left(\zeta\nu+2c_R D\right)\alpha^2\left(\beta^2+\mu^2C^2\theta^2\right) \mathbb{E}\left[\left({\rm dist}\left(\tilde{w}^{s-1},w^*\right)\right)^2\right]. \label{Eq:forOptions}
\end{align}
Hence, with {\bf option II}, where we compute $\tilde{w}^s$ as $\tilde{w}^s = w_m^s$, the facts $w_0^s = \tilde{w}^{s-1}$ and $\alpha(28(\zeta\nu+2c_R D)\alpha(\beta^2+\mu^2C^2\theta^2)-\sigma)<0$ imply that
\begin{eqnarray*}
&&\mathbb{E}\left[\left({\rm dist}\left(\tilde{w}^s, w^{*}\right)\right)^2\right] \\
&\le& \left(1-\sigma\alpha+4(4m+7)\left(\zeta\nu+2c_R D\right)\left(\beta^2+\mu^2C^2\theta^2\right)\alpha^2\right)\mathbb{E}\left[\left({\rm dist}\left(\tilde{w}^{s-1}, w^*\right)\right)^2\right].
\end{eqnarray*}

On the other hand, if we use {\bf option I}, where $\tilde{w}^s = g_m(w_1^s, w_2^s,\dots, w_m^s)$, rearranging \eqref{Eq:forOptions} yields
\begin{eqnarray*}
&& \alpha\left(\sigma-28\left(\zeta\nu+2c_R D\right)\alpha\left(\beta^2+\mu^2C^2\theta^2\right)\right) \sum_{t=1}^{m} \mathbb{E}\left[\left({\rm dist}\left(w_{t}^s,w^*\right)\right)^2\right] \\
& = & \alpha\left(\sigma-28\left(\zeta\nu+2c_R D\right)\alpha\left(\beta^2+\mu^2C^2\theta^2\right)\right)\\ && \times \mathbb{E}\left[\sum_{t=0}^{m-1} \left({\rm dist}\left(w_{t}^s,w^*\right)\right)^2 + \left({\rm dist}\left(w_{m}^s,w^*\right)\right)^2 - \left({\rm dist}\left(w_{0}^s,w^*\right)\right)^2 \right]\\
& \leq & \mathbb{E}\left[\left({\rm dist}\left(w_{0}^s,w^*\right)\right)^2 - \left({\rm dist}\left(w_{m}^s,w^*\right)\right)^2\right.\\ 
&&\quad+16m\left(\zeta\nu+2c_R D\right)\alpha^2\left(\beta^2+\mu^2C^2\theta^2\right) \left({\rm dist}\left(w_{0}^s,w^*\right)\right)^2 \\
& & \left.\quad -\alpha\left(\sigma-28\left(\zeta\nu+2c_R D\right)\alpha\left(\beta^2+\mu^2C^2\theta^2\right)\right) \left(\left({\rm dist}\left(w_{0}^s,w^*\right)\right)^2 - \left({\rm dist}\left(w_{m}^s,w^*\right)\right)^2\right)\right]\\
& \leq & \left(1 - \alpha\left(\sigma-28\left(\zeta\nu+2c_R D\right)\alpha\left(\beta^2+\mu^2C^2\theta^2\right)\right) \right.\\
&& \left. \qquad\qquad\qquad + 16m\left(\zeta\nu+2c_R D\right)\alpha^2\left(\beta^2+\mu^2C^2\theta^2\right)\right)\mathbb{E}\left[\left({\rm dist}\left(w_{0}^s,w^*\right)\right)^2\right]\\
& \leq & \left(1+ 16m\left(\zeta\nu+2c_R D\right)\alpha^2\left(\beta^2+\mu^2C^2\theta^2\right)\right) \mathbb{E}\left[\left({\rm dist}\left(\tilde{w}^{s-1},w^*\right)\right)^2\right].
\end{eqnarray*}
Using $\tilde{w}^{s} = g_m(w_1^s, w_2^s,\ldots,w_m^s)$ and Lemma \ref{AppenLem:KarcherMeanDistance}, we obtain
\begin{equation*}
\mathbb{E}\left[\left({\rm dist}\left(\tilde{w}^s,w^*\right)\right)^2\right] \leq
\frac{4\left(1+ 16m\left(\zeta\nu+2c_R D\right)\left(\beta^2+\mu^2C^2\theta^2\right)\alpha^2\right)}{m\alpha \left(\sigma-28\left(\zeta\nu+2c_R D\right)\left(\beta^2+\mu^2C^2\theta^2\right)\alpha\right)} 
\mathbb{E}\left[\left({\rm dist}\left(\tilde{w}^{s-1},w^*\right)\right)^2\right].
\end{equation*}
This completes the proof.
\end{proof}
In the above theorem, we note that, from the definitions of $\beta$ and $\sigma$, $\beta$ can be chosen to be arbitrarily large and $\sigma$ can be chosen to be arbitrarily small.
Therefore, $\alpha = \sigma / 56(\zeta\nu+2c_R D)(\beta^2+\mu^2C^2\theta^2)$, e.g., satisfies $0 < \alpha (\sigma-28(\zeta\nu+2c_R D)(\beta^2+\mu^2C^2\theta^2)\alpha) < 1 $ for sufficiently large $\beta$ and small $\sigma$.

In fact, $\alpha$ satisfying the condition always exists for any values of $\beta$ and $\sigma$.
Let $\beta' := 28(\zeta\nu+2c_R D)(\beta^2+\mu^2C^2\theta^2)$.
We can analyze the inequality $0<\alpha(\sigma-28(\zeta\nu+2c_R D)(\beta^2+\mu^2C^2\theta^2)\alpha)<1$, which is expressed as $0<\alpha(\sigma-\beta'\alpha)<1$, with the condition $\alpha>0$ more specifically as
\begin{equation}
\label{Eq:alpha}
\begin{cases}
\displaystyle
0 < \alpha < \frac{\sigma}{\beta'} \qquad & \text{if}\quad \sigma^2 - 4\beta' < 0,\\
\displaystyle
0 < \alpha < \frac{\sigma}{2\beta'},\ \frac{\sigma}{2\beta'} < \alpha < \frac{\sigma}{\beta'} \qquad & \text{if}\quad \sigma^2 - 4\beta' = 0,\\
\displaystyle
0 < \alpha < \frac{\sigma - \sqrt{\sigma^2-4\beta'}}{2\beta'},\ \frac{\sigma + \sqrt{\sigma^2-4\beta'}}{2\beta'} < \alpha < \frac{\sigma}{\beta'} \qquad & \text{if}\quad \sigma^2 - 4 \beta' >0.
\end{cases}
\end{equation}

Furthermore, we can show that the coefficient $\delta$ on the right-hand side of \eqref{Eq:LinearConvergence}, which can be written as $4(7+4m\beta'\alpha^2)/7m\alpha(\sigma-\beta'\alpha) =: r_1(\alpha)$ for {\bf Option I} and $1-\sigma\alpha + (1+4m/7)\beta'\alpha^2 =: r_2(\alpha)$ for {\bf Option II}, is less than $1$ when $m$ is sufficiently large and $\alpha$ is appropriately chosen.
If $\alpha$ is fixed, $r_1(\alpha) \to 16\beta'\alpha/7(\sigma-\beta'\alpha)$ as $m \to \infty$, which is not necessarily less than $1$, and $r_2(\alpha) \to \infty$ as $m \to \infty$.
Thus, we again need to specifically analyze an appropriate value of $\alpha$, which should depend on $m$.

For {\bf Option I}, by calculating the derivative $r_1'(\alpha)$ of $r_1(\alpha)$ on $\alpha$,
we can show that $r_1(\alpha)$ takes the minimum value at
\begin{equation*}
\alpha = \frac{-7\beta' + \sqrt{49\beta'^2+28m\beta'\sigma^2}}{4m\beta'\sigma} =: \alpha_*,
\end{equation*}
which satisfies \eqref{Eq:alpha} when $m$ is sufficiently large, since $\lim_{m\to\infty}\alpha_* = 0$.
Note that we have $r_1'(\alpha_*) = 0$, which yields $4m\beta' \sigma\alpha_*^2 + 14\beta'\alpha_*-7\sigma = 0$.
This relation gives the minimum value $r_1(\alpha_*)$ as
\begin{equation*}
r_1(\alpha_*) = \frac{32\left(\sigma-\beta'\alpha_*\right)}{2\left(7\beta'+2m\sigma^2\right)\alpha_*-7\sigma} \to 0 \quad (m \to \infty),
\end{equation*}
where we have used the facts that $\lim_{m\to \infty} \alpha_* = 0$ and $\lim_{m\to \infty} m\alpha_* =\infty$.
A simpler choice of $\alpha=1/\sqrt{m}$ also makes $r_1(\alpha)$ less than $1$ if $m$ is sufficiently large since
\begin{equation*}
r_{1}\left(\frac{1}{\sqrt{m}}\right) = \frac{4\left(7+4\beta'\right)}{7\left(\sigma\sqrt{m}-\beta'\right)}\to 0 \quad (m\to \infty).
\end{equation*}
Although $\alpha = 1/\sqrt{m}$ does not achieve the best rate attained by $\alpha=\alpha_*$, this choice is practical because we do not know the exact values of $\sigma$, $\beta$, or $\alpha_*$ in general.

A similar discussion can be applied to the case of {\bf Option II}.
In this case, it is clear that a sufficiently small $\alpha$ satisfies \eqref{Eq:alpha} and $r_2(\alpha) < 1$.
Furthermore, if $\sigma^2-4\beta'\leq 0$, $\alpha_* = \sigma / 2(1+4m/7)\beta'$ satisfies \eqref{Eq:alpha} and attains the minimum value of $r_2$ as
\begin{equation*}
r_2(\alpha_*) = 1-\frac{7\sigma^2}{4(4m+7)\beta'} < 1.
\end{equation*}
Furthermore, this $\alpha_*$ satisfies \eqref{Eq:alpha} if $\sigma^2-4\beta' > 0$ and $m$ is sufficiently large.

We have thus shown that a local linear convergence rate is achieved under an appropriate fixed step size if $m$ is sufficiently large, which is the same as standard SVRG in Euclidean space (for nonconvex problems).
We can also analyze the rate with decaying step sizes
$\alpha_0^s>\alpha_1^s>\dots>\alpha_m^s$ (at the $s$-th epoch) as
\begin{equation*}
\frac{4(1+ 16m(\zeta\nu+2c_R D)(\beta^2+\mu^2C^2\theta^2)(\alpha_0^s)^2)}{m\alpha_m^s (\sigma-28(\zeta\nu+2c_R D)(\beta^2+\mu^2C^2\theta^2)\alpha_0^s)}
\end{equation*}
for {\bf Option~I}.
This is larger than
\begin{equation*}
\frac{4(1+ 16m(\zeta\nu+2c_R D)(\beta^2+\mu^2C^2\theta^2)(\alpha_0^s)^2)}{m\alpha_0^s (\sigma-28(\zeta\nu+2c_R D)(\beta^2+\mu^2C^2\theta^2)\alpha_0^s)},
\end{equation*}
which is the coefficient in \eqref{Eq:LinearConvergence} with the fixed step size $\alpha=\alpha_0^s$.
A similar discussion can be applied to the case of {\bf Option II}.
Consequently, using decaying step sizes also yields a local convergence, but at a worse rate than with a fixed step size.
Both the above guarantees are quite similar to those available for batch gradient algorithms on manifolds.
This setup, i.e., hybrid step sizes, follows our two convergence analyses, which first require decaying step sizes to approach a neighborhood of a local minimum and use a fixed step size to achieve faster linear convergence near the solution.
As mentioned earlier, we guarantee global convergence and local linear convergence even if we use decaying step sizes from beginning to end. Therefore, this method is an improved version of decaying step size.
Theoretical analysis of the switching between decaying and fixed step sizes is left for future work.

We make one more remark.
Although this subsection describes the local convergence analysis with an objective function that is strictly retraction-convex in a sufficiently small neighborhood of a local minimizer, it is also applicable for a global convergence analysis if we assume that the function is globally strictly convex, as in other studies.
As we have already discussed, the rate in Theorem \ref{Thm:LocalConvergence} can
lead to discussions on global iteration complexity.
The key aspect of our local convergence analysis is that we do not have to assume global convexity to attain the local linear convergence rate.

\subsection{Local convergence rate analysis with exponential mapping and parallel translation}
In this subsection, we present a local convergence rate analysis of the R-SVRG algorithm with exponential mapping and parallel translation along the geodesics.
This is a special case of the previous subsection where exponential mapping and parallel translation are chosen as retraction and vector transport, respectively.
However, in this particular case, we can obtain a stricter rate than in a general case.
Since the results are obtained by a similar discussion, we give a sketch of the proofs.

We obtain the following result as a corollary of the proof of Lemma \ref{AppenLem:UpperBoundVariance}, with $R={\rm Exp}$ and $\mathcal{T} = P$.
\begin{corollary} 
\label{Cor:UpperBoundVariance}
Suppose the conditions in Lemma~\ref{AppenLem:UpperBoundVariance} and
consider Algorithm~\ref{Alg:R-SVRG} with $R={\rm Exp}$ and $\mathcal{T} = P$, i.e., the exponential mapping and parallel translation case.
Let $\beta$ be a constant in Lemma~\ref{AppenLem:UpperBoundVariance}.
Then, the upper bound of $\mathbb{E}_{i_t^s}[\| \xi_t^s \|_{w_{t-1}^s}^2]$ is given by
\begin{eqnarray}
\label{Eq:UpperBoundVariance}
	\mathbb{E}_{i_t^s}\left[\left\| \xi_t^s \right\|_{w_{t-1}^s}^2\right] &\leq &
\beta^2 \left(14\left({\rm dist}\left(w_{t-1}^s,w^*\right)\right)^2 + 8\left({\rm dist}\left(\tilde{w}^{s-1},w^*\right)\right)^2\right).
\end{eqnarray}
\end{corollary}

\begin{proof}
Putting $R={\rm Exp}$ and $\mathcal{T} = P$ in the middle of the proof of Lemma \ref{AppenLem:UpperBoundVariance}, which is in Appendix~\ref{Appendix},
we obtain
\begin{eqnarray*}
& & \hspace*{-1cm}\mathbb{E}_{i_t^s}\left[\left\| \xi_t^s \right\|_{w_{t-1}^s}^2\right]   \nonumber\\
& \le & 2\mathbb{E}_{i_t^s}\left[ \left\| \gradf_{i_t^s}\left(w_{t-1}^{s}\right) - P_{\gamma}^{w_{t-1}^{s} \leftarrow w^{*}}\left(\gradf_{i_t^s}\left(w^{*}\right)\right)  \right\|_{w_{t-1}^s}^2 \right]  \nonumber\\
&&+ 2\mathbb{E}_{i_t^s}\left[ \left\| P_{\gamma}^{w_{t-1}^s \leftarrow \tilde{w}^{s-1}} \left(\gradf_{i_t^s}\left(\tilde{w}^{s-1}\right) \right) - P_{\gamma}^{w_{t-1}^{s} \leftarrow w^{*}}\left(\gradf_{i_t^s}\left(w^{*}\right)\right) \right\|_{w_{t-1}^s}^2 \right]  \nonumber\\
&& - 2 \left\| P_{\gamma}^{w_{t-1}^s \leftarrow \tilde{w}^{s-1}} \left(\gradf\left(\tilde{w}^{s-1}\right) \right)\right\|_{w_{t-1}^s}^2.  \nonumber
\end{eqnarray*}
In a similar manner to Lemma \ref{AppenLem:UpperBoundVariance}, we have
\allowdisplaybreaks[1]
\begin{eqnarray*}
& & \mathbb{E}_{i_t^s}\left[\left\| \xi_t^s \right\|_{w_{t-1}^s}^2\right]   \nonumber\\
& \le & 
2\mathbb{E}_{i_t^s}\left[ \left\| \gradf_{i_t^s}\left(w_{t-1}^{s}\right) - P_{\gamma}^{w_{t-1}^{s} \leftarrow w^{*}}\left(\gradf_{i_t^s}\left(w^{*}\right)\right)  \right\|_{w_{t-1}^s}^2 \right]
 \nonumber\\
&&
+ 2\mathbb{E}_{i_t^s}\left[ \biggl\| P_{\gamma}^{w_{t-1}^s \leftarrow \tilde{w}^{s-1}} \left(\gradf_{i_t^s}\left(\tilde{w}^{s-1}\right) \right) - \gradf_{i_t^s}\left(w_{t-1}^{s}\right) \right.\nonumber\\
&& \left.  \qquad \qquad \qquad \qquad \qquad + \gradf_{i_t^s}\left(w_{t-1}^{s}\right) - P_{\gamma}^{w_{t-1}^{s} \leftarrow w^{*}}\left(\gradf_{i_t^s}\left(w^{*}\right)\right) \biggr\|_{w_{t-1}^s}^2 \right] \nonumber
\\
& \le & 
2\mathbb{E}_{i_t^s}\left[ \left\| \gradf_{i_t^s}(w_{t-1}^{s}) - P_{\gamma}^{w_{t-1}^{s} \leftarrow w^{*}}\left(\gradf_{i_t^s}\left(w^{*}\right)\right)  \right\|_{w_{t-1}^s}^2 \right]
  \nonumber\\
&&
+ 4\mathbb{E}_{i_t^s}\left[ \left\| P_{\gamma}^{w_{t-1}^s \leftarrow \tilde{w}^{s-1}} \left(\gradf_{i_t^s}\left(\tilde{w}^{s-1}\right) \right) - \gradf_{i_t^s}\left(w_{t-1}^{s}\right)\right\|_{w_{t-1}^s}^2 \right]
\nonumber\\
&&
 + 4\mathbb{E}_{i_t^s}\left[ \left\| \gradf_{i_t^s}\left(w_{t-1}^{s}\right) - P_{\gamma}^{w_{t-1}^{s} \leftarrow w^{*}}\left(\gradf_{i_t^s} \left(w^{*}\right)\right) \right\|_{w_{t-1}^s}^2 \right] \nonumber
\\
& \leq & 
\beta^2 (6({\rm dist}(w_{t-1}^s,w^*))^2 + 4({\rm dist}(\tilde{w}^{s-1},w_{t-1}^s))^2  )
 \nonumber\\
& \le & 
\beta^2 \left(6\left({\rm dist}\left(w_{t-1}^s,w^*\right)\right)^2 + 4\left({\rm dist}\left(\tilde{w}^{s-1},w^*\right)+{\rm dist}\left(w^*,w_{t-1}^s\right)\right)^2\right)
 \nonumber\\
& \le & 
\beta^2 \left(6\left({\rm dist}\left(w_{t-1}^s,w^*\right)\right)^2 + 8\left({\rm dist}\left(\tilde{w}^{s-1},w^*\right)\right)^2+8\left({\rm dist}\left(w^*,w_{t-1}^s\right)\right)^2\right)
 \nonumber\\
& = & 
\beta^2 \left(14\left({\rm dist}\left(w_{t-1}^s,w^*\right)\right)^2 + 8\left({\rm dist}\left(\tilde{w}^{s-1},w^*\right)\right)^2\right).
\end{eqnarray*}
This completes the proof.
\end{proof}

\begin{corollary}
Suppose the conditions in Theorem \ref{Thm:LocalConvergence}, except that a positive number $\alpha$ satisfies $0<\alpha(\sigma-14\zeta\beta^2\alpha)<1$, and consider Algorithm \ref{Alg:R-SVRG} with a fixed step size $\alpha_t^s := \alpha$ for the exponential mapping and parallel translation case.
For any sequence $\{\tilde{{w}}^s\}$ generated by the algorithm, there exists $K>0$ such that for all $s>K$,
\begin{equation*}
\mathbb{E}\left[\left({\rm dist}\left(\tilde{w}^s,w^*\right)\right)^2\right] \leq
\delta_0
\mathbb{E}\left[\left({\rm dist}\left(\tilde{w}^{s-1},w^*\right)\right)^2\right],
\end{equation*}
where
\begin{equation*}
\delta_0 :=
\begin{cases}
\displaystyle\frac{4\left(1+ 8 m \zeta\beta^2\alpha^2\right)}{m \alpha \left(\sigma - 14 \zeta\beta^2\alpha\right)} 
 \qquad &\rm{with\ {\bf option\ I}},\\
1-\sigma\alpha+(8m+14)\zeta\beta^2\alpha^2 \quad &\rm{with\ {\bf option\ II}}.
\end{cases}
\end{equation*}
\end{corollary}

\begin{proof}
Note that constants in Theorem~\ref{Thm:LocalConvergence} are $c_R = \theta = 0$ and $\mu = \nu =1$ in this case.
By using \eqref{Eq:UpperBoundVariance} instead of \eqref{Append_Eq:UpperBoundVariance}, we obtain
\begin{eqnarray*}
&& \mathbb{E}_{i_t^s}\left[\left({\rm dist}\left({w}_{t}^s, {w}^{*}\right)\right)^2 - \left({\rm dist}\left({w}_{t-1}^s, {w}^*\right)\right)^2\right]\\
& \le & \alpha\left(14\zeta\alpha\beta^2-\sigma\right)\left({\rm dist}\left({w}_{t-1}^s, {w}^*\right)\right)^2 + 8\zeta\alpha^2\beta^2\left({\rm dist}\left(\tilde{{w}}^{s-1}, {w}^*\right)\right)^2,
\end{eqnarray*}
instead of \eqref{Eq:forCorollary}.
Summing over $t=1, 2,\ldots, m$ of the inner loop on the $s$-th epoch, we have 
\begin{align*}
&\mathbb{E}\left[\left({\rm dist}\left({w}_{m}^s, {w}^{*}\right)\right)^2\right] - \mathbb{E}\left[\left({\rm dist}\left({w}_{0}^s,{w}^{*}\right)\right)^2\right] \\
\leq & \alpha (14\zeta\alpha\beta^2 - \sigma) 
\sum_{t=1}^m \mathbb{E}\left[\left({\rm dist}\left({w}_{t-1}^s,{w}^*\right)\right)^2\right]
 + 8 m \zeta\alpha^2  \beta^2 \mathbb{E}\left[\left({\rm dist}\left(\tilde{{w}}^{s-1},{w}^*\right)\right)^2\right].
\end{align*}
A similar discussion as in the proof of Theorem \ref{Thm:LocalConvergence} yields the claimed convergence rates.
\end{proof}

\section{Numerical comparisons}
\label{Sec:NumericalComparison}

This section compares the performance of R-SVRG(+) (with {\bf option II}) with that of the Riemannian extension of SGD, i.e., R-SGD, where the Riemannian stochastic gradient algorithm uses $\gradf_{i_t^s}(w_{t-1}^{s})$ instead of $\xi_t^s$ in (\ref{Eq:R-SVRG-Grad-paralleltrans}).
We also make a comparison with R-SD, which is the Riemannian steepest descent algorithm with backtracking line search \cite[Chapters~4]{Absil_OptAlgMatManifold_2008}.
We consider both \emph{fixed} step size and \emph{decaying} step size sequences.
The decaying step size sequence uses the decay $\alpha_k = \alpha_0(1+ \alpha_0 \lambda \lfloor k/m_s \rfloor)^{-1}$, where $k$ is the number of iterations. We select some values of $\alpha_0$ and consider three values of $\lambda$ ($10^{-1}, 10^{-2}, \text{ and } 10^{-3}$).
In addition, since the global convergence analysis needs a decaying step size condition and the local convergence rate analysis holds for a fixed step size (Sections \ref{Sec:GlobalAnalysis} and \ref{Sec:LocalAnalysis}), we consider a {\it hybrid} step size sequence that follows the decaying step size before the $s_{\mathrm{TH}}$ epoch and subsequently switches to a fixed step size.
All the experiments use $m_s=5N$ by following \cite{Johnson_NIPS_2013_s}.
In all the figures, the $x$-axis represents the computational cost measured by the number of gradient computations divided by $N$.
The algorithms are initialized randomly and are terminated when either the stochastic gradient norm is below $10^{-8}$ or the number of iterations exceeds a predefined threshold.
It should be noted that all the results except those of R-SD are the best-tuned results.
All the simulations were performed in MATLAB on a 2.6 GHz Intel Core i7 machine with 16 GB RAM. 
Hereinafter, this paper addresses three problems on the SPD manifold and the Grassmann manifold.
In all the problems, full gradient methods, e.g., the steepest descent algorithm, become prohibitively computationally expensive when $N$ is extremely large.
The stochastic gradient approach is a promising way to achieve scalability.

\subsection{Problem on the SPD manifold and simulation results}

We first consider the Riemannian centroid problem on the SPD manifold.

{\bf SPD manifold $\mathcal{S}_{++}^d$ and optimization tools.} We designate the space of $d \times d$ SPD matrices as the SPD manifold, $\mathcal{S}_{++}^d$. If we endow $\mathcal{S}_{++}^d$ with the affine-invariant Riemannian metric (AIRM) \cite{Pennec_IJCV_2006} defined by
$\langle \xi_{\scriptsize \mat{X}}, \eta_{\scriptsize \mat{X}} \rangle_{\scriptsize \mat{X}}
=\rm{trace}(\xi_{\scriptsize \mat{X}} \mat{X}^{-1} \eta_{\scriptsize \mat{X}} \mat{X}^{-1})$ for $\xi_{\scriptsize \mat{X}}, \eta_{\scriptsize \mat{X}} \in T_{\scriptsize \mat{X}}\mathcal{S}_{++}^d$ at $\mat{X} \in \mathcal{S}_{++}^d$, the SPD manifold $\mathcal{S}_{++}^d$ forms a Riemannian manifold.
The exponential mapping is written as
\begin{equation*}
{\rm Exp}_{\scriptsize \mat{X}}(\xi_{\scriptsize \mat{X}})= \mat{X}^{1/2} \exp(\mat{X}^{-1/2} \xi_{\scriptsize \mat{X}}\mat{X}^{-1/2}) \mat{X}^{1/2}
\end{equation*}
for any $\xi_{\scriptsize \mat{X}}$ and $\mat{X}$.
The parallel translation of $\xi_{\scriptsize \mat{X}}$ along $\eta_{\scriptsize \mat{X}}$ on $\mathcal{S}_{++}^d$  is given by $P_{\eta_{\tiny \mat{X}}}(\xi_{\scriptsize \mat{X}}) = \mat{X}^{1/2} \mat{Y} \mat{X}^{-1/2}\xi_{\scriptsize \mat{X}}\mat{X}^{-1/2} \mat{Y} \mat{X}^{1/2}$, where $\mat{Y}= \exp(\mat{X}^{-1/2}\eta_{\scriptsize \mat{X}}\mat{X}^{-1/2}/2)$. The logarithm map of $\mat{Y}$ at $\mat{X}$ is described as 
\begin{equation*}
{\rm Log}_{\scriptsize \mat{X}}(\mat{Y}) = \mat{X}^{1/2} \log (\mat{X}^{-1/2}\mat{Y}\mat{X}^{-1/2})\mat{X}^{1/2} = \log(\mat{Y}\mat{X}^{-1})\mat{X}.
\end{equation*}

The exponential mapping and the parallel translation above are computationally expensive. Therefore, an efficient retraction is proposed as \cite{JeurisVV_2012};
\begin{eqnarray}
	\label{Eq:SPD_retraction}
	R_{\scriptsize \mat{X}}\left(\xi_{\scriptsize \mat{X}}\right) = \mat{X}+\xi_{\scriptsize \mat{X}}+\frac{1}{2}\xi_{\scriptsize \mat{X}} \mat{X}^{-1} \xi_{\scriptsize \mat{X}}. 
\end{eqnarray}
This maps $\xi_{\scriptsize \mat{X}}$ onto $\mathcal{S}_{++}^d$ for all $\xi_{\scriptsize \mat{X}} \in T_{\scriptsize \mat{X}}\mathcal{S}_{++}^d$. Huang et al. proposed an efficient isometric vector transport \cite{huang2015riemannian,Yuana_ICCS_2016_s} defined as
\begin{eqnarray}
	\label{Eq:SPD_vectortrans}
	\mathcal{T}_{S_{\eta}}\xi_{\scriptsize \mat{X}} = B_{\scriptsize \mat{Y}} B_{\scriptsize \mat{X}}^{\flat} \xi_{\scriptsize \mat{X}},
\end{eqnarray}	
where $\mat{Y}=R_{\scriptsize \mat{X}}(\xi_{\scriptsize \mat{X}})$ and $a^{\flat}$ denotes the flat of $a \in T_w \mathcal{M}$, i.e., $a^{\flat}\colon T_w \mathcal{M} \to \mathbb{R}\colon v \mapsto \langle a,v \rangle_{w}$. $B_{\scriptsize \mat{X}}$ and $B_{\scriptsize \mat{Y}}$ are the orthonormal bases of $T_{\scriptsize \mat{X}}\mathcal{S}_{++}^d$ and $T_{\scriptsize \mat{Y}}\mathcal{S}_{++}^d$, respectively, where the basis is calculated based on the Cholesky decomposition. Consequently, the implementation of our algorithm for this particular problem uses the retraction~\eqref{Eq:SPD_retraction} and the vector transport (\ref{Eq:SPD_vectortrans}), which satisfy the requirements in the convergence analyses in Sections \ref{Sec:GlobalAnalysis} and \ref{Sec:LocalAnalysis}.

{\bf Riemannian centroid problem.} We first evaluate the proposed algorithm in the Riemannian centroid problem on $\mathcal{S}_{++}^d$, which is frequently used for computer vision problems, such as visual object categorization and pose categorization \cite{Jayasumana_IEEETPAMI_2015_s}. 
Given $N$ points on $\mathcal{S}_{++}^d$ with matrix representations $\mat{X}_1,\mat{X}_2,\dots,\mat{X}_N$, the Riemannian centroid is derived from the solution to the problem
\begin{eqnarray*}
\label{Eq:RiemannianCentroid}
{\displaystyle \min_{{\scriptsize \mat{C} \in \mathcal{S}_{++}^d}}} &{\displaystyle \frac{1}{2N} \sum_{n=1}^N \left({\rm dist}\left(\mat{C}, \mat{X}_n\right)\right)^2},
\end{eqnarray*}
where ${\rm dist}(\mat{A}, \mat{B})=\| \log (\mat{A}^{-1/2}\mat{B}\mat{A}^{-1/2})\|_F$ represents the distance along the corresponding geodesic between the two points $\mat{A}, \mat{B} \in \mathcal{S}_{++}^d$ with respect to the AIRM.
The gradient of the loss function is computed as $\frac{1}{N} \sum_{n=1}^{N} -{\rm log}(\mat{X}_n \mat{C}^{-1})\mat{C}$. 

Figures \ref{fig:RiemannianCentroid_results}(a) and (b) show the results of the optimality gap and the norm of the gradient, respectively, where $N=1000$ and $d=3$. The choices of $\alpha_0$ are $\{10^{-3},2\times 10^{-3}, 4\times 10^{-3},6\times 10^{-3},8\times 10^{-3},10^{-2},2\times 10^{-2},\ldots, 10^{-1}\}$. 
$s_{\mathrm{TH}}$ and the batch size are fixed to $3$ and $1$, respectively. The maximum number of iterations is $10$ for R-SVRG(+) and $60$ for the others. The optimality gap evaluates the performance against the minimum loss, which is obtained by R-SD with high precision in advance. From the figures, R-SVRG(+) outperforms R-SGD in terms of the gradient counts and exhibits much faster convergence than R-SD as expected. 

\begin{figure}[htbp]
\begin{center}
	\begin{minipage}[t]{0.45\textwidth}
	\begin{center}
		\includegraphics[width=\textwidth]{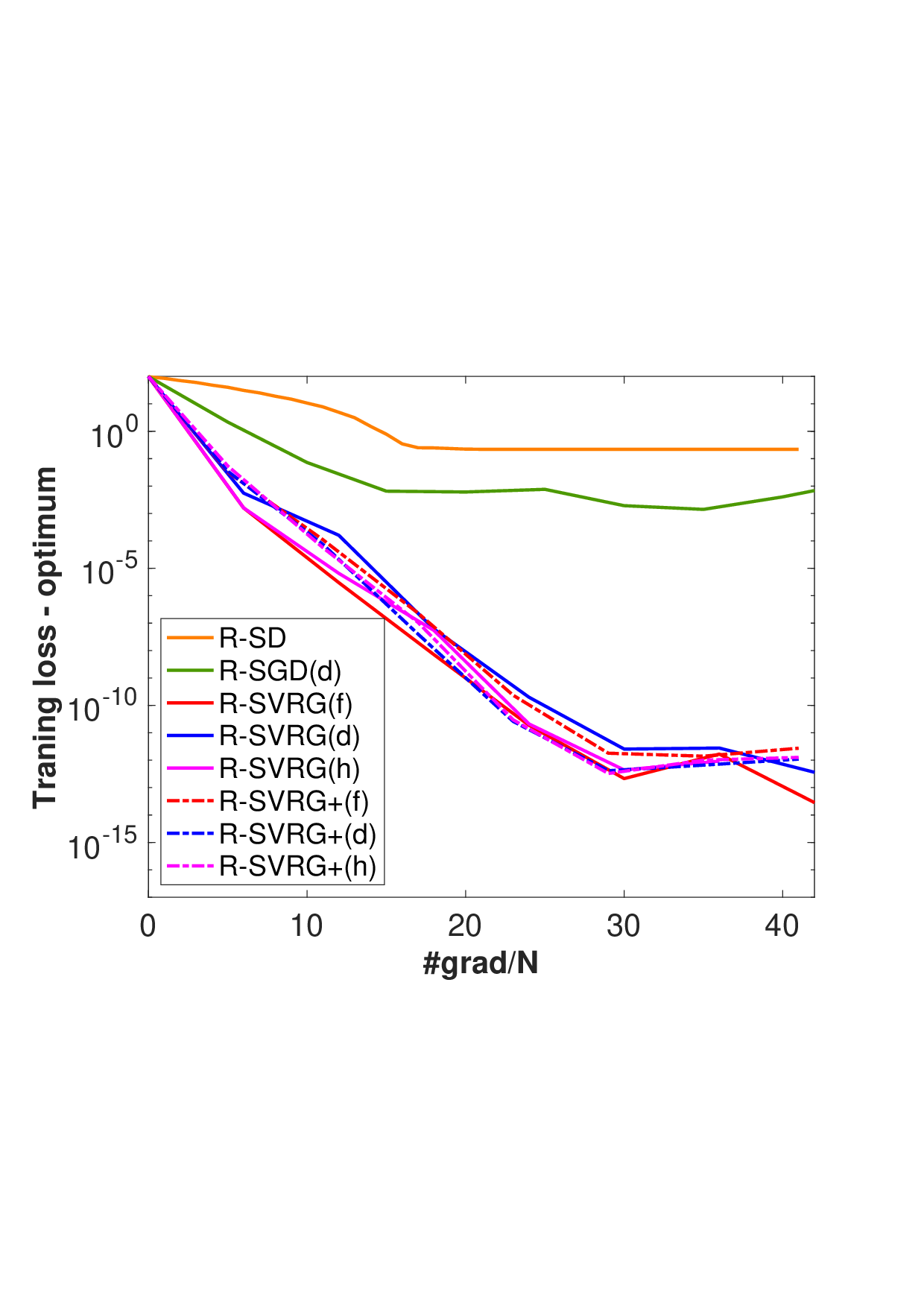}\\
		{\small (a) Optimality gap.}
	\end{center} 
	\end{minipage}
	\hspace*{1cm}
	\begin{minipage}[t]{0.45\textwidth}
	\begin{center}
		\includegraphics[width=\textwidth]{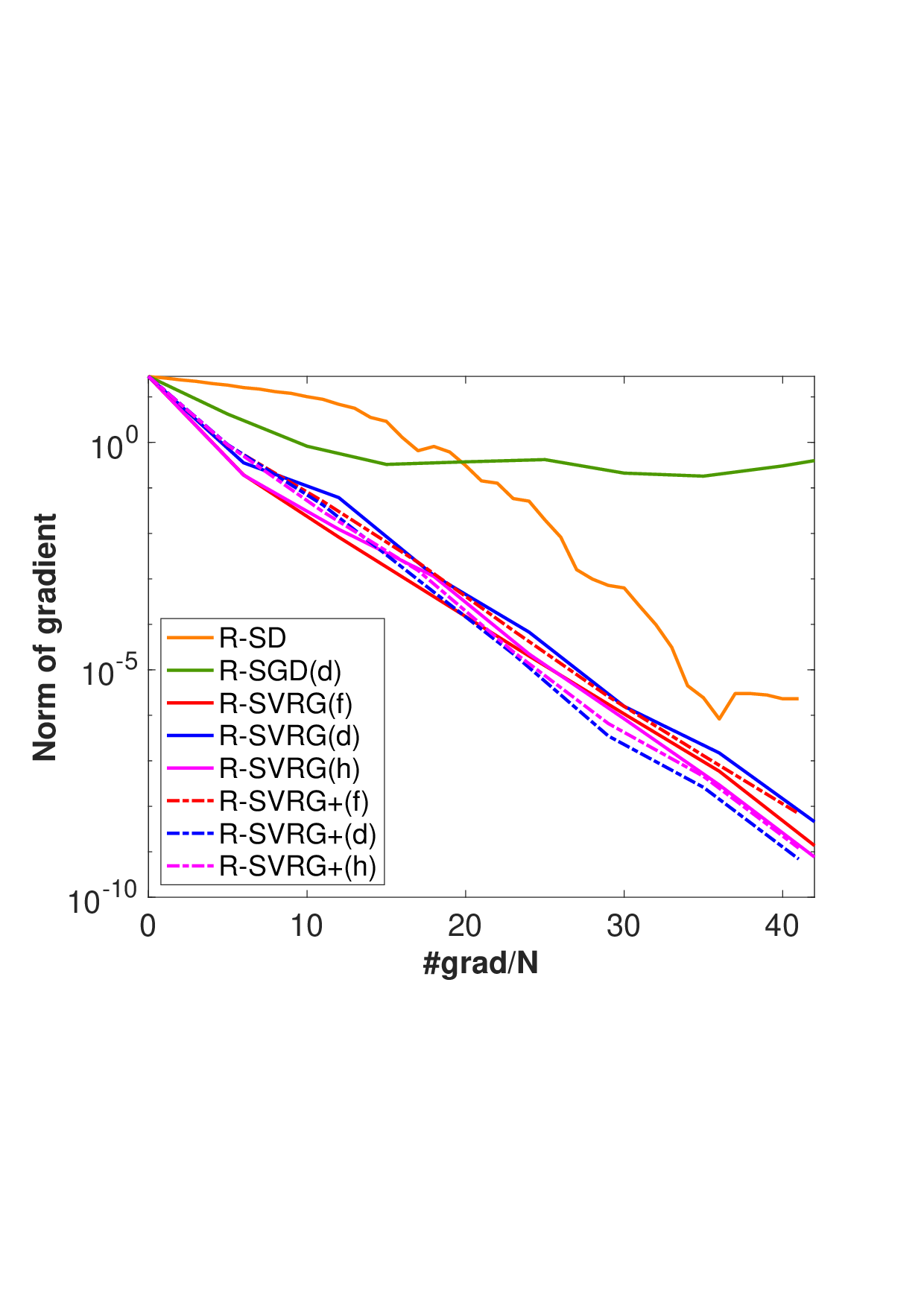}\\
		{\small (b) Norm of gradient.}
	\end{center} 
	\end{minipage}\\
	\vspace*{0.3cm}
\caption{Performance evaluations on Riemannian centroid problem.
In the legends of the figures, (f), (d), and (h) denote fixed, decaying, and hybrid step sizes, respectively.
The parameters are chosen as follows.
R-SGD(d): $\alpha_0= 0.002$, $\lambda= 0.1$; R-SVRG(f): $\alpha_0= 0.008$; R-SVRG(d): $\alpha_0= 0.1$, $\lambda= 0.1$; R-SVRG(h): $\alpha_0=0.01$, $\lambda= 0.001$; R-SVRG+(f): $\alpha_0 = 0.01$; R-SVRG+(d): $\alpha_0= 0.006$, $\lambda= 0.01$; R-SVRG+(h): $\alpha_0 = 0.004$, $\lambda= 0.001$.}
\label{fig:RiemannianCentroid_results}
\end{center}
\end{figure}

\subsection{Problems on the Grassmann manifold and simulation results}
We focus on two popular problems on the Grassmann manifold: PCA and low-rank matrix completion problems.

{\bf Grassmann manifold and optimization tools.}
An element on the Grassmann manifold is represented by a $d \times r$ orthogonal matrix \mat{U} with orthonormal columns, i.e., $\mat{U}^T\mat{U}=\mat{I}$. Two orthogonal matrices represent the same element on the Grassmann manifold if they are related by right multiplication of an $r\times r$ orthogonal matrix $\mat{O} \in \mathcal{O}(r)$, where $\mathcal{O}(r)$ is the orthogonal group of order $r$. Equivalently, an element of the Grassmann manifold is identified with a set of $d \times r$ orthogonal matrices $[\mat{U}]: =\{\mat{U}\mat{O} \mid\mat{O} \in \mathcal{O}(r)\}$.
Thus, ${\rm Gr}(r,d) :={\rm St}(r,d)/ \mathcal{O}(r)$, where ${\rm St}(r,d)$ is the Stiefel manifold, which is the set of matrices of size $d \times r$ with orthonormal columns. The Grassmann manifold has the structure of a Riemannian quotient manifold \cite[Section~3.4]{Absil_OptAlgMatManifold_2008}. 
The exponential mapping for the Grassmann manifold from $\mat{U}(0) := \mat{U} \in {\rm Gr}(r,d)$ in the direction of $\xi \in T_{\scriptsize \mat{U}(0)} {\rm Gr}(r,d)$ is given in a closed form as \cite[Section 5.4]{Absil_OptAlgMatManifold_2008}
\begin{eqnarray*}
\label{Eq:exponential_map}
\mat{U} (t) & = & [\mat{U}(0)  \mat{V}\ \  \mat{W}] 
	\left[
    		\begin{array}{c}
      		\cos t \Sigma  \\
      		\sin t \Sigma \\
    		\end{array}
	\right]
    \mat{V}^T,
\end{eqnarray*}
where $\xi=\mat{W} \Sigma \mat{V}^T$ is the rank-$r$ singular value decomposition of $\xi$. The $\cos(\cdot)$ and $\sin(\cdot)$ operations are only on the diagonal entries. 
The parallel translation of $\zeta \in T_{\scriptsize \mat{U}(0)} {\rm Gr}(r,d)$ on the Grassmann manifold along $\gamma(t)$ with $\dot \gamma(0) = \mat{W} \Sigma \mat{V}^T$ is given in a closed-form by
\begin{eqnarray*}
\label{Eq:parallel_translation}
\zeta(t) & = & \left( [\mat{U}(0) \mat{V}\ \  \mat{W}] 
	\left[
    		\begin{array}{c}
      		-\sin t \Sigma \\
      		\cos t \Sigma \\
    		\end{array}
	\right]
    \mat{W}^T + \left(\mat{I} - \mat{W}\mat{W}^T\right)
    \right) \zeta.
\end{eqnarray*}
The logarithm map of $\mat{U}(t)$ at $\mat{U}(0)$ on the Grassmann manifold is given by
\begin{eqnarray*}
\label{Eq:logarithm_map}
\xi  &=& \log_{\scriptsize \mat{U}(0)}(\mat{U}(t)) \ = \ \mat{W} \arctan(\Sigma) \mat{V}^T,
\end{eqnarray*}	
where the rank-$r$ singular value decomposition of $(\mat{U}(t) - \mat{U}(0) \mat{U}(0)^T  \mat{U}(t))\allowbreak(\mat{U}(0)^T \mat{U}(t))^{-1}$ is $\mat{W}\Sigma \mat{V}^T$. 
It should be noted that this experiment evaluates the projection-based vector transport and the QR-decomposition-based retraction, which do not satisfy the conditions in Sections \ref{Sec:GlobalAnalysis} and \ref{Sec:LocalAnalysis}, but are computationally efficient. The intention here is to show that our algorithm performs well empirically without using the specific vector transport.

{\bf The PCA problem.} Given an orthonormal matrix projector $\mat{U} \in {\rm St}(r,d)$, the PCA problem is to minimize the sum of the squared residual errors between the projected data points and the original data, as
\begin{equation}\label{Eq:PCA}
\begin{array}{ll}
{\displaystyle \min_{{\scriptsize \mat{U} \in {\rm St}(r,d)}}} & {\displaystyle \frac{1}{N}  \sum_{n=1}^N \left\| \vec{x}_n -  \mat{U}\mat{U}^T \vec{x}_n \right\|_2^2},
\end{array}
\end{equation}
where $\vec{x}_n$ is a data vector of size $d\times 1$. Problem \eqref{Eq:PCA} is equivalent to maximizing the function $\frac{1}{N} \sum_{n=1}^N \vec{x}_n^T\mat{U}\mat{U}^T\vec{x}_n$.
Here, the critical points in the space ${\rm St}(r,d)$ are not isolated because the cost function remains unchanged under the group action $\mat{U} \mapsto \mat{UO}$ for any orthogonal matrices $\mat{O}$ of size $r \times r$. Consequently, Problem \eqref{Eq:PCA} is reformulated as an optimization problem on the Grassmann manifold ${\rm Gr}(r,d)$.

Figures \ref{fig:PCA_results}(a) and (b) show the optimality gap and gradient norm, respectively, where $N=10000$, $d=20$, and $r=5$.
The choices of $\alpha_0$ are $\{10^{-3}, 2\times10^{-3}, \ldots, 10^{-2}\}$.
The minimum loss for the optimality gap evaluation is obtained by the MATLAB function {\tt pca}.
$s_{\mathrm{TH}}$ and the batch size are fixed to 5 and 10, respectively. The maximum number of iterations is $16$ for R-SVRG(+) and $100$ for the others.
From Figure \ref{fig:PCA_results}(a), we can observe that between R-SVRG and R-SVRG+, the latter shows superior performance for all the step size sequences.
In Figure \ref{fig:PCA_results}(b), while the gradient norm of SGD remains at higher values, those of R-SVRG and R-SVRG+ converge to lower values in all the cases.

\begin{figure}[htbp]
\begin{center}
	\begin{minipage}[t]{0.45\textwidth}
	\begin{center}
		\includegraphics[width=\textwidth]{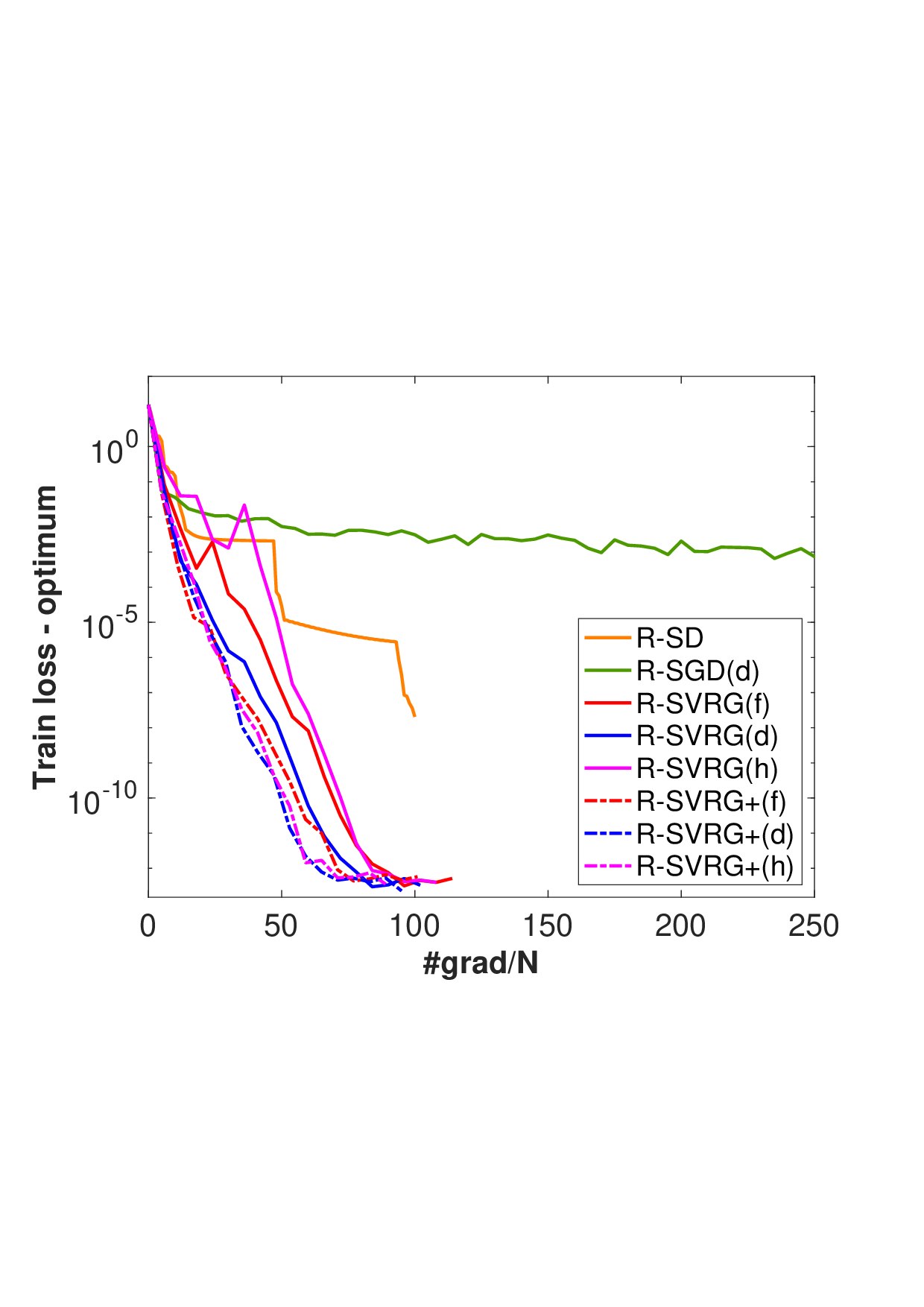}\\
		{\small (a) Optimality gap. }
	\end{center}
	\end{minipage}
	\hspace*{1.0cm}
	\begin{minipage}[t]{0.45\textwidth}
	\begin{center}
		\includegraphics[width=\textwidth]{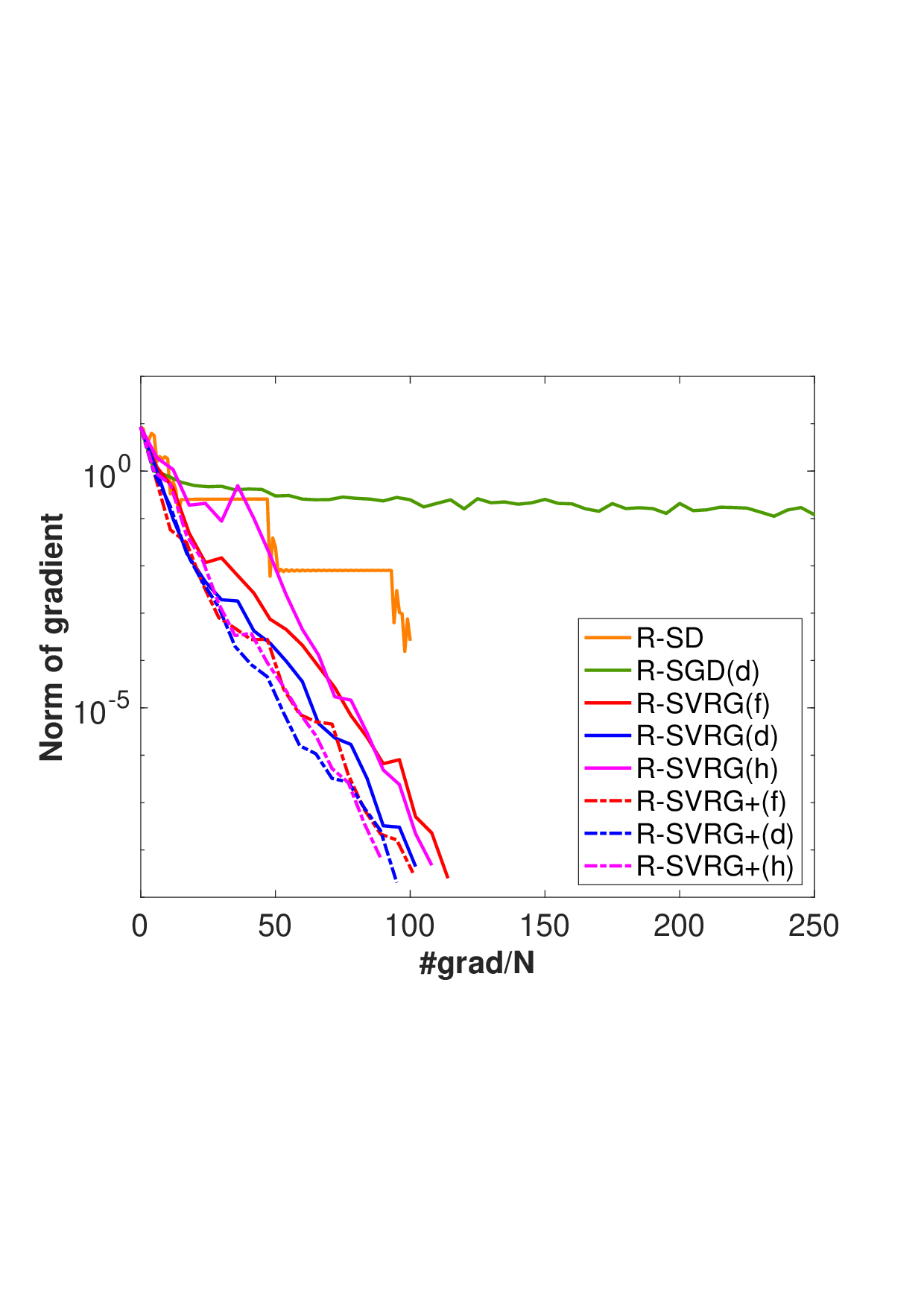}\\
		{\small (b) Norm of gradient.}
	\end{center} 
	\end{minipage}\\
\caption{Performance evaluations on PCA problem.
In the legends of the figures, (f), (d), and (h) denote fixed, decaying, and hybrid step sizes, respectively.
The parameters are chosen as follows.
R-SGD(d): $\alpha_0= 0.009$, $\lambda= 0.1$; R-SVRG(f): $\alpha_0= 0.001$; R-SVRG(d): $\alpha_0= 0.001$, $\lambda= 0.001$; R-SVRG(h): $\alpha_0=0.004$, $\lambda= 0.01$; R-SVRG+(f): $\alpha_0 = 0.001$; R-SVRG+(d): $\alpha_0= 0.002$, $\lambda= 0.01$; R-SVRG+(h): $\alpha_0 = 0.002$, $\lambda= 0.01$.}
\label{fig:PCA_results}
\end{center}		
\end{figure}

{\bf Low-rank matrix completion.} The matrix completion problem amounts to completing an incomplete matrix $\mat{X}$, say of size $d \times N$, from a small number of entries by assuming a low-rank model for the matrix. If $\Omega$ is the set of indices for which we know the entries in $\mat{X}$, the rank-$r$ matrix completion problem amounts to solving the problem
\begin{equation}\label{Eq:MC_batch}
\begin{array}{ll}
{\displaystyle \min_{{\scriptsize \mat{U}} \in \mathbb{R}^{d \times r},\ {\scriptsize \mat{A}} \in \mathbb{R}^{r \times N}}} \ \left\|\mathcal{P}_{\Omega}(\mat{UA}) - \mathcal{P}_{\Omega}(\mat X) \right\|_F^2,
\end{array}
\end{equation}
where the operator $\mathcal{P}_{\Omega}$ acts as $\mathcal{P}_{\Omega}(\mat{X}_{ij})=\mat{X}_{ij}$ if $(i,j) \in \Omega$ and $\mathcal{P}_{\Omega}(\mat{X}_{ij})=0$ otherwise.
This is called the orthogonal sampling operator and is a mathematically convenient way to represent the subset of known entries. Partitioning $\mat{X} = [\vec{x}_1, \vec{x}_2, \ldots, \vec{x}_n] $, Problem (\ref{Eq:MC_batch}) is equivalent to the problem
\begin{equation}\label{Eq:MC}
\begin{array}{lll}
{\displaystyle \min_{{\scriptsize \mat{U}} \in \mathbb{R}^{d \times r},\ \vec{a}_n \in \mathbb{R}^{r}}} 
\ 
{\displaystyle \frac{1}{N} \sum_{n=1}^N \left\|  \mathcal{P}_{\Omega_n}\left(\mat{U} \vec{a}_n\right) - \mathcal{P}_{\Omega_n}\left(\vec{x}_n\right) \right\|_2^2,
}
\end{array}
\end{equation}
where $\vec{x}_n \in \mathbb{R}^d$ and the operator $\mathcal{P}_{\Omega_n}$ is the sampling operator for the $n$-th column. Given \mat{U}, $\vec{a}_n$ in (\ref{Eq:MC}) admits a closed form solution. Consequently, Problem (\ref{Eq:MC}) depends only on the column space of $\mat{U}$ and is on the Grassmann manifold \cite{Balzano_arXiv_2010_s}.

The proposed algorithms are also compared with Grouse \cite{Balzano_arXiv_2010_s}, a state-of-the-art stochastic descent algorithm on the Grassmann manifold. We first consider a synthetic dataset with $N=5000$ and $d=500$ with rank $r=5$.
The algorithms are initialized randomly as suggested in \cite{Kressner_BIT_2014_s}.
The ten choices of $\alpha_0$ are $\{10^{-3},2\times10^{-3}, \ldots, 10^{-2}\}$ for R-SGD and R-SVRG(+), and $\{0.1,0.2, \ldots, 1.0\}$ for Grouse. This instance considers the loss on a test set $\Gamma$, which differs from training set $\Omega$. We also impose an exponential decay of the singular values. The condition number (CN) of a matrix is the ratio of the largest to the smallest singular values of the matrix. The over-sampling ratio (OS) determines the number of entries that are known. This instance uses CN=$5$ and OS=$5$. An OS of $5$ implies that $5(N+d-r)r$ randomly and uniformly selected entries are known a priori among the total $Nd$ entries.
Figure \ref{fig:MC_results}(a) shows the results of loss on the test set $\Gamma$.
These results show the superior performance of our proposed algorithms.

Next, we consider Jester dataset 1 \cite{Goldberg_IR_2001_s}, consisting of ratings of $100$ jokes by $24983$ users. Each rating is a real number from $-10$ to $10$. We randomly extract two ratings per user as the training set $\Omega$ and test set $\Gamma$. The algorithms are run by fixing the rank to $r=5$ with random initialization. $\alpha_0$ is chosen from $\{10^{-6},2\times10^{-6}, \ldots, 10^{-5}\}$ for SGD and SVRG(+) and $\{10^{-3},2\times10^{-3}, \ldots, 10^{-2}\}$ for Grouse.
Figure \ref{fig:MC_results}(b) shows the superior performance of R-SVRG(+) on the test set of the Jester dataset.

As a final test, we compare the algorithms on the MovieLens-1M dataset downloaded from \url{http://grouplens.org/datasets/movielens/}. The dataset has a million ratings corresponding to $6040$ users and $3952$ movies. $\alpha_0$ is chosen from $\{10^{-5},2\times10^{-5}, \ldots, 10^{-4}\}$.
Figure \ref{fig:MC_results}(c) shows the results on the test set for all the algorithms except Grouse, which faces issues with convergence on this data set. R-SVRG(+) shows much faster convergence than the others, and R-SVRG is better than R-SVRG+ in terms of the final test loss for all step size algorithms.

\begin{figure}[htbp]
\begin{center}
	\begin{minipage}[t]{0.45\textwidth}
	\begin{center}
		\includegraphics[width=\textwidth]{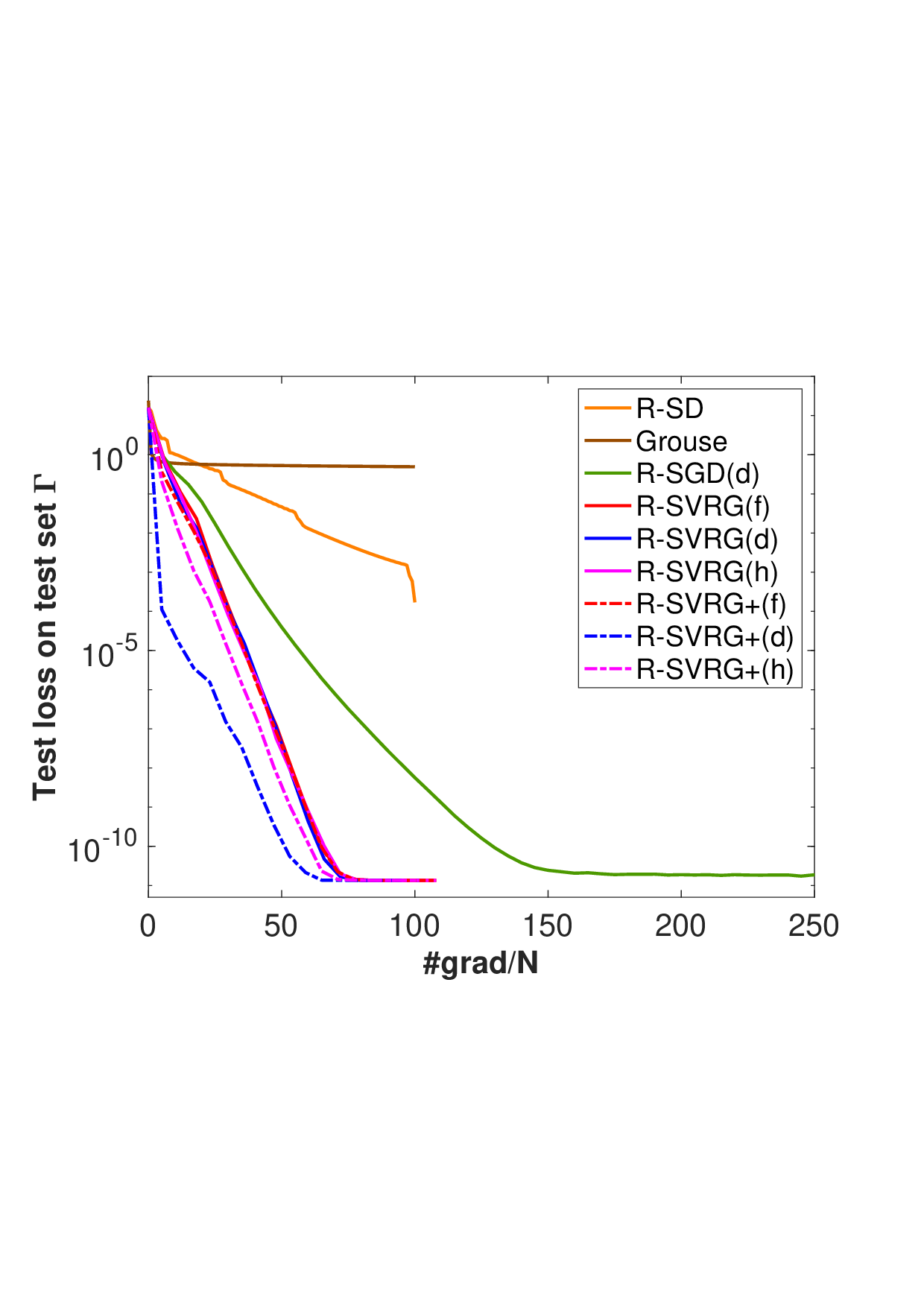}\\
		{\small (a) Test loss (synthetic).}
	\end{center} 
	\end{minipage}
	\hspace*{1.0cm}
	\begin{minipage}[t]{0.45\textwidth}
	\begin{center}
		\includegraphics[width=\textwidth]   
		{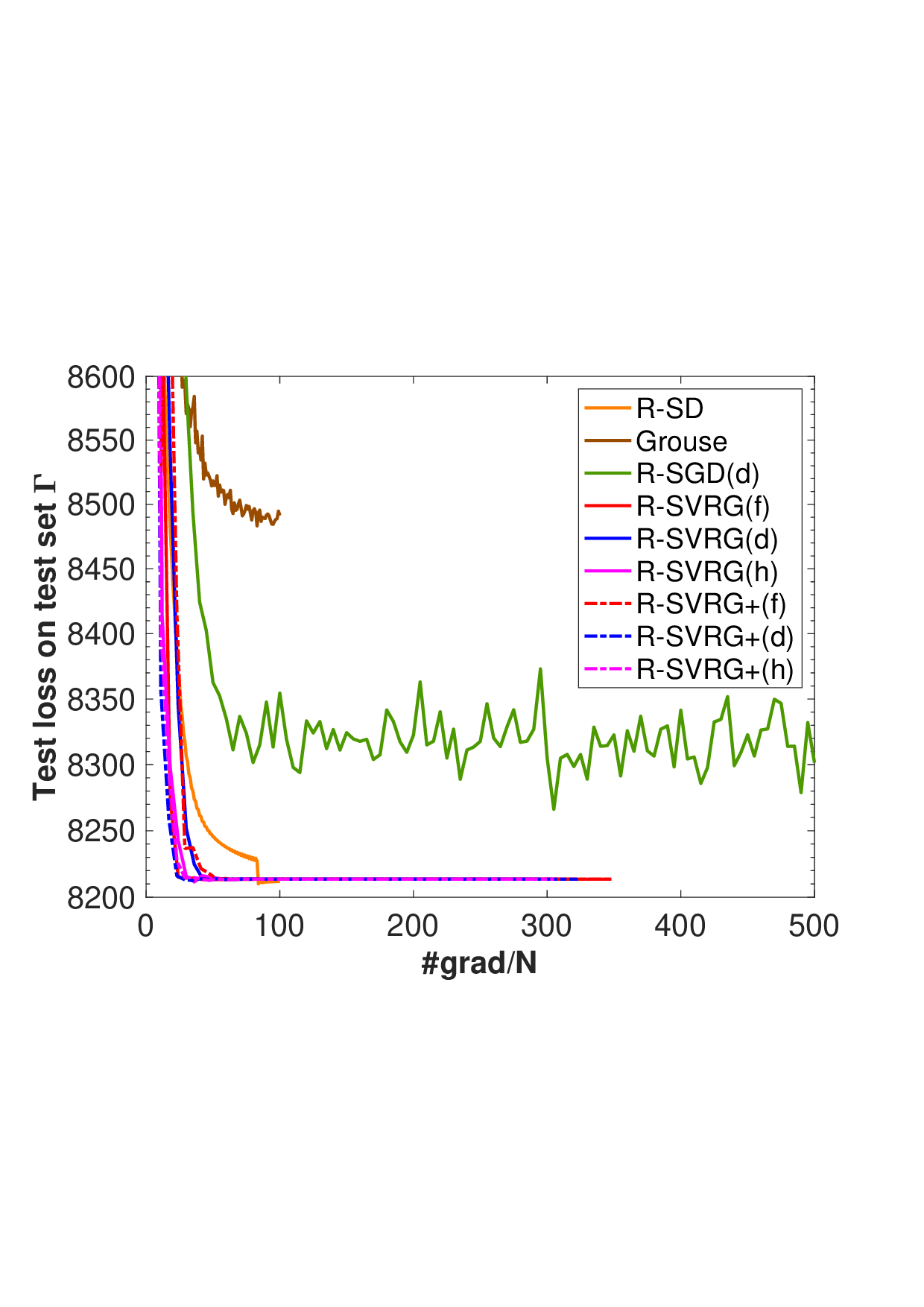}\\
		{\small (b) Test loss (Jester).}
	\end{center} 
	\end{minipage}\\
	\vspace*{0.5cm}
	
	\begin{minipage}[t]{0.45\textwidth}
	\begin{center}
		\includegraphics[width=\textwidth]{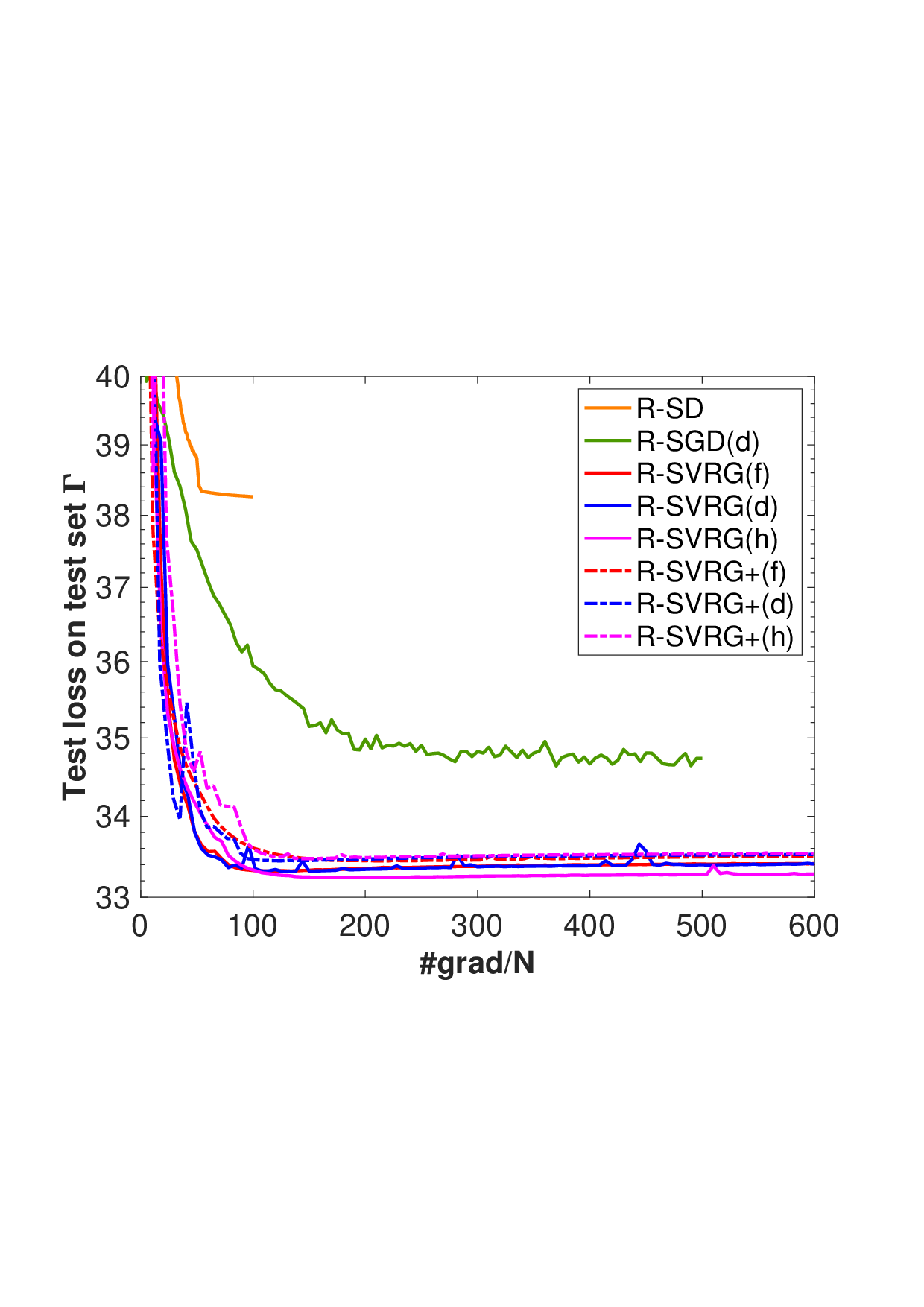}\\
		{\small (c) Test loss (MovieLens-1M).}
	\end{center} 
	\end{minipage}	
\caption{Performance evaluations on low-rank matrix completion problem.
In the legends of the figures, (f), (d), and (h) denote fixed, decaying, and hybrid step sizes, respectively.
The parameters are chosen as follows.
Grouse: (a) $\alpha_0 = 1$; (b) $\alpha_0 = 0.001$;
R-SGD(d): (a) $\alpha_0= 0.001$, $\lambda= 0.01$; (b) $\alpha_0= 10^{-6}$, $\lambda= 0.1$; (c) $\alpha_0= 10^{-5}$, $\lambda= 0.001$;
R-SVRG(f): (a) $\alpha_0= 0.002$; (b) $\alpha_0= 5 \times 10^{-6}$; (c) $\alpha_0= 5 \times 10^{-5}$;
R-SVRG(d): (a) $\alpha_0= 0.004$, $\lambda= 0.01$; (b) $\alpha_0= 7 \times 10^{-6}$, $\lambda= 0.001$; (c) $\alpha_0= 4 \times 10^{-5}$, $\lambda= 0.001$;
R-SVRG(h): (a) $\alpha_0=0.003$, $\lambda= 0.01$; (b) $\alpha_0= 6 \times 10^{-6}$, $\lambda= 0.01$; (c) $\alpha_0= 4 \times 10^{-5}$, $\lambda= 0.01$;
R-SVRG+(f): (a) $\alpha_0 = 0.002$; (b) $\alpha_0 = 6 \times 10^{-6}$; (c) $\alpha_0 = 3 \times 10^{-5}$;
R-SVRG+(d): (a) $\alpha_0= 0.01$, $\lambda= 0.01$; (b) $\alpha_0= 6 \times 10^{-6}$, $\lambda= 0.01$; (c) $\alpha_0= 5 \times 10^{-5}$, $\lambda= 0.001$;
R-SVRG+(h): (a) $\alpha_0 = 0.003$, $\lambda= 0.01$; (b) $\alpha_0= 7 \times 10^{-6}$, $\lambda= 0.001$; (c) $\alpha_0= 5 \times 10^{-5}$, $\lambda= 0.1$.
}
\label{fig:MC_results}
\end{center}	
\end{figure}

\section{Conclusion}
We proposed a Riemannian stochastic variance reduced gradient (R-SVRG) algorithm with retraction and vector transport, which includes the algorithm with exponential mapping and parallel translation as a special case. The proposed algorithm stems from the variance reduced gradient algorithm in Euclidean space, but it has been extended herein to Riemannian manifolds.
The main challenges of averaging, adding, and subtracting multiple gradients on a Riemannian manifold were handled with a vector transport. We proved that R-SVRG generates globally convergent sequences with a decaying step size and is locally linearly convergent with a fixed step size under some natural assumptions. Numerical comparisons of problems on the SPD manifold and the Grassmann manifold indicated the superior performance of R-SVRG on various benchmarks.

\appendix
\section{Proofs of lemmas}
\label{Appendix}
In this section, we present complete proofs of Lemmas \ref{AppenLem:KarcherMeanDistance}--\ref{AppenLem:UpperBoundVariance}.

\begin{proof}[Proof of Lemma \ref{AppenLem:KarcherMeanDistance}]
From the triangle inequality and $(a + b)^2 \leq 2 a^2 + 2 b^2$ for real numbers $a$ and $b$, we have for $i=1,2,\ldots,m$,
\begin{equation*}
\left({\rm dist}(p,w)\right)^2 \le \left({\rm dist}\left(p,w_i\right)+{\rm dist}\left(w_i,w\right)\right)^2 \le 2\left({\rm dist}\left(p,w_i\right)\right)^2 + 2\left({\rm dist}\left(w_i,w\right)\right)^2.
\end{equation*}
Since $w$ is the Riemannian centroid of $w_1,w_2,\ldots,w_m$, it holds that
\begin{equation*}
\sum_{i=1}^{m} \left({\rm dist}\left(w,w_i\right)\right)^2 \le \sum_{i=1}^m \left({\rm dist}\left(p,w_i\right)\right)^2.
\end{equation*}
It then follows that
\begin{equation*}
m \left({\rm dist}(p,w)\right)^2 \le 2\sum_{i=1}^m \left({\rm dist}\left(p,w_i\right)\right)^2 + 2\sum_{i=1}^m \left({\rm dist}\left(w_i,w\right)\right)^2 \le 4 \sum_{i=1}^m \left({\rm dist}\left(p,w_i\right)\right)^2.
\end{equation*}
This completes the proof.
\end{proof}

\begin{proof}[Proof of Lemma \ref{Lem_xi_norm}]
We first show that $\|\gradf_n\|$ for any $n \in \{1,2,\dots, N\}$ is upper-bounded in $\Omega$ when $\Omega$ is sufficiently small.
Since $\Omega$ is sufficiently small, it is contained in a set $U \subset \mathcal{M}$ diffeomorphic to an open set $U' \subset \mathbb{R}^{\dim \mathcal{M}}$ by a chart $\phi\colon U \to U'$.
Consider a sufficiently small closed ball $B$ in $U'$ centered at $\phi(w^*)$ such that $\phi^{-1}(B) \subset U$.
Then, $w^*$ is in $\phi^{-1}(B)$.
Note that $B$ is compact and $\phi$ is a diffeomorphism.
Hence, $\phi^{-1}(B)$ is also compact.
Replacing $\Omega$ with this $\phi^{-1}(B)$ if necessary, we can assume that $\Omega$ is compact.
Therefore, $\|\gradf_n\|$ is upper bounded in $\Omega$.
In this proof, let $C$ denote a constant such that $\|\gradf_n(z)\|_z \le C$ for all $n \in \{1,2,\dots,N\}$ and $z \in \Omega$.

Let $\tilde{\eta}_{t-1}^s$ satisfy $R_{\tilde{w}^{s-1}}(\tilde{\eta}_{t-1}^s) = w_{t-1}^s$.
The definition of $\xi_t^s$, Eq.~\eqref{Eq:R-SVRG-Grad-paralleltrans}, and the triangle inequality yield
\begin{align*}
&\left\|\xi_t^s\right\|_{w_{t-1}^s} \\
= & \left\| \gradf_{i_t^s}\left(w_{t-1}^s\right) - \mathcal{T}_{\tilde{\eta}_{t-1}^s}\left(\gradf_{i_t^s}\left(\tilde{w}^{s-1}\right)-\gradf\left(\tilde{w}^{s-1}\right)\right)\right\|_{w_{t-1}^s}\\
\le & \left\| \gradf_{i_t^s}\left(w_{t-1}^s\right) - \mathcal{T}_{\tilde{\eta}_{t-1}^s}\left(\gradf_{i_t^s}\left(\tilde{w}^{s-1}\right)\right) \right\|_{w_{t-1}^s} + \left\| \mathcal{T}_{\tilde{\eta}_{t-1}^s}\left(\gradf\left(\tilde{w}^{s-1}\right)\right)\right\|_{w_{t-1}^s} \\
= & \left\| \gradf_{i_t^s}\left(w_{t-1}^s\right) - P^{w_{t-1}^s \leftarrow \tilde{w}^{s-1}}\left(\gradf_{i_t^s}\left(\tilde{w}^{s-1}\right)\right) \right\|_{w_{t-1}^s}\\
& + \left\|P^{w_{t-1}^s \leftarrow \tilde{w}^{s-1}}\left(\gradf_{i_t^s}\left(\tilde{w}^{s-1}\right)\right) - \mathcal{T}_{\tilde{\eta}_{t-1}^s}\left(\gradf_{i_t^s}\left(\tilde{w}^{s-1}\right)\right)\right\|_{w_{t-1}^s} \\
& + \left\| \gradf\left(\tilde{w}^{s-1}\right)\right\|_{\tilde{w}^{s-1}},
\end{align*}
where $\gamma_{t-1}^s(\tau) = R_{\tilde{w}^{s-1}}(\tau\tilde{\eta}_{t-1}^s)$ and $P$ is the parallel translation operator along curve $\gamma_{t-1}^s$.
The three terms on the right-hand side above are bounded as follows.
For the first term, it follows from Lemma \ref{Lem:pseudo_Lipschitz} that
\begin{equation*}
\left\| \gradf_{i_t^s}\left(w_{t-1}^s\right) - P^{w_{t-1}^s \leftarrow \tilde{w}^{s-1}}\left(\gradf_{i_t^s}\left(\tilde{w}^{s-1}\right)\right) \right\|_{w_{t-1}^s} \le \beta {\rm dist}\left(w_{t-1}^s, \tilde{w}^{s-1}\right) \le 2\beta r,
\end{equation*}
where $\beta$ is in the lemma. The second term can be evaluated from Lemmas \ref{Lem:T_Lipschitz} and \ref{lemma:retraction_dist} as
\begin{align*}
& \left\|P^{w_{t-1}^s \leftarrow \tilde{w}^{s-1}}\left(\gradf_{i_t^s}\left(\tilde{w}^{s-1}\right)\right) - \mathcal{T}_{\tilde{\eta}_{t-1}^s}\left(\gradf_{i_t^s}\left(\tilde{w}^{s-1}\right)\right)\right\|_{w_{t-1}^s} \\
\le & \theta\left\|\gradf_{i_t^s}\left(\tilde{w}^{s-1}\right)\right\|_{\tilde{w}^{s-1}} \left\| \tilde{\eta}_{t-1}^s\right\|_{\tilde{w}^{s-1}}
\le \theta C\mu{\rm dist}\left(w_{t-1}^s, \tilde{w}^{s-1}\right) \le 2\theta C\mu r,
\end{align*}
where $\theta$ and $\mu$ are in the lemmas. From Lemma \ref{Lem:pseudo_Lipschitz}, we have
\begin{align*}
\left\| \gradf\left(\tilde{w}^{s-1}\right)\right\|_{\tilde{w}^{s-1}}
= &\left\| \gradf\left(\tilde{w}^{s-1}\right) - P_{*}^{\tilde{w}^{s-1} \leftarrow w^{*}}\left(\gradf\left({w}^{*}\right)\right) \right\|_{\tilde{w}^{s-1}}\\
 \le & \beta {\rm dist}\left(\tilde{w}^{s-1}, w^{*}\right) \le \beta r
\end{align*}
for the third term, where $P_*$ is the parallel translation along curve $\tilde{\gamma}_*^{s-1}$ defined by $\tilde{\gamma}_*^{s-1}(\tau) = R_{w^*}(\tau \tilde{\eta}_*^{s-1})$ with $\tilde{\eta}_*^{s-1}$ satisfying $R_{w^*}(\tilde{\eta}_*^{s-1}) = \tilde{w}^{s-1}$.
Therefore, we have
\begin{equation*}
\left\|\xi_t^s\right\|_{w_{t-1}^s} \le r(3\beta+2\theta C\mu) < \varepsilon
\end{equation*}
if we choose a sufficiently small $r$ such that $r<\varepsilon/(3\beta+2\theta C\mu)$.
\end{proof}

\begin{proof}[Proof of Lemma \ref{AppenLem:UpperBoundVariance}]
Let $\eta_{t-1}^{*s} \in T_{w^*} \mathcal{M}$ and $\tilde{\eta}_{t-1}^s \in T_{\tilde{w}^{s-1}} \mathcal{M}$ satisfy $R_{w^*}(\eta_{t-1}^{*s}) = w_{t-1}^s$ and $R_{\tilde{w}^{s-1}}(\tilde{\eta}_{t-1}^s)=w_{t-1}^s$, respectively.
Let $P^{w \leftarrow z}$ be a parallel translation along the curve $R_z(\tau\eta)$, where $R_z(\eta)=w$.
By Lemmas \ref{Lem:pseudo_Lipschitz} and \ref{Lem:T_Lipschitz},
the upper bound of $\mathbb{E}_{i_t^s}[\| \xi_t^s \|_{w_{t-1}^s}^2]$ in terms of the distance of $w_{t-1}^s$ and $\tilde{w}^{s-1}$ from $w^*$ is computed as
\allowdisplaybreaks[1]
\begin{align*}
& \mathbb{E}_{i_t^s}\left[\left\| \xi_t^s \right\|_{w_{t-1}^s}^2\right] \nonumber\\
=&\mathbb{E}_{i_t^s}\left[\Bigl\| \left( \gradf_{i_t^s}\left(w_{t-1}^{s}\right) - \mathcal{T}_{\eta_{t-1}^{*s}}\left(\gradf_{i_t^s}\left(w^{*}\right)\right) \right)  \right. \nonumber\\
& \left. + \left( \mathcal{T}_{\eta_{t-1}^{*s}}\left(\gradf_{i_t^s}\left(w^{*}\right)\right) -  \mathcal{T}_{\tilde{\eta}_{t-1}^s} \left(\gradf_{i_t^s}\left(\tilde{w}^{s-1}\right) \right) + \mathcal{T}_{\tilde{\eta}_{t-1}^s} \left(\gradf\left(\tilde{w}^{s-1}\right) \right)\right) \Bigr\|_{w_{t-1}^s}^2 \right] \nonumber\\
\leq & 2\mathbb{E}_{i_t^s}\left[ \left\| \gradf_{i_t^s}\left(w_{t-1}^{s}\right) - \mathcal{T}_{\eta_{t-1}^{*s}}\left(\gradf_{i_t^s}\left(w^{*}\right)\right) \right\|_{w_{t-1}^s}^2 \right]  \nonumber\\
& + 2\mathbb{E}_{i_t^s}\left[ \left\| \mathcal{T}_{\tilde{\eta}_{t-1}^s} \left(\gradf_{i_t^s}\left(\tilde{w}^{s-1}\right) \right) - \mathcal{T}_{\eta_{t-1}^{*s}}\left(\gradf_{i_t^s}\left(w^{*}\right)\right) - \mathcal{T}_{\tilde{\eta}_{t-1}^s} \left(\gradf\left(\tilde{w}^{s-1}\right) \right) \right\|_{w_{t-1}^s}^2 \right]  \nonumber\\
= & 2\mathbb{E}_{i_t^s}\left[ \left\| \gradf_{i_t^s}\left(w_{t-1}^{s}\right) - \mathcal{T}_{\eta_{t-1}^{*s}}\left(\gradf_{i_t^s}\left(w^{*}\right)\right) \right\|_{w_{t-1}^s}^2 \right]  \nonumber\\
&+ 2\mathbb{E}_{i_t^s}\left[ \left\| \mathcal{T}_{\tilde{\eta}_{t-1}^s} \left(\gradf_{i_t^s}\left(\tilde{w}^{s-1}\right) \right) - \mathcal{T}_{\eta_{t-1}^{*s}}\left(\gradf_{i_t^s}\left(w^{*}\right)\right) \right\|_{w_{t-1}^s}^2 \right]  \nonumber\\
& - 4 \left\langle \mathcal{T}_{\tilde{\eta}_{t-1}^s} \left(\gradf\left(\tilde{w}^{s-1}\right) \right), \mathcal{T}_{\tilde{\eta}_{t-1}^s} \left(\gradf\left(\tilde{w}^{s-1}\right) \right) - \mathcal{T}_{\eta_{t-1}^{*s}}\left(\gradf\left(w^{*}\right)\right)\right\rangle_{w_{t-1}^s}  \nonumber\\
&+ 2 \| \gradf(\tilde{w}^{s-1}) \|_{\tilde{w}^{s-1}}^2 \nonumber\\
= & 2\mathbb{E}_{i_t^s}\biggl[ \left\| \gradf_{i_t^s}\left(w_{t-1}^{s}\right) - P^{w_{t-1}^{s} \leftarrow w^{*}}\left(\gradf_{i_t^s}\left(w^{*}\right)\right)\right.\nonumber\\
& \left.\qquad \qquad \qquad \qquad \qquad + P^{w_{t-1}^{s} \leftarrow w^{*}}\left(\gradf_{i_t^s}\left(w^{*}\right)\right) - \mathcal{T}_{\eta_{t-1}^{*s}}\left(\gradf_{i_t^s}\left(w^{*}\right)\right) \right\|_{w_{t-1}^s}^2 \biggr]  \nonumber\\
&+ 2\mathbb{E}_{i_t^s}\biggl[ \left\| \mathcal{T}_{\tilde{\eta}_{t-1}^s} \left(\gradf_{i_t^s}\left(\tilde{w}^{s-1}\right) \right) - \gradf_{i_t^s}\left(w_{t-1}^{s}\right) \right. \nonumber\\
& \left. \qquad \qquad \qquad \qquad \qquad \qquad \qquad + \gradf_{i_t^s}\left(w_{t-1}^{s}\right)- \mathcal{T}_{\eta_{t-1}^{*s}}\left(\gradf_{i_t^s}\left(w^{*}\right)\right) \right\|_{w_{t-1}^s}^2 \biggr]  \nonumber\\
& - 2 \left\| \gradf\left(\tilde{w}^{s-1}\right) \right\|_{w^{s-1}}^2 \nonumber\\
\leq & 4\mathbb{E}_{i_t^s}\left[ \left\| \gradf_{i_t^s}\left(w_{t-1}^{s}\right) - P^{w_{t-1}^{s} \leftarrow w^{*}}\left(\gradf_{i_t^s}\left(w^{*}\right)\right)  \right\|_{w_{t-1}^s}^2 \right] \nonumber\\
&+ 4\mathbb{E}_{i_t^s}\left[ \left\| P^{w_{t-1}^{s} \leftarrow w^{*}}\left(\gradf_{i_t^s}\left(w^{*}\right)\right) - \mathcal{T}_{\eta_{t-1}^{*s}}\left(\gradf_{i_t^s}\left(w^{*}\right)\right) \right\|_{w_{t-1}^s}^2 \right]  \nonumber\\
&+ 4\mathbb{E}_{i_t^s}\left[ \left\| \mathcal{T}_{\tilde{\eta}_{t-1}^s} \left(\gradf_{i_t^s}\left(\tilde{w}^{s-1}\right) \right) - \gradf_{i_t^s}\left(w_{t-1}^{s}\right)\right\|_{w_{t-1}^s}^2 \right]
\nonumber\\
&
 + 4\mathbb{E}_{i_t^s}\left[ \left\| \gradf_{i_t^s}\left(w_{t-1}^{s}\right) - \mathcal{T}_{\eta_{t-1}^{*s}}\left(\gradf_{i_t^s} \left(w^{*}\right)\right) \right\|_{w_{t-1}^s}^2 \right] \nonumber\\
\le & 
4\beta^2\left({\rm dist}\left(w^{s}_{t-1}, w^*\right)\right)^2 + 4\theta^2\left\| \eta_{t-1}^{*s}\right\|_{w^*}^2\mathbb{E}_{i_t^s}\left[\left\|\gradf_{i_t^s}\left(w^{*}\right)\right\|_{w^*}^2\right]\nonumber\\
&
+ 4\mathbb{E}_{i_t^s}\biggl[ \left\| \mathcal{T}_{\tilde{\eta}_{t-1}^s} \left(\gradf_{i_t^s}\left(\tilde{w}^{s-1}\right) \right) - P^{w_{t-1}^s \leftarrow \tilde{w}^{s-1}}\left(\gradf_{i_t^s}\left(\tilde{w}^{s-1}\right)\right) \right.\nonumber\\
&  \qquad \qquad \qquad \qquad \qquad  \left.+ P^{w_{t-1}^s \leftarrow \tilde{w}^{s-1}}\left(\gradf_{i_t^s}\left(\tilde{w}^{s-1}\right)\right) -\gradf_{i_t^s}\left(w_{t-1}^{s}\right)\right\|_{w_{t-1}^s}^2 \biggr]
\nonumber\\
&
 + 2\left(4\beta^2\left({\rm dist}\left(w^{s}_{t-1}, w^*\right)\right)^2 + 4\theta^2\left\| \eta_{t-1}^{*s}\right\|_{w^*}^2\mathbb{E}_{i_t^s}\left[\|\gradf_{i_t^s}\left(w^{*}\right)\right\|_{w^*}^2\right]) \nonumber
\\
\leq & 12\left(\beta^2\left({\rm dist}\left(w^{s}_{t-1}, w^*\right)\right)^2+C^2\theta^2\left\| \eta^{*s}_{t-1}\right\|_{w^*}^2\right) \nonumber\\
& + 8\theta^2\mathbb{E}_{i_t^s}\left[\left\|\tilde{\eta}_{t-1}^s\right\|_{\tilde{w}^{s-1}}^2\left\|\gradf_{i_t^s}\left(\tilde{w}^{s-1}\right)\right\|_{\tilde{w}^{s-1}}^2\right] + 8\beta^2\left({\rm dist}\left(\tilde{w}^{s-1}, w_{t-1}^s\right)\right)^2\nonumber\\
\le &
4\left(\beta^2+\mu^2C^2\theta^2\right)\left(3\left({\rm dist}\left(w_{t-1}^s,w^*\right)\right)^2 + 2\left({\rm dist}\left(\tilde{w}^{s-1},w_{t-1}^s\right)\right)^2\right)
 \nonumber\\
\le & 
4\left(\beta^2+\mu^2C^2\theta^2\right)\left(3\left({\rm dist}\left(w_{t-1}^s,w^*\right)\right)^2 + 2\left({\rm dist}\left(\tilde{w}^{s-1},w^*\right)+{\rm dist}\left(w^*,w_{t-1}^s\right)\right)^2\right)
 \nonumber\\
\le & 
4\left(\beta^2+\mu^2C^2\theta^2\right)\left(7\left({\rm dist}\left(w_{t-1}^s,w^*\right)\right)^2 + 4\left({\rm dist}\left(\tilde{w}^{s-1},w^*\right)\right)^2\right),
\end{align*}
where the relation $\| a + b\|^2 \leq 2 \|a\|^2 + 2 \|b\|^2$ for vectors $a$ and $b$ in a norm space and the triangle inequality are used repeatedly.
Note also that $\mathbb{E}_{i_t^s}[\gradf_{i_t^s}(\tilde{w}^{s-1})]=\gradf(\tilde{w}^{s-1})$ and $\gradf(w^*)=0$ and that $\mathbb{E}_{i_t^s}$ is a linear operator.
Furthermore, we have evaluated the value $\mathbb{E}_{i_t^s}[\|\gradf_{i_t^s}(w_{t-1}^s) - \mathcal{T}_{\eta_{t-1}^{*s}}(\gradf_{i_t^s}(w^*))\|_{w_{t-1}^s}^2$ and again used the obtained relation in the third inequality.
\end{proof}

\end{document}